\documentclass{article}

\PassOptionsToPackage{numbers}{natbib}

\usepackage[final]{neurips_2021}




\usepackage[utf8]{inputenc} 
\usepackage[T1]{fontenc}    
\usepackage{hyperref}       
\usepackage{url}            
\usepackage{booktabs}       
\usepackage{amsfonts}       
\usepackage{nicefrac}       
\usepackage{microtype}      
\usepackage{xcolor}         
\usepackage{algorithm}
\usepackage{bbm}
\usepackage{algpseudocode}
\usepackage{comment}
\usepackage{xspace}
\usepackage{graphicx}
\usepackage{sidecap}
\usepackage{subcaption}
\usepackage{amsmath}
\usepackage{amsthm}
\usepackage[colorinlistoftodos]{todonotes}
\usepackage{enumitem}
\usepackage{adjustbox}
\usepackage{wrapfig}
\usepackage{tabularx}
\usepackage{amssymb}
\usepackage{thmtools, thm-restate}
\usepackage{pifont}

\newcommand{\cmark}{\ding{51}}%
\newcommand{\xmark}{\ding{55}}%

\newtheorem{defn}{Definition}
\numberwithin{defn}{section}
\newtheorem{lemma}[defn]{Lemma}
\newtheorem{theorem}[defn]{Theorem}

\newenvironment{subproof}[1][\proofname]{%
  \begin{proof}[#1]%
}{%
  \end{proof}%
}

\DeclareMathOperator*{\argmin}{arg\,min}
\DeclareMathOperator*{\sign}{sgn}

\newcommand{\smallparagraph}[1]{\noindent\textbf{#1}\quad}


\newcommand{\EE}{\mathbb{E}}
\newcommand{\PP}{\mathbb{P}}
\newcommand{\II}{\mathbf{1}}
\newcommand{\indep}{\perp \!\!\! \perp}

\newcommand{\Att}{A}
\newcommand{\att}{a}
\newcommand{\attspace}{\mathcal{A}}
\newcommand{\sample}{\mathcal{S}}
\newcommand{\hypset}{\mathbb{H}}
\newcommand{\hypcls}{\mathcal{H}}
\newcommand{\hyp}{h}
\newcommand{\Obs}{X}
\newcommand{\obs}{x}
\newcommand{\obsspace}{\mathcal{X}}
\newcommand{\Lbl}{Y}
\newcommand{\lbl}{y}
\newcommand{\Dec}{\hat{\Lbl}}
\newcommand{\dec}{\hat{\lbl}}
\newcommand{\dist}{\mathcal{D}}
\newcommand{\loss}{\ell}
\newcommand{\costmat}{C}
\newcommand{\cost}{c}
\newcommand{\risk}{\mathcal{R}}
\newcommand{\post}{q}
\newcommand{\opt}{\text{opt}}
\newcommand{\MD}{\text{MD}}

\makeatletter
\newcommand\thefontsize{The current font size is: \f@size pt}
\makeatother

\makeatletter
\def\th@plain{%
  \thm@notefont{}
  \itshape 
}
\def\th@definition{%
  \thm@notefont{}
  \normalfont 
}
\makeatother

\title{Uncertain Decisions Facilitate Better \\ Preference Learning}

%

\author{%
  Cassidy Laidlaw \\
  University of California, Berkeley \\
  \texttt{cassidy\_laidlaw@cs.berkeley.edu} \\
  \And
  Stuart Russell \\
  University of California, Berkeley \\
  \texttt{russell@cs.berkeley.edu} \\
}

\begin{document}

\maketitle

\begin{abstract}
Existing observational approaches for learning human preferences, such as inverse reinforcement learning, usually make strong assumptions about the observability of the human's environment.
However, in reality, people make many important decisions under uncertainty.
To better understand preference learning in these cases, we study the setting of inverse decision theory (IDT), a previously proposed framework where a human is observed making non-sequential binary decisions under uncertainty. In IDT, the human's preferences are conveyed through their loss function, which expresses a tradeoff between different types of mistakes.
We give the first statistical analysis of IDT, providing conditions necessary to identify these preferences and characterizing the sample complexity---the number of decisions that must be observed to learn the tradeoff the human is making to a desired precision.
Interestingly, we show that it is actually easier to identify preferences when the decision problem is more uncertain. Furthermore, uncertain decision problems allow us to relax the unrealistic assumption that the human is an optimal decision maker but still identify their exact preferences; we give sample complexities in this suboptimal case as well. Our analysis contradicts the intuition that partial observability should make preference learning more difficult. It also provides a first step towards understanding and improving preference learning methods for uncertain and suboptimal humans.

\end{abstract}

\section{Introduction}





The problem of inferring human preferences has been studied for decades in fields such as inverse reinforcement learning (IRL), preference elicitation, and active learning. However, there are still several shortcomings in existing methods for preference learning. Active learning methods require query access to a human; this is infeasible in many purely observational settings and may lead to inaccuracies due to the description-experience gap \citep{hertwig_descriptionexperience_2009}. IRL is an alternative preference learning tool which requires only observations of human behavior. However, IRL suffers from underspecification, i.e. preferences are not precisely identifiable from observed behavior \citep{ng_algorithms_2000}. Furthermore, nearly all IRL methods require that the observed human is optimal or noisily optimal at optimizing for their preferences. However, humans are often \emph{systematically} suboptimal decision makers \cite{mullainathan_machine_2019}, and accounting for this makes IRL even \emph{more} underspecified, since it is hard to tell suboptimal behavior for one set of preferences apart from optimal behavior for another set of preferences \citep{armstrong_occams_2018}.


IRL and preference learning from observational data are generally applied in situations where a human is acting under no uncertainty. Given the underspecification challenge, one might expect that adding in the possibility of uncertainty in decision making (known as partial observability) would only make preference learning more challenging.
Indeed, \citet{choi_inverse_2011} and \citet{chinaei_inverse_2012}, who worked to apply IRL to partially observable Markov decision processes (POMDPs, where agents act under uncertainty), remarked that the underspecification of IRL combined with the intractability of POMDPs made for a very difficult task.

\begin{figure}
    \centering
    \setlength{\extrarowheight}{6pt}
    \begin{tabular}{p{0.03\linewidth} p{0.44\linewidth} p{0.44\linewidth}}
        & \multicolumn{1}{c}{\bf Decisions without uncertainty} & \multicolumn{1}{c}{\bf Decisions under uncertainty} \\
        (a) & Should I quarantine a traveler with a 100\% accurate negative test for a dangerous disease? &
        Should I quarantine a traveler with some symptoms of a dangerous disease but no test results? \\
        (b) & Should a person with irrefutable evidence of and confession to a crime be convicted? &
        Should a person with circumstantial evidence of a crime be convicted? \\
    \end{tabular}
    \caption{One of our key findings is that \emph{decisions made under uncertainty can reveal more preferences than clear decisions.} Here we give examples of decisions made with and without uncertainty.
    (a) In the case without uncertainty, nobody would choose to quarantine the traveler, so we cannot distinguish between different people's preferences. However, in the case \emph{with} uncertainty, people might decide differently whether to quarantine the traveler depending on their preferences on the tradeoff between individual freedom and public health. This allows us to identify those preferences by observing decisions. (b) Similarly, observing decisions on whether to convict a person under uncertainty reveals preferences about the tradeoff between convicting innocent people and allowing criminals to go free.
    }
    \label{fig:decision_uncertainty}
\end{figure}

In this work, we find that, surprisingly, observing humans making decisions under uncertainty actually makes preference learning \emph{easier} (see Figure \ref{fig:decision_uncertainty}). To show this, 
we analyze a simple setting, where a human decision maker observes some information and must make a binary choice. This is somewhat analogous to supervised learning, where a decision rule is chosen to minimize some loss function
over a data distribution. In our formulation, the goal is to learn the human decision maker's loss function by observing their decisions. Often, in supervised learning, the loss function is simply the 0-1 loss. However, humans may incorporate many other factors into their implicit ``loss functions''; they may weight different types of mistakes unequally or incorporate fairness constraints, for instance. One might call this setting ``inverse supervised learning,'' but it is better described as inverse decision theory (IDT) \citep{swartz_inverse_2006,davies_inverse_2005}, since the objective is to reverse-engineer only the human's decision rule and not any learning process used to arrive at it. IDT can be shown to be a special case of partially observable IRL (see Appendix \ref{sec:pomdp}) but its restricted assumptions allow more analysis than would be possible for IRL in arbitrary POMDPs. However, we believe that the insights we gain from studying IDT should be applicable to POMDPs and uncertain decision making settings in general. We introduce a formal description of IDT in Section \ref{sec:problem}.

While we hope to provide insight into general reward learning, IDT is also a useful tool in its own right; even in this binary, non-sequential setting, human decisions can reveal important preferences. For example, during a deadly disease outbreak, a government might pass a law to quarantine individuals with a chance of being sick. The decision rule the government uses to choose who to quarantine depends on the relative costs of failing to quarantine a sick person versus accidentally quarantining an uninfected one. In this way, even human decisions where there is a ``right'' answer are revealing if they are made under uncertainty. This example could distinguish a preference for saving lives versus one for guaranteeing freedom of movement. These preferences on the tradeoff between costs of mistakes are expressed through the loss function that the decision maker optimizes.

In our main results on IDT in Section \ref{sec:sample_complexity}, we find that the identifiability of a human's loss function is dependent on whether the decision we observe them making involves uncertainty. If the human faces sufficient uncertainty, we give tight sample complexity bounds on the number of decisions we must observe to identify their loss function, and thus preferences, to any desired precision (Theorem \ref{thm:opt_dec}). On the other hand, if there is no uncertainty---i.e., the correct decision is always obvious---then we show that there is no way to identify the loss function (Theorem \ref{thm:optimal_lower} and Corollary \ref{corollary:certain_lower}).
Technically, we show that learning the loss function is equivalent to identifying a threshold function over the space of posterior probabilities for which decision is correct given an observation (Figure \ref{fig:posterior_viz}). This threshold can be determined to precision $\epsilon$ in $\Theta(1 / ( p_\cost \epsilon))$ samples, where $p_\cost$ is the probability density of posterior probabilities around the threshold. In the case where there is no uncertainty in the decision problem, $p_\cost = 0$ and we demonstrate that the loss function cannot be identified.

These results apply to optimal human decision makers---that is, those who completely minimize their expected loss. When a decision rule or policy is suboptimal, in general their loss function cannot be learned \citep{armstrong_occams_2018,shah_feasibility_2019}. However, we show that decisions made under uncertainty are also helpful in this case; under certain models of suboptimality, we can still \emph{exactly} recover the human's loss function.

We present two such models of suboptimality (see Figure \ref{fig:optimality_cases}). In both, we assume that the decision maker is restricting themselves to choosing a decision rule $\hyp$ in some hypothesis class $\hypcls$, which may not include the optimal decision rule. This framework is similar to that of agnostic supervised learning \citep{haussler_decision_1992,kearns_toward_1994}, but solves the inverse problem of determining the loss function given a hypothesis class and decision samples. If the restricted hypothesis class $\hypcls$ is known, we show that the loss function can be learned similarly to the optimal case (Theorem \ref{thm:subopt_known}). Our analysis makes a novel connection between Bayesian posterior probabilities and binary hypothesis classes.
However, assuming that $\hypcls$ is known is a strong assumption; for instance, we might suspect that a decision maker is ignoring some data features but we may not know exactly which features. We formalize this case by assuming that the decision maker could be considering the optimal decision rule in any of a number of hypothesis classes in some family $\hypset$. This case is more challenging because we may need to identify which hypothesis class the human is using in order to identify their loss function. We show that, assuming a smoothness condition on $\hypset$, we can still obtain the decision maker's loss function (Theorem \ref{thm:subopt_unknown}).

We conclude with a discussion of our results and their implications in Section \ref{sec:discussion}. We extend IDT to more complex loss functions that can depend on certain attributes of the data in addition to the chosen decision; we show that this extension can be used to test for the fairness of a decision rule under certain criteria which were previously difficult to measure. We also compare the implications of IDT for preference learning in uncertain versus clear decision problems. Our work shows that uncertainty is \emph{helpful} for preference learning and suggests how to exploit this fact.

\section{Related Work}

Our work builds upon that of \citet{davies_inverse_2005} and \citet{swartz_inverse_2006}, who first introduced inverse decision theory. They describe how to apply IDT to settings in which a doctor makes treatment decisions based on a few binary test outcomes, but provide no statistical analysis. In contrast, we explore when IDT can be expected to succeed in more general cases and how many observed decisions are necessary to infer the loss function. We also analyze cases where the decision maker is suboptimal for their loss function, which are not considered by Davies or Swartz et al.

Inverse reinforcement learning (IRL) \cite{ng_algorithms_2000,abbeel_apprenticeship_2004,ramachandran_bayesian_2007,ziebart_maximum_2008,fu_learning_2017}, also known as inverse optimal control, aims to infer the reward function for an agent acting in a Markov decision process (MDP). Our formulation of IDT can be considered as a special case of IRL in a partially observable MDP (POMDP) with two states and two actions (see Appendix \ref{sec:pomdp}). Some prior work explored IRL in POMDPs \citep{choi_inverse_2011,chinaei_inverse_2012} by reducing the POMDP to a belief-state MDP and applying standard IRL algorithms.
Our main purpose is not to present improvements to IRL algorithms; rather, we give an analysis of the difference between observable and partially observable settings for preference learning. We begin with the restricted setting of IDT but hope to extend to sequential decision making in the future. We also consider cases where the human decision maker is suboptimal, which previous work did not explore.


Performance metric elicitation (ME) aims to learn a loss function (aka performance metric) by querying a human \citep{hiranandani_performance_2019, hiranandani_multiclass_2019, hiranandani_fair_2020}. ME and other active learning approaches \citep{biyik_batch_2018,mindermann_active_2019,biyik_asking_2019,bhatia_agnostic_2021} require the ability to actively ask a user for their preference among different loss or reward functions. 
In contrast, IDT aims to learn the loss function purely by observing a decision maker. Active learning is valuable for some applications, but there are many cases where it is infeasible. Observed decisions are often easier to obtain than expert feedback. Also, active learning may suffer from the description-experience gap \citep{hertwig_descriptionexperience_2009}; that is, it may be difficult to evaluate in the abstract the comparisons that these methods give as queries to the user, leading to biased results. In contrast, observing human decision making ``in the wild'' with IDT could lead to a more accurate understanding of human preferences.

Preference and risk elicitation aim to identify people's preferences between different uncertain or certain choices. A common tool is to ask a person to choose between a lottery (i.e., uncertain payoff) and a guaranteed payoff, or between two lotteries, varying parameters and observing the resulting choices \citep{cohen_experimental_1987,holt_risk_2002,csermely_how_2016}. In our analysis of IDT, decision making under uncertainty can be cast as a natural series of choices between lotteries. If we observe enough different lotteries, the decision maker's preferences can be identified. On the other hand, if there is no uncertainty, then we only observe choices between guaranteed payoffs and there is little information to characterize preferences.


\section{Problem Formulation}
\label{sec:problem}

\begin{figure}
    \centering
    \input{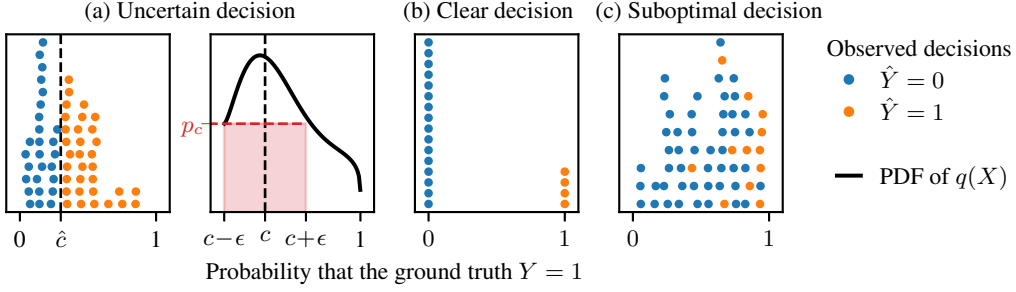}
    \caption{A visualization of three settings for inverse decision theory (IDT), which aims to estimate $\cost$, the parameter of a decision maker's loss function, given observed decisions $\dec_1, \hdots, \dec_m \in \{0, 1\}$. Here, each decision $\dec_i$ is plotted against the probability $\post(\obs_i) = \PP(\Lbl = 1 \mid \Obs = \obs_i)$ that the ground truth (correct) decision $\Lbl$ is $1$ given the decision maker's observation $\obs_i$. Lemma \ref{lemma:bayesopt} shows that an optimal decision rule assigns $\dec_i = \II\{\post(\obs_i) \geq c\}$. (a) For uncertain decision problems, IDT can estimate $\cost$ as the threshold of posterior probabilities $\post(\obs_i)$ where the decision switches from 0 to 1 (Section \ref{sec:optimal}). If the distribution of $\post(\Obs)$ has probability density at least $p_\cost$ on $[c - \epsilon, c + \epsilon]$, Theorem \ref{thm:opt_dec} shows we can learn $\cost$ to precision $\epsilon$ with $m \geq O(1/(p_\cost \epsilon))$ samples. (b) When there is no uncertainty in the decision problem, IDT cannot characterize the loss parameter $\cost$ because the threshold between positive and negative decisions could be anywhere between 0 and 1 (Section \ref{sec:lower_bounds}). (c) A suboptimal human decision maker does not use an optimal decision rule for any loss parameter $\cost$, but we can often still estimate their preferences (Sections \ref{sec:suboptimal_known} and \ref{sec:suboptimal_unknown}).}
    \label{fig:posterior_viz}
\end{figure}

We formalize inverse decision theory using decision theory and statistical learning theory. Let $\dist$ be a distribution over observations $\Obs \in \obsspace$ and ground truth decisions $\Lbl \in \{0, 1\}$. We consider an agent that receives an observation $\Obs$ and must make a binary decision $\Dec \in \{0, 1\}$. While many decision problems include more than two choices, we consider the binary case to simplify analysis. However, the results are applicable to decisions with larger numbers of choices; assuming irrelevance from independent alternatives (i.e. the independence axiom \citep{luce_choice_1977}), a decision among many choices can be reduced to binary choices between pairs of them. We generally assume that $\dist$ is fixed and known to both the decision maker and the IDT algorithm. Unless otherwise stated, all expectations and probabilities on $\Obs$ and $\Lbl$ are with respect to the distribution $\dist$.

We furthermore assume that the agent has chosen a decision rule (or hypothesis) $\hyp: \obsspace \to \{0, 1\}$ from some hypothesis class $\hypcls$ that minimizes a loss function which depends only on the decision $\Dec = \hyp(\Obs)$ that was made and the correct decision $\Lbl$:
\begin{equation*}
    \hyp \in \argmin_{\hyp \in \hypcls} \; \mathbb{E}_{(\Obs, \Lbl) \sim \dist} \left[ \loss(\hyp(\Obs), \Lbl) \right].
\end{equation*}
In general, the loss function $\ell$ might depend on the observation $\Obs$ as well; we explore this extension in the context of fair decision making in Section \ref{sec:fairness}. Assuming the formulation above, since $\Lbl, \Dec \in \{0, 1\}$ we can write the loss function $\ell$ as a matrix $\costmat \in \mathbb{R}^{2 \times 2}$ such that $\loss(\dec, \lbl) = \costmat_{\dec \lbl}$. We denote by $\risk_\costmat(\hyp) = \mathbb{E}_{(\Obs, \Lbl) \sim \dist} \left[ \loss(\hyp(\Obs), \Lbl) \right]$ the expected loss or ``risk'' of the hypothesis $\hyp$ with cost matrix $\costmat$. This cost matrix has four entries, but the following lemma shows that it effectively has only one degree of freedom.

\begin{restatable}[Equivalence of cost matrices]{lemma}{lemmacostmat}
\label{lemma:costmat}
Any cost matrix $\costmat = (\begin{smallmatrix} \costmat_{0 0} & \costmat_{0 1} \\ \costmat_{1 0} & \costmat_{1 1} \end{smallmatrix})$ is equivalent to a cost matrix $\costmat' = (\begin{smallmatrix} 0 & 1 - \cost \\ \cost & 0 \end{smallmatrix})$ where $\cost = \frac{\costmat_{1 0} - \costmat_{0 0}}{\costmat_{1 0} + \costmat_{0 1} - \costmat_{0 0} - \costmat_{1 1}}$ as long as $\costmat_{1 0} + \costmat_{0 1} - \costmat_{0 0} - \costmat_{1 1} \neq 0$. That is, there are constants $a, b \in \mathbb{R}$ such that $\risk_\costmat(\hyp) = a \risk_{\costmat'}(\hyp) + b$ for all $\hyp$.
\end{restatable}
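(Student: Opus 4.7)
The plan is to expand $\risk_\costmat(\hyp)$ in terms of the four joint probabilities $p_{\dec\lbl} = \PP(\hyp(\Obs) = \dec, \Lbl = \lbl)$, and then eliminate the two diagonal terms using the marginals of $\Lbl$, which are independent of $\hyp$. Concretely, I would write
\begin{equation*}
\risk_\costmat(\hyp) = \costmat_{00} p_{00} + \costmat_{01} p_{01} + \costmat_{10} p_{10} + \costmat_{11} p_{11},
\end{equation*}
and substitute $p_{00} = \PP(\Lbl=0) - p_{10}$ and $p_{11} = \PP(\Lbl=1) - p_{01}$. This gives
\begin{equation*}
\risk_\costmat(\hyp) = (\costmat_{10} - \costmat_{00}) p_{10} + (\costmat_{01} - \costmat_{11}) p_{01} + \underbrace{\costmat_{00}\PP(\Lbl=0) + \costmat_{11}\PP(\Lbl=1)}_{=: \, b},
\end{equation*}
where $b$ is a constant that does not depend on $\hyp$ (since the marginal of $\Lbl$ is fixed by $\dist$).

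Next I would do the same expansion for the reduced cost matrix $\costmat' = (\begin{smallmatrix} 0 & 1-\cost \\ \cost & 0 \end{smallmatrix})$, obtaining simply $\risk_{\costmat'}(\hyp) = \cost \, p_{10} + (1-\cost) \, p_{01}$. Setting $a = \costmat_{10} + \costmat_{01} - \costmat_{00} - \costmat_{11}$ and using the definition $\cost = (\costmat_{10} - \costmat_{00})/a$, one gets $a \cost = \costmat_{10} - \costmat_{00}$ and $a (1 - \cost) = \costmat_{01} - \costmat_{11}$, so that $a \, \risk_{\costmat'}(\hyp)$ matches the $\hyp$-dependent part of $\risk_\costmat(\hyp)$ coefficient-by-coefficient. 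The claimed identity $\risk_\costmat(\hyp) = a \, \risk_{\costmat'}(\hyp) + b$ then follows for all $\hyp$.

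The only nonobvious step is verifying that $a\cost = \costmat_{10} - \costmat_{00}$ and $a(1-\cost) = \costmat_{01} - \costmat_{11}$ simultaneously; the first is immediate from the definition of $\cost$, and the second uses $1 - \cost = (a - (\costmat_{10} - \costmat_{00}))/a = (\costmat_{01} - \costmat_{11})/a$. The assumption $\costmat_{10} + \costmat_{01} - \costmat_{00} - \costmat_{11} \neq 0$ is exactly what is needed to make $\cost$ well-defined; geometrically it rules out the degenerate case where the loss is insensitive to whether $\Dec = \Lbl$ on average, in which case every hypothesis has the same risk and no reduction is possible. I do not anticipate any real obstacle: the argument is pure linear algebra, with the key conceptual point being that two of the four entries of $\costmat$ multiply $\hyp$-independent quantities and therefore only shift the risk by an additive constant, while a third entry can be absorbed into a global rescaling, leaving one genuine degree of freedom.
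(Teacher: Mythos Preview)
Your proposal is correct and follows essentially the same route as the paper: both expand $\risk_\costmat(\hyp)$ over the four joint probabilities, eliminate the diagonal terms via $p_{00} = \PP(\Lbl=0) - p_{10}$ and $p_{11} = \PP(\Lbl=1) - p_{01}$, and then identify $a = \costmat_{10} + \costmat_{01} - \costmat_{00} - \costmat_{11}$ and $b = \costmat_{00}\PP(\Lbl=0) + \costmat_{11}\PP(\Lbl=1)$ to match coefficients with $\risk_{\costmat'}$.
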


See Appendix \ref{proof:costmat} for this and other proofs. Based on Lemma \ref{lemma:costmat}, from now on, we assume the cost matrix only has one parameter $\cost$, which is the cost of a false positive; $1 - \cost$ is the cost of a false negative. Intuitively, high values of $\cost$ indicate a preference for erring towards the decision $\Dec = 0$ under uncertainty while low values indicate a preference for erring towards the decision $\Dec = 1$. Finally, we assume that making the correct decision is always better than making an incorrect decision, i.e. $\costmat_{0 0} < \costmat_{1 0}$ and $\costmat_{1 1} < \costmat_{0 1}$. This implies that $0 < \cost < 1$.

We write $\loss_c$ and $\risk_c$ to denote the loss and risk functions using this loss parameter $\cost$. Thus, we can formally define a binary decision problem:

\begin{defn}[Decision problem]
\label{defn:decision_problem}
A (binary) \emph{decision problem} is a pair $(\dist, \cost)$, where $\dist$ is a distribution over pairs of observations and correct decisions $(\Obs, \Lbl) \in \obsspace \times \{0, 1\}$ and $\cost \in (0, 1)$ is the loss parameter. The decision maker aims to choose a decision rule $\hyp: \obsspace \to \{0, 1\}$ that minimizes the risk $\risk_\cost(\hyp) = \EE_{(\Obs, \Lbl) \sim \dist}[\loss_\cost(\hyp(\Obs), \Lbl)]$.
\end{defn}

As a running example, we consider the decision problem where an emergency room (ER) doctor needs to decide whether to treat a patient for a heart attack. In this case, the observation $\Obs$ might consist of the patient's medical records and test results; the correct decision is $\Lbl = 1$ if the patient is having a heart attack and $\Lbl = 0$ otherwise; and the made decision is $\Dec = 1$ if the doctor treats the patient and $\Dec = 0$ if not. In this case, a higher value of $\cost$ indicates that the doctor places higher cost on accidentally treating a patient not having a heart attack, while a lower value of $\cost$ indicates the doctor places higher cost on accidentally failing to treat a patient with a heart attack.

In \textbf{inverse decision theory (IDT)}, our goal is to determine the loss function the agent is optimizing, which here is equivalent to the parameter $\cost$. We assume access to the true distribution $\dist$ of observations and labels and also a finite sample of observations and decisions $\sample = \{(\obs_1, \dec_1), \hdots, (\obs_m, \dec_m)\}$ where $\obs_i \sim \dist$ i.i.d. and the decisions are made according to the decision rule, i.e. $\dec_i = \hyp(\obs_i)$.

Some of our main results concern the effects on IDT of whether or not a decision is made under uncertainty. We now formally characterize such decision problems.

\begin{defn}[Decision problems with and without uncertainty]
\label{defn:uncertainty}
A decision problem $(\dist, \cost)$ has \emph{no uncertainty} if $\PP_{(\Obs, \Lbl) \sim \dist}(\Lbl = 1 \mid \Obs) \in \{0, 1\}$ almost surely. The decision problem has uncertainty otherwise.
\end{defn}

That is, if it is always the case that, after observing and conditioning on $\Obs$, either $\Lbl = 1$ with 100\% probability or $\Lbl = 0$ with 100\% probability, then the decision problem has no uncertainty.

\section{Identifiability and Sample Complexity}
\label{sec:sample_complexity}

\begin{figure}
    \centering
    \input{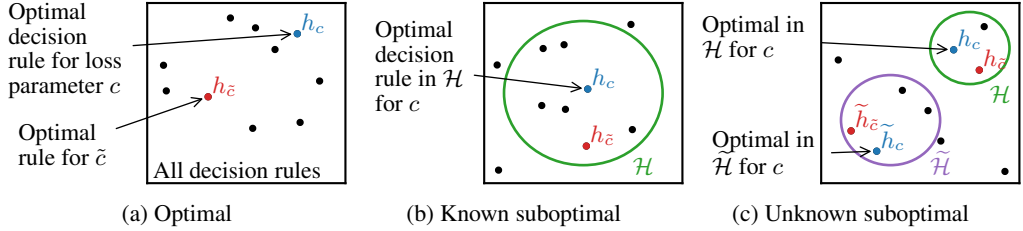}
    \caption{We analyze IDT for optimal decision makers and two cases of suboptimal decision makers. (a) In the optimal case (Section \ref{sec:optimal}), the decision maker chooses the optimal decision rule $\hyp$ for their loss parameter $\cost$ from all possible rules. (b) In the known suboptimal case (Section \ref{sec:suboptimal_known}), the decision maker chooses from a restricted hypothesis class $\hypcls$ which may not contain the overall best decision rule. (c) In the unknown suboptimal case (Section \ref{sec:suboptimal_unknown}), the decision maker chooses any of several hypothesis classes $\hypcls \in \hypset$ and then uses the optimal rule within that class, which may not be the optimal rule amongst all classes. This case is more difficult than (b) because we often need to identify the hypothesis class $\hypcls$ in addition to the loss parameter $\cost$.}
    \label{fig:optimality_cases}
    \vspace{-6pt}
\end{figure}

We aim to answer two questions about IDT. First, under what assumptions is the loss function identifiable? Second, if the loss function is identifiable, how large must the sample $\sample$ be to estimate $\cost$ to some precision with high probability? We adopt a framework similar to that of probably approximately correct (PAC) learning \citep{valiant_theory_1984}, and aim to calculate a $\hat\cost$ such that with probability at least $1 - \delta$ with respect to the sample of observed decisions, $| \hat\cost - \cost | \leq \epsilon$. While PAC learning typically focuses on test or prediction error, we instead focus on the estimation error for $\cost$. This has multiple advantages. First, it allows for better understanding and prediction of human behavior across distribution shift or in unseen environments \cite{gleave_quantifying_2021}. Second, there are cases where we care about the precise tradeoff the decision maker is optimizing for; for instance, in the ER doctor example, there are guidelines on the tradeoff between different types of treatment errors and we may want to determine if doctors' behavior aligns with these guidelines \cite{mullainathan_machine_2019}. Third, if the decision maker is suboptimal for their loss function (explored in Sections \ref{sec:suboptimal_known} and \ref{sec:suboptimal_unknown}), we may not want to simply replicate the suboptimal decisions, but find a better decision rule according to the loss function.

We consider three settings where we would like to estimate $\cost$, illustrated in Figure \ref{fig:optimality_cases}. First, we assume that the decision maker is perfectly optimal for their loss function. This is similar to the framework of \citet{swartz_inverse_2006}. However, moving beyond their analysis, we present properties necessary for identifiability and sample complexity rates. Second, we relax the assumption that the decision maker is optimal, and instead assume that they only consider a restricted set of hypotheses $\hypcls$ which is known to us. Finally, we remove the assumption that we know the hypothesis class that the decision maker is considering. Instead, we consider a family of hypothesis classes; the decision maker could choose the optimal decision rule \emph{within any class}, which is not necessarily the optimal decision rule \emph{across all classes}.

\subsection{Optimal decision maker}
\label{sec:optimal}

First, we assume that the decision maker is optimal. In this case, the form of the optimal decision rule is simply the Bayes classifier \citep{devroye_probabilistic_2013}.

\begin{restatable}[Bayes optimal decision rule]{lemma}{lemmabayesopt}
\label{lemma:bayesopt}
An optimal decision rule $\hyp$ for a decision problem $(\dist, \cost)$ is given by $\hyp(\obs) = \II \{ \post(\obs) \geq \cost \}$ where $\post(\obs) = \PP_{(\Obs, \Lbl) \sim \dist}(\Lbl = 1 \mid \Obs = \obs)$ is the posterior probability of class 1 given the observation $\obs$.
\end{restatable}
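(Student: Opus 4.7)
The plan is to use a standard Bayes-decision-theoretic argument: since the risk decomposes as an expectation over $\Obs$ of the conditional expected loss, it suffices to minimize the conditional expected loss pointwise for each observation $\obs$.

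First, by Lemma \ref{lemma:costmat}, I may assume the cost matrix is $\costmat' = \bigl(\begin{smallmatrix} 0 & 1-\cost \\ \cost & 0 \end{smallmatrix}\bigr)$, since the risks under $\costmat$ and $\costmat'$ differ only by an affine transformation and therefore share the same minimizers. Next, I would write
\begin{equation*}
    \risk_\cost(\hyp) = \EE_{(\Obs,\Lbl)\sim\dist}[\loss_\cost(\hyp(\Obs),\Lbl)] = \EE_\Obs\bigl[\EE_\Lbl[\loss_\cost(\hyp(\Obs),\Lbl)\mid \Obs]\bigr]
\end{equation*}
by the tower property. Because $\hyp(\obs)$ can be chosen independently for each $\obs$, any rule that minimizes the inner conditional expectation pointwise also minimizes the overall risk.

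Next I would compute the two possible values of the conditional expected loss at a fixed $\obs$. Writing $\post(\obs) = \PP(\Lbl = 1 \mid \Obs = \obs)$, if $\hyp(\obs) = 0$ then the conditional expected loss equals $\costmat'_{00}(1-\post(\obs)) + \costmat'_{01}\post(\obs) = (1-\cost)\post(\obs)$, and if $\hyp(\obs) = 1$ then it equals $\costmat'_{10}(1-\post(\obs)) + \costmat'_{11}\post(\obs) = \cost(1-\post(\obs))$. Choosing $\hyp(\obs) = 1$ is therefore (weakly) better exactly when $\cost(1-\post(\obs)) \leq (1-\cost)\post(\obs)$, which rearranges to $\post(\obs) \geq \cost$. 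Hence the rule $\hyp(\obs) = \II\{\post(\obs) \geq \cost\}$ achieves the pointwise minimum and, integrating over $\Obs$, attains the minimum risk.

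There is no real obstacle here beyond bookkeeping; the main subtlety is merely noting that ties (the event $\post(\Obs) = \cost$) can be broken either way without affecting optimality, so the ``$\geq$'' in the indicator is a choice of convention rather than a constraint, and that the reduction to $\costmat'$ via Lemma \ref{lemma:costmat} is legitimate precisely because $\costmat_{10} + \costmat_{01} - \costmat_{00} - \costmat_{11} > 0$ under our assumption that correct decisions strictly dominate incorrect ones.
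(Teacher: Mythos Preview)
Your proof is correct and follows essentially the same conditional-risk decomposition as the paper: write the risk as an expectation over $\Obs$ of the conditional expected loss, then compare the two possible values $c(1-\post(\obs))$ and $(1-c)\post(\obs)$ pointwise. The only notable difference is that the paper's version additionally shows that \emph{every} optimal rule must agree with the threshold rule almost surely off the boundary $\{\post(\Obs)=\cost\}$; this stronger characterization is not part of the lemma statement itself but is invoked later (e.g., in the proofs of Theorem~\ref{thm:opt_dec} and Lemma~\ref{lemma:fairness}), so you may wish to record it as well.
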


That is, any optimal decision rule corresponds to a threshold function on the posterior probability $\post(\obs)$, where the threshold is at the loss parameter $\cost$. Thus, the strategy for estimating $\cost$ from a sample of observations and decisions is simple. For each observation $\obs_i$, we calculate $\post(\obs_i)$. Then, we choose any $\hat\cost$ such that $\post(\obs_i) \geq \hat\cost \Leftrightarrow \dec_i = 1$; that is, $\hat\cost$ is consistent with the observed data. From statistical learning theory, we know that a threshold function can be PAC learned in $O(\log (1 / \delta) / \epsilon)$ samples. However, such learning only guarantees low \textit{prediction error} of the learned hypothesis. We need stronger conditions to ensure that $\hat\cost$ is close to the true loss function parameter $\cost$. The following theorem states conditions which allow estimation of $\cost$ to arbitrary precision.

\begin{restatable}[IDT for optimal decision maker]{theorem}{thmoptdec}
\label{thm:opt_dec}
Let $\epsilon > 0$ and $\delta > 0$. Say that there exists $p_\cost > 0$ such that $\PP(\post(\Obs) \in (\cost, \cost + \epsilon]) \geq p_\cost \epsilon$ and $\PP(\post(\Obs) \in [\cost - \epsilon, \cost)) \geq p_\cost \epsilon$.
Let $\hat\cost$ be chosen to be consistent with the observed decisions as stated above, i.e. $\post(\obs_i) \geq \hat\cost \Leftrightarrow \dec_i = 1$. Then
$| \hat\cost - \cost | \leq \epsilon$ with probability at least $1 - \delta$ as long as the number of samples $m \geq \frac{\log(2 / \delta)}{p_\cost \epsilon}$.
\end{restatable}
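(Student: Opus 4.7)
The plan is to reduce the estimation problem to two one-sided “no sample in the gap” events, then bound each using the density assumption and an independence argument.

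First, I would invoke Lemma \ref{lemma:bayesopt} to conclude that the observed labels are $\dec_i = \II\{\post(\obs_i) \geq \cost\}$. Consistency of $\hat\cost$ with the sample means $\dec_i = \II\{\post(\obs_i) \geq \hat\cost\}$, so $\post(\obs_i) \geq \cost$ iff $\post(\obs_i) \geq \hat\cost$ for every $i$. Now consider the bad event $|\hat\cost - \cost| > \epsilon$ and split it:
\begin{itemize}
\item If $\hat\cost > \cost + \epsilon$, any sample with $\post(\obs_i) \in (\cost, \cost+\epsilon]$ satisfies $\post(\obs_i) \geq \cost$ but $\post(\obs_i) < \hat\cost$, contradicting consistency. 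So this case forces no sample to fall in $(\cost, \cost+\epsilon]$.
\item If $\hat\cost < \cost - \epsilon$, symmetrically, no sample can fall in $[\cost - \epsilon, \cost)$.
\end{itemize}

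Next, I would use the density assumption together with i.i.d.\ sampling: since $\PP(\post(\Obs) \in (\cost, \cost+\epsilon]) \geq p_\cost \epsilon$, the probability that none of the $m$ i.i.d. observations lands in $(\cost, \cost+\epsilon]$ is at most $(1 - p_\cost \epsilon)^m \leq e^{-m p_\cost \epsilon}$, and the same bound holds for the left interval. A union bound gives
\[
\PP(|\hat\cost - \cost| > \epsilon) \leq 2 e^{-m p_\cost \epsilon}.
\]
Finally, setting $2 e^{-m p_\cost \epsilon} \leq \delta$ and solving for $m$ yields the stated sample complexity $m \geq \log(2/\delta)/(p_\cost \epsilon)$.

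I do not expect any genuine obstacle: the Bayes-optimal characterization from Lemma \ref{lemma:bayesopt} does all the structural work, reducing IDT to learning a threshold in posterior-space, and the density condition is precisely what is needed so that failing to localize $\cost$ within $\epsilon$ requires an empty neighborhood on one side. The only subtlety worth being careful about is the handedness of the intervals (open vs.\ closed endpoints at $\cost$), which matches the way the decision rule is written with a $\geq$ in Lemma \ref{lemma:bayesopt} and the way the two intervals in the hypothesis of the theorem are stated.
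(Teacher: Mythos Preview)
Your proposal is correct and follows essentially the same approach as the paper: the paper frames it as showing the good event $E$ (a sample lands in each of the two intervals $(\cost,\cost+\epsilon]$ and $[\cost-\epsilon,\cost)$) implies $|\hat\cost-\cost|\le\epsilon$, while you argue the contrapositive, but the probabilistic bound via $(1-p_\cost\epsilon)^m\le e^{-mp_\cost\epsilon}$ and the union bound are identical. The only nuance the paper makes explicit is that Lemma~\ref{lemma:bayesopt} pins down the decision rule only almost surely away from $\post(\obs)=\cost$; since both intervals exclude $\cost$, your argument is unaffected, but it is worth stating.
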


The parameter $p_\cost$ can be interpreted as the approximate probability density of $\post(\Obs)$ around the threshold $\cost$. For instance, the requirements of Theorem \ref{thm:opt_dec} are satisfied if the random variable $\post(\Obs)$ has a probability density of at least $p_\cost$ on the interval $[\cost - \rho, \cost + \rho]$ for some $\rho \geq \epsilon$; the requirements of Theorem \ref{thm:opt_dec} are more general to allow for cases when $\post(\Obs)$ does not have a density. The lower the density $p_\cost$, and thus the probability of observing decisions close to the threshold $\cost$, the more difficult inference becomes. Because of this, Theorem \ref{thm:opt_dec} \emph{requires} that the decision problem has uncertainty. If the decision problem has no uncertainty according to Definition \ref{defn:uncertainty}, then $\post(\Obs) \in \{0, 1\}$ always, i.e. the distribution of posterior probabilities has mass only at 0 and 1. In this case, $p_\cost = 0$ for small enough $\epsilon$ and Theorem \ref{thm:opt_dec} cannot be applied.
In fact, as we show in Section \ref{sec:lower_bounds}, it is impossible to tell what the true loss parameter $\cost$ when the decision problem lacks uncertainty. Figure \ref{fig:posterior_viz}(a-b) illustrates these results.


\subsection{Suboptimal decision maker with known hypothesis class}
\label{sec:suboptimal_known}

Next, we consider cases where the decision maker may not be optimal with respect to their loss function. Our model of suboptimality is that the agent only considers decision rules within some hypothesis class $\hypcls$, which may not include the optimal decision rule. This formulation is similar to that of agnostic PAC learning \citep{haussler_decision_1992,kearns_toward_1994}. It can also be considered a case of a restricted ``choice set'' as defined in the preference learning literature \citep{jeon_reward-rational_2020,freedman_choice_2021}. It can encompass many types of irrationality or suboptimality. For instance, one could assume that the decision maker is ignoring some of the features in $\obs$; then $\hypcls$ would consist of only decision rules depending on the remaining features. In the ER doctor example, we might assume that $\hypcls$ consists of decision rules using only the patient's blood pressure and heart rate; this models a suboptimal doctor who is unable to use more data to make a treatment decision.

While there are many possible models of suboptimality, this one has distinct advantages for preference learning with IDT. One alternative model is that the decision maker has small excess risk, i.e. $\risk_\cost(\hyp) \leq \risk_\cost(\hyp^*) + \Delta$ for some small $\Delta$ where $\hyp^*$ is the optimal decision rule. However, this definition precludes identifiability even in the infinite sample limit (see Appendix \ref{sec:alternative_subopt}).
Another form of suboptimality could be that the decision maker chooses a decision rule to minimize a surrogate loss rather than the true loss. However, we show in Appendix \ref{sec:surrogate} that for reasonable surrogate losses this is no different from minimizing the true loss.
A final alternative model of suboptimality is that the human is noisily optimal; this assumption underlies models like Boltzmann rationality or the Shephard-Luce choice rule \citep{shepard_stimulus_1957,luce_choice_1977,baker_goal_2007,ziebart_maximum_2008}. However, these models assume stochastic decision making and also cannot handle \emph{systematically} suboptimal humans.

In this section we begin by assuming that the restricted hypothesis class $\hypcls$ is known; this requires some novel analysis but the resulting identifiability conditions and sample complexity are very similar to the optimal case in Section \ref{sec:optimal}. In the next section, we consider cases where we are unsure about which restricted hypothesis class the decision maker is considering.

\begin{defn}
\label{defn:monotone_hypcls}
A hypothesis class $\hypcls$ is \emph{monotone} if for any $\hyp, \hyp' \in \hypcls$, either $\hyp(\obs) \geq \hyp'(\obs) \; \forall \obs \in \obsspace$ or $\hyp(\obs) \leq \hyp'(\obs) \; \forall \obs \in \obsspace$.
\end{defn}
\begin{defn}
The \emph{optimal subset} of a hypothesis class $\hypcls$ for a distribution $\dist$ is defined as
$$\opt_\dist(\hypcls) = \left\{\hyp \in \hypcls \;\middle|\; \exists \cost \; \text{ such that } \; \hyp \in {\textstyle \argmin_{\hyp \in \hypcls} \risk_\cost(\hyp)} \right\}$$
\end{defn}

In this section, we consider hypothesis classes whose optimal subsets are monotone. That is, changing the parameter $\cost$ has to either flip the optimal decision rule's output for some observations from 0 to 1, or flip some decisions from 1 to 0. It cannot both change some decisions from 0 to 1 and some from 1 to 0. This assumption is mainly technical; many interesting hypothesis classes naturally have monotone optimal subsets. Any hypothesis class formed by thresholding a function is monotone, i.e $\hypcls = \{\hyp(\obs) = \II\{f(\obs) \geq b\} \mid b \in \mathbb{R} \}$. Also, the set of decision rules based on a particular subset of the observed features satisfies this criterion, since optimal decision rules in this set are thresholds on the posterior probability that $\Lbl = 1$ given the subset of features.

For hypothesis classes with monotone optimal subsets, we can prove properties that allow for similar analysis to that we introduced in Section \ref{sec:optimal}. Let $\hyp_\cost$ denote a decision rule which is optimal for loss parameter $\cost$ in hypothesis class $\hypcls$. That is, $\hyp_\cost \in \argmin_{\hyp \in \hypcls} \risk_\cost(\hyp)$. A key lemma allows us to define a value similar to the posterior probability we used for analyzing the optimal decision maker.

\begin{restatable}[Induced posterior probability]{lemma}{lemmainducedpostprob}
\label{lemma:induced_postprob}
Let $\opt_\dist(\hypcls)$ be monotone and define
\begin{equation*}
    \overline{\post}_\hypcls(\obs) \triangleq \sup \Big( \{ \cost \in [0, 1] \mid \hyp_\cost(\obs) = 1 \} \cup \{0\} \Big)
    \quad \text{and} \quad
    \underline{\post}_\hypcls(\obs) \triangleq \inf \Big( \{ \cost \in [0, 1] \mid \hyp_\cost(\obs) = 0 \} \cup \{1\} \Big).
\end{equation*}
Then for all $\obs \in \obsspace$, $\overline{\post}_\hypcls(\obs) = \underline{\post}_\hypcls(\obs)$. Define the \emph{induced posterior probability} of $\hypcls$ as $\post_\hypcls(\obs) \triangleq \overline{\post}_\hypcls(\obs) = \underline{\post}_\hypcls(\obs)$.
\end{restatable}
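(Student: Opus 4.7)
The plan is to prove the equivalent statement that $\cost \mapsto \hyp_\cost(\obs)$ is non-increasing in $\cost$ for every fixed $\obs$. Once this is established, the sets $C_1 = \{\cost \in [0,1] : \hyp_\cost(\obs) = 1\}$ and $C_0 = \{\cost \in [0,1] : \hyp_\cost(\obs) = 0\}$ partition $[0,1]$ into an initial segment and a tail, so $\sup(C_1 \cup \{0\}) = \inf(C_0 \cup \{1\})$, which is exactly the claim.

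To prove this monotonicity, I will first decompose the risk. Writing $p(\hyp) = \PP(\hyp(\Obs) = 1, \Lbl = 0)$ and $n(\hyp) = \PP(\hyp(\Obs) = 0, \Lbl = 1)$, we have $\risk_\cost(\hyp) = \cost \, p(\hyp) + (1-\cost)\, n(\hyp)$. Note that pointwise ordering $\hyp \geq \hyp'$ implies $p(\hyp) \geq p(\hyp')$ and $n(\hyp) \leq n(\hyp')$. For $\cost_1 < \cost_2$ and $\hyp_i \in \argmin_{\hyp \in \hypcls} \risk_{\cost_i}(\hyp)$, summing the two optimality inequalities $\risk_{\cost_1}(\hyp_1) \leq \risk_{\cost_1}(\hyp_2)$ and $\risk_{\cost_2}(\hyp_2) \leq \risk_{\cost_2}(\hyp_1)$ and dividing by the positive quantity $\cost_2 - \cost_1$ yields
\[
p(\hyp_2) - p(\hyp_1) \leq n(\hyp_2) - n(\hyp_1).
\]

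Next I will invoke the hypothesis that $\opt_\dist(\hypcls)$ is monotone: the two optimal hypotheses are pointwise comparable. If $\hyp_1 \geq \hyp_2$ pointwise, then $\hyp_{\cost_1}(\obs) \geq \hyp_{\cost_2}(\obs)$ at every $\obs$ and the desired monotonicity is immediate. If instead $\hyp_1 \leq \hyp_2$ pointwise, then $p(\hyp_1) \leq p(\hyp_2)$ and $n(\hyp_1) \geq n(\hyp_2)$; combined with the displayed inequality this forces $p(\hyp_1) = p(\hyp_2)$ and $n(\hyp_1) = n(\hyp_2)$, so $\hyp_1$ and $\hyp_2$ have identical risk at \emph{every} loss parameter and each is an optimal representative at both $\cost_1$ and $\cost_2$.

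The delicate step, and the main obstacle, is handling this last tied subcase: when two optimal representatives differ but have equal risk everywhere, the choice of $\hyp_\cost$ is not unique and the sets $C_0, C_1$ could in principle depend on the tiebreaking rule. I will resolve this by selecting representatives consistently, for instance by always choosing the pointwise larger member of the argmin on any maximal tied interval of $\cost$ values; because all such representatives agree $\dist$-almost everywhere, this is always possible and does not change any risk. With a consistent selection, $\cost \mapsto \hyp_\cost(\obs)$ is non-increasing on all of $[0,1]$, the sets $C_0$ and $C_1$ become an initial segment and its complement, and the partition argument from the first paragraph gives $\overline{\post}_\hypcls(\obs) = \underline{\post}_\hypcls(\obs)$.
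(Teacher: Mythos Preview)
Your approach is essentially the same as the paper's: both establish that $\cost \mapsto \hyp_\cost(\obs)$ is non-increasing (the paper via its Lemma~\ref{lemma:condprob}, you via the summed-optimality-inequality argument) and then read off $\overline{\post}_\hypcls = \underline{\post}_\hypcls$ from the resulting initial-segment/tail partition of $[0,1]$. The tiebreaking subtlety you raise is real but is equally glossed over in the paper's Lemma~\ref{lemma:condprob}, whose contradiction step tacitly divides by $\PP(\hyp_\cost(\Obs)=0,\hyp_{\cost'}(\Obs)=1)$ without checking it is positive; your selection-based resolution is reasonable, though note it slightly alters what is being proved, since the lemma as stated is for an arbitrary fixed choice of representatives $\hyp_\cost$.
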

\begin{restatable}{corollary}{corollarybayesoptsuboptknown}
\label{corollary:bayesopt_subopt_known}
Let $\hyp_\cost$ be any optimal decision rule in $\hypcls$ for loss parameter $\cost$. Then for any $\obs \in \obsspace$, $\hyp_\cost(\obs) = 1$ if $\post_\hypcls(\obs) > \cost$ and $\hyp_\cost(\obs) = 0$ if $\post_\hypcls(\obs) < \cost$.
\end{restatable}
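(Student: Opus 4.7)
The plan is to leverage Lemma~\ref{lemma:induced_postprob} directly: the lemma gives us that $\overline{\post}_\hypcls(\obs) = \underline{\post}_\hypcls(\obs)$, and the corollary then follows by examining what value $\hyp_\cost(\obs)$ must take so as not to contradict the equality $\post_\hypcls(\obs) = \cost$-threshold behavior.

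First, I would fix an arbitrary $\obs \in \obsspace$ and an arbitrary optimal rule $\hyp_\cost \in \argmin_{\hyp \in \hypcls} \risk_\cost(\hyp)$. The key observation is that Lemma~\ref{lemma:induced_postprob} does not depend on a particular canonical choice of optimal rule at each $c$: its conclusion $\overline{\post}_\hypcls = \underline{\post}_\hypcls$ holds for any selection $c \mapsto \hyp_c$ with $\hyp_c$ optimal at $c$. I would therefore fix such a selection that uses the given $\hyp_\cost$ at $c = \cost$ (and any optimal rule at other values of $c$), and apply the lemma to this selection.

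Next, I would make the two elementary observations that tie $\hyp_\cost(\obs)$ to the two suprema/infima. If $\hyp_\cost(\obs) = 1$, then $\cost$ lies in the set $\{c \in [0,1] \mid \hyp_c(\obs) = 1\}$, so $\overline{\post}_\hypcls(\obs) \geq \cost$. Symmetrically, if $\hyp_\cost(\obs) = 0$, then $\cost \in \{c \in [0,1] \mid \hyp_c(\obs) = 0\}$, so $\underline{\post}_\hypcls(\obs) \leq \cost$. Combining with $\overline{\post}_\hypcls(\obs) = \underline{\post}_\hypcls(\obs) = \post_\hypcls(\obs)$, the two cases of the corollary follow by contradiction: if $\post_\hypcls(\obs) > \cost$ then $\hyp_\cost(\obs) = 0$ would force $\post_\hypcls(\obs) \leq \cost$, so we must have $\hyp_\cost(\obs) = 1$; dually, if $\post_\hypcls(\obs) < \cost$ then $\hyp_\cost(\obs) = 1$ would force $\post_\hypcls(\obs) \geq \cost$, so $\hyp_\cost(\obs) = 0$.

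The main (minor) obstacle is the ``any optimal rule'' quantifier: the sets defining $\overline{\post}_\hypcls$ and $\underline{\post}_\hypcls$ implicitly depend on a selection of optimal rules, so I need to be explicit that the equality in Lemma~\ref{lemma:induced_postprob} is selection-invariant, which is what licenses plugging the arbitrary given $\hyp_\cost$ into the argument above. Beyond that, the proof is essentially bookkeeping on sup/inf of sets on $[0,1]$ and does not require any further appeal to the monotonicity of $\opt_\dist(\hypcls)$ (that work has already been spent inside the lemma).
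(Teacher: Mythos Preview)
Your proposal is correct and follows essentially the same argument as the paper: assume $\hyp_\cost(\obs)$ takes the ``wrong'' value, observe that this places $\cost$ in the relevant set defining $\underline{\post}_\hypcls(\obs)$ or $\overline{\post}_\hypcls(\obs)$, and derive a contradiction with the assumed inequality $\post_\hypcls(\obs) \gtrless \cost$. The paper's proof is slightly terser in that it does not explicitly discuss selection-invariance of the lemma with respect to the choice of optimal rule at each $c$, but your extra care on that point is well placed and does not change the structure of the argument.
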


Using Lemma \ref{lemma:induced_postprob}, the problem of IDT again reduces to learning a threshold; this time, any optimal classifier in $\hypcls$ is a threshold function on the \emph{induced} posterior probability $\post_\hypcls(\Obs)$, as shown in Corollary \ref{corollary:bayesopt_subopt_known}. Thus, to estimate $\hat\cost$, we calculate an induced posterior probability $\post_\hypcls(\obs_i)$ for each observation $\obs_i$ and choose any estimate $\hat\cost$ such that $\post_\hypcls(\obs_i) \geq \hat\cost \Leftrightarrow \dec_i = 1$. This allows us to state a theorem equivalent to Theorem \ref{thm:opt_dec} for the suboptimal case.

\begin{restatable}[Known suboptimal decision maker]{theorem}{theoremsuboptknown}
\label{thm:subopt_known}
Let $\epsilon > 0$ and $\delta > 0$, and let $\opt_\dist(\hypcls)$ be monotone. Say that there exists $p_\cost > 0$ such that $\PP(\post_\hypcls(\Obs) \in (\cost, \cost + \epsilon]) \geq p_\cost \epsilon$ and $\PP(\post_\hypcls(\Obs) \in [\cost - \epsilon, \cost)) \geq p_\cost \epsilon$.
Let $\hat\cost$ be chosen to be consistent with the observed decisions, i.e. $\post_\hypcls(\obs_i) \geq \hat\cost \Leftrightarrow \dec_i = 1$. Then
$| \hat\cost - \cost | \leq \epsilon$ with probability at least $1 - \delta$ as long as the number of samples $m \geq \frac{\log(2 / \delta)}{p_\cost \epsilon}$.
\end{restatable}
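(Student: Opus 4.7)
The plan is to reduce this case to the threshold-learning argument used for Theorem \ref{thm:opt_dec}, with the induced posterior probability $\post_\hypcls$ playing the role that the true posterior $\post$ played there. The essential reason a reduction is possible is Corollary \ref{corollary:bayesopt_subopt_known}: any optimal hypothesis $\hyp_\cost \in \hypcls$ assigns $\hyp_\cost(\obs) = 1$ whenever $\post_\hypcls(\obs) > \cost$ and $\hyp_\cost(\obs) = 0$ whenever $\post_\hypcls(\obs) < \cost$. Consequently, each observed pair $(\obs_i, \dec_i)$ behaves exactly like a sample of a threshold classifier on the real-valued random variable $\post_\hypcls(\Obs)$ with threshold $\cost$, and the density-style hypothesis on $\post_\hypcls(\Obs)$ becomes directly comparable to the condition on $\post(\Obs)$ in Theorem \ref{thm:opt_dec}.

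First I would show that the failure event $\hat\cost > \cost + \epsilon$ implies that no sample lands in the interval $(\cost, \cost + \epsilon]$. Indeed, suppose some $\obs_i$ satisfies $\post_\hypcls(\obs_i) \in (\cost, \cost + \epsilon]$. Then by Corollary \ref{corollary:bayesopt_subopt_known} the decision is $\dec_i = 1$, so the consistency requirement $\post_\hypcls(\obs_i) \geq \hat\cost$ forces $\post_\hypcls(\obs_i) \geq \hat\cost > \cost + \epsilon$, contradicting $\post_\hypcls(\obs_i) \leq \cost + \epsilon$. By the lower-bound assumption, each sample independently lands in $(\cost, \cost + \epsilon]$ with probability at least $p_\cost \epsilon$, so the probability that none of the $m$ samples does so is at most $(1 - p_\cost \epsilon)^m \leq e^{-p_\cost \epsilon m}$.

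A symmetric argument handles the event $\hat\cost < \cost - \epsilon$: if some $\obs_i$ satisfied $\post_\hypcls(\obs_i) \in [\cost - \epsilon, \cost)$ then $\dec_i = 0$, yet consistency would require $\post_\hypcls(\obs_i) < \hat\cost < \cost - \epsilon$, a contradiction. Thus this event also has probability at most $e^{-p_\cost \epsilon m}$. Taking a union bound gives
\[
\PP\bigl( |\hat\cost - \cost| > \epsilon \bigr) \;\leq\; 2 e^{- p_\cost \epsilon m},
\]
and setting this at most $\delta$ yields the stated bound $m \geq \log(2/\delta) / (p_\cost \epsilon)$.

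The main obstacle is purely conceptual: one must be convinced that $\post_\hypcls$ really does govern the behavior of every optimal hypothesis in $\hypcls$ as cleanly as $\post$ governs the Bayes rule. That content is entirely packaged in Lemma \ref{lemma:induced_postprob} (whose proof uses the monotonicity of $\opt_\dist(\hypcls)$ to show $\overline{\post}_\hypcls = \underline{\post}_\hypcls$) and Corollary \ref{corollary:bayesopt_subopt_known}. Once those are in hand, the argument above is essentially identical to the proof of Theorem \ref{thm:opt_dec}, and no additional technical machinery is required.
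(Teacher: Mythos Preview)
Your proposal is correct and matches the paper's proof essentially verbatim: both invoke Corollary~\ref{corollary:bayesopt_subopt_known} to reduce to the threshold-learning argument of Theorem~\ref{thm:opt_dec}, then bound the probability that no sample lands in $(\cost,\cost+\epsilon]$ (resp.\ $[\cost-\epsilon,\cost)$) by $(1-p_\cost\epsilon)^m \le e^{-p_\cost\epsilon m}$ and union-bound. The only cosmetic difference is that the paper frames it as a single favorable event $E$ while you take the contrapositive on each tail separately.
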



\subsection{Suboptimal decision maker with unknown hypothesis class}
\label{sec:suboptimal_unknown}

We now analyze the case when the decision maker is suboptimal but we are not sure in what manner. We model this by considering a family of hypothesis classes $\hypset$. We assume that the decision maker considers one of these hypothesis classes $\hypcls \in \hypset$ and then chooses a rule $\hyp \in \argmin_{\hyp \in \hypcls} \mathcal{R}_\cost(\hyp)$. This case is more challenging because we may need to identify $\hypcls$ to identify $\cost$.

One natural family $\hypset$ consists of hypothesis classes which depend only on some subset of the features:
\begin{equation}
    \label{eq:feat_subset_fam}
    \hypset_\text{feat} \triangleq \left\{ \hypcls_S \mid S \subseteq \{1, \hdots, n\} \right\} \quad \text{where} \quad \hypcls_S \triangleq \left\{ \hyp(\obs) = f(\obs_{S}) \mid f: \mathbb{R}^{|S|} \to \{0, 1\} \right\}
\end{equation}
where $\obs_S$ denotes only the coordinates of $\obs$ which are in the set $S$. This models a situation where we believe the decision maker may be ignoring some features, but we are not sure which features are being ignored.
Another possibility for $\hypset$ is thresholded linear combinations of the features in $\obs$, i.e.
\begin{equation*}
    \hypset_\text{linear} \triangleq \left\{ \hypcls_w \mid w \in \mathbb{R}^n \right\}
    \quad \text{where} \quad
    \hypcls_w \triangleq \left\{ \hyp(\obs) = \II\{ w^\top \obs \geq b \} \mid b \in \mathbb{R} \right\}.
\end{equation*}
In this case, we assume that the decision maker chooses some weights $w$ for the features arbitrarily but then thresholds the combination optimally. This could model the decision maker under- or over-weighting certain features, or also ignoring some (if $w_j = 0$ for some $j$).

In the high pressure and hectic environment of the ER example, we might assume that the doctor is using only a few pieces of data to decide whether to treat a patient. Here, $\hypset_\text{feat}$ would consist of a hypothesis class with decision rules that depend only on blood pressure and heart rate, a hypothesis class with decision rules that rely on these and also on an ECG, and so on. The difficulty of this setting compared to that of Section \ref{sec:suboptimal_known} is that the doctor could be using an optimal decision rule within any of these hypothesis classes. Thus, we may need to identify what data the doctor is using in their decision rule in order to identify their loss parameter $\cost$.

Estimating the loss parameter $\cost$ in the unknown hypothesis class case requires an additional assumption on the family of hypothesis classes $\hypset$, in addition to the monotonicity assumption from Section \ref{sec:suboptimal_known}.

\begin{defn}
\label{def:min_disagree}
Consider a family of hypothesis classes $\hypset$. Let $\hyp \in \hypcls \in \hypset$ and $\tilde{\hypcls} \in \hypset$. Then the \emph{minimum disagreement} between $\hyp$ and $\tilde{\hypcls}$ is defined as
$\MD(\hyp, \tilde{\hypcls}) \triangleq \inf_{\tilde\hyp \in \tilde{\hypcls}} \PP\big(\tilde\hyp(\Obs) \neq \hyp(\Obs)\big)$.
\end{defn}
\begin{defn}
A family of hypothesis classes $\hypset$ and hypothesis $\hyp_\cost \in \hypcls \in \hypset$ such that $\hyp_\cost \in \argmin_{\hyp \in \hypcls} \mathcal{R}_\cost(\hyp)$ is \emph{$\alpha$-MD-smooth} if $\opt_\dist(\tilde\hypcls)$ is monotone for every $\tilde\hypcls \in \hypset$ and
\begin{equation*}
    \forall \tilde{\hypcls} \in \hypset \quad \forall \cost' \in (0, 1) \qquad \MD(\hyp_{\cost'}, \opt_\dist(\tilde{\hypcls})) \leq (1 + \alpha |\cost' - \cost|) \MD(\hyp_\cost, \opt_\dist(\tilde{\hypcls})).
\end{equation*}
\end{defn}

While MD-smoothness is not particularly intuitive at first, it is necessary in some cases to ensure identifiability of the loss parameter $\cost$.
We present a case in Appendix \ref{sec:md_smooth_counterexample} where a lack of MD-smoothness precludes identifiability.



\begin{restatable}[Unknown suboptimal decision maker]{theorem}{thmsuboptunknown}
\label{thm:subopt_unknown}
Let $\epsilon > 0$ and $\delta$ > 0. Suppose we observe decisions from a decision rule $\hyp_\cost$ which is optimal for loss parameter $\cost$ in hypothesis class $\hypcls \in \hypset$. Let $\hyp_\cost$ and $\hypset$ be $\alpha$-MD-smooth. Furthermore, assume that there exists $p_\cost > 0$ such that for any $\rho \leq \epsilon$, $\PP(\post_\hypcls(\Obs) \in (\cost, \cost + \rho)) \geq p_\cost \rho$ and $\PP(\post_\hypcls(\Obs) \in (\cost - \rho, \cost)) \geq p_\cost \rho$. Let $d \geq \text{VCdim}\left(\cup_{\hypcls \in \hypset} \hypcls \right)$ be an upper bound on the VC-dimension of the union of all the hypothesis classes in $\hypset$.

Let $\hat\hyp_{\hat\cost} \in \argmin_{\hat\hyp \in \hat{\hypcls}} \risk_{\hat\cost}(\hat\hyp)$ be chosen to be consistent with the observed decisions, i.e. $\hat\hyp_{\hat\cost}(x_i) = \dec_i$ for $i = 1, \hdots, m$. Then $| \hat\cost - \cost | \leq \epsilon$ with probability at least $1 - \delta$ as long as the number of samples $m \geq \tilde{O}\left[ \left(\frac{\alpha}{\epsilon} + \frac{1}{\epsilon^2} \right) \left(\frac{d + \log(1 / \delta)}{p_\cost}\right) \right]$.
\end{restatable}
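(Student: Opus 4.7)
The plan is to reduce the statement, as in Theorems \ref{thm:opt_dec} and \ref{thm:subopt_known}, to identifying a threshold on the induced posterior probability $\post_\hypcls$ for the \emph{true} class $\hypcls$, and to use MD-smoothness as the bridge that handles the fact that the estimator uses a possibly different $\hat\hypcls$. First I would invoke standard VC machinery on the union class $\bigcup_{\hypcls' \in \hypset} \hypcls'$ of VC-dimension at most $d$. The realizable PAC bound, applied to the fact that $\hat\hyp_{\hat\cost}$ agrees with the $h_c$-labels on all $m$ samples, gives $\PP(\hat\hyp_{\hat\cost}(\Obs)\ne \hyp_\cost(\Obs)) \le \eta_1 := \tilde O((d+\log(1/\delta))/m)$ with probability at least $1-\delta/2$; a companion two-sided uniform bound controls sample-vs-population disagreement between arbitrary pairs in $\bigcup \hypcls'$ at the slower rate $\eta_2 := \tilde O(\sqrt{(d+\log(1/\delta))/m})$.

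Next I would convert Step~1 into a bound on the MD of the estimator's class. Since $\hat\hyp_{\hat\cost} \in \opt_\dist(\hat\hypcls)$, the realizable bound yields $\MD(\hyp_\cost,\opt_\dist(\hat\hypcls)) \le \eta_1$. Applying $\alpha$-MD-smoothness with $\cost' = \hat\cost$ and $\tilde\hypcls = \hat\hypcls$ then gives
\begin{equation*}
  \MD(\hyp_{\hat\cost},\opt_\dist(\hat\hypcls)) \le (1+\alpha|\hat\cost-\cost|)\,\eta_1,
\end{equation*}
so for any $\tau>0$ there exists $\tilde\hyp^\ast \in \opt_\dist(\hat\hypcls)$ with $\PP(\tilde\hyp^\ast \ne \hyp_{\hat\cost}) \le (1+\alpha|\hat\cost-\cost|)\eta_1+\tau$. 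This is where MD-smoothness does the real work: it guarantees that some rule in $\opt_\dist(\hat\hypcls)$ also approximates the \emph{counterfactual} optimal rule $\hyp_{\hat\cost}$ in $\hypcls$, not just the observed $\hyp_\cost$.

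Now I would compare the two true-class rules $\hyp_\cost$ and $\hyp_{\hat\cost}$ via the triangle
\begin{equation*}
  \PP(\hyp_\cost \ne \hyp_{\hat\cost}) \le \PP(\hyp_\cost \ne \hat\hyp_{\hat\cost}) + \PP(\hat\hyp_{\hat\cost} \ne \tilde\hyp^\ast) + \PP(\tilde\hyp^\ast \ne \hyp_{\hat\cost}).
\end{equation*}
The outer terms are bounded by $\eta_1$ and $(1+\alpha|\hat\cost-\cost|)\eta_1$ respectively. For the middle term I would use the empirical-to-population bound $\eta_2$: since $\hat\hyp_{\hat\cost}$ matches $\hyp_\cost$ on every sample, empirically $\hat\PP(\hat\hyp_{\hat\cost} \ne \tilde\hyp^\ast) = \hat\PP(\hyp_\cost \ne \tilde\hyp^\ast)$, and uniform convergence over $\bigcup \hypcls'$ bounds this by $\PP(\hyp_\cost \ne \tilde\hyp^\ast) + \eta_2 \le \PP(\hyp_\cost \ne \hyp_{\hat\cost}) + (1+\alpha|\hat\cost-\cost|)\eta_1 + \eta_2$. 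Rearranging and using $|\hat\cost-\cost|\le 1$ gives a closed-form bound of the form $\PP(\hyp_\cost \ne \hyp_{\hat\cost}) \lesssim (1+\alpha)\eta_1 + \eta_2$. Finally, by Corollary~\ref{corollary:bayesopt_subopt_known} both $\hyp_\cost$ and $\hyp_{\hat\cost}$ are thresholds on $\post_\hypcls$ at $\cost$ and $\hat\cost$, so the density hypothesis forces $\PP(\hyp_\cost \ne \hyp_{\hat\cost}) \ge p_\cost \epsilon$ whenever $|\hat\cost-\cost|>\epsilon$. Contradicting this requires $(1+\alpha)\eta_1 + \eta_2 < p_\cost\epsilon$, which, after solving for $m$, produces precisely the two terms $\alpha/\epsilon$ (from $\eta_1 \cdot \alpha$) and $1/\epsilon^2$ (from $\eta_2$) in the stated sample complexity.

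The hard part will be Step~3: cleanly controlling $\PP(\hat\hyp_{\hat\cost} \ne \tilde\hyp^\ast)$ when $\tilde\hyp^\ast$ is only defined implicitly as a near-infimum in $\opt_\dist(\hat\hypcls)$. The monotonicity of $\opt_\dist(\hat\hypcls)$ is essential here — it ensures $\tilde\hyp^\ast$ and $\hat\hyp_{\hat\cost}$ are both thresholds on $\post_{\hat\hypcls}$, so their disagreement is measurable by a single VC event — but the argument must combine empirical consistency with the agnostic-rate uniform bound without becoming circular, and this is what forces the slower $\eta_2$ term and the $1/\epsilon^2$ factor alongside the faster MD-smoothness-driven $\alpha/\epsilon$ term.
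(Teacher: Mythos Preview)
Your Step~3 is circular and the ``rearranging'' step collapses. Substituting your own bound for the middle term into the triangle inequality gives
\[
\PP(\hyp_\cost \ne \hyp_{\hat\cost}) \;\le\; \eta_1 \;+\; \bigl[\PP(\hyp_\cost \ne \hyp_{\hat\cost}) + (1+\alpha|\hat\cost-\cost|)\eta_1 + \eta_2\bigr] \;+\; (1+\alpha|\hat\cost-\cost|)\eta_1,
\]
so $\PP(\hyp_\cost \ne \hyp_{\hat\cost})$ appears on the right with coefficient exactly $1$ and cancels, leaving the vacuous $0 \le \text{(positive stuff)}$. The underlying problem is structural: $\tilde\hyp^\ast$ is chosen close to $\hyp_{\hat\cost}$, not to $\hyp_\cost$, so every route from $\PP(\hyp_\cost \ne \tilde\hyp^\ast)$ or $\PP(\hat\hyp_{\hat\cost}\ne\tilde\hyp^\ast)$ back to known quantities re-introduces $\PP(\hyp_\cost \ne \hyp_{\hat\cost})$. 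You flag this danger in your last paragraph but the written argument does not escape it, and I do not see a way to close the loop while keeping disagreement probability as the sole currency.

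The paper sidesteps this entirely by never trying to bound $\PP(\hyp_\cost \ne \hyp_{\hat\cost})$. Instead it fixes a \emph{deterministic} auxiliary cost $\cost' = \cost + \epsilon/2$ (so MD-smoothness is applied with the known gap $\epsilon/2$, not the unknown $|\hat\cost-\cost|$), finds via MD-smoothness a $\hat\hyp_{\hat\cost'}\in\opt_\dist(\hat\hypcls)$ close to $\hyp_{\cost'}$, and then compares the two ``slab'' events $A=\{\hyp_\cost=1,\hyp_{\cost'}=0\}$ and $B=\{\hat\hyp_{\hat\cost}=1,\hat\hyp_{\hat\cost'}=0\}$. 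The key tool is Lemma~\ref{lemma:condprob}: because $\hat\hyp_{\hat\cost}$ and $\hat\hyp_{\hat\cost'}$ are both optimal in $\hat\hypcls$, the conditional label probability on $B$ is at least $\hat\cost$; and on $A$ it is at most $\cost'$. A short overlap lemma bounds $|\PP(\Lbl=1\mid A)-\PP(\Lbl=1\mid B)|$ by $\PP(A\triangle B)/\min(\PP(A),\PP(B))$, and the realizable PAC bound controls $\PP(A\triangle B)$ directly (it is at most $\PP(\hyp_\cost\ne\hat\hyp_{\hat\cost})+\PP(\hyp_{\cost'}\ne\hat\hyp_{\hat\cost'})$). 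This extracts $\hat\cost$ from the optimality of $\hat\hyp_{\hat\cost}$ in $\hat\hypcls$ without ever comparing $\hat\hyp_{\hat\cost}$ to the counterfactual $\hyp_{\hat\cost}$, which is what breaks the circularity.
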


Theorem \ref{thm:subopt_unknown} requires more decision samples to guarantee low estimation error $|\hat\cost - \cost|$. Unlike Theorems \ref{thm:opt_dec} and \ref{thm:subopt_known}, the number of samples needed grow with the square of the desired precision $1/\epsilon^2$. There is also a dependence on the VC-dimension of the hypothesis classes $\hypcls \in \hypset$, since we are not sure which one the decision maker is considering.

Since our results in this section are highly general, it may be difficult to see how they apply to concrete cases. In Appendix \ref{sec:feat_hypset}, we explore the specific case of IDT in the unknown hypothesis class setting for $\hypset_\text{feat}$ as defined in (\ref{eq:feat_subset_fam}). We give sufficient conditions for MD-smoothness to hold and show that the sample complexity grows only logaramithically with $n$, the dimension of the observation space $\obsspace$, if the decision maker is relying on a sparse set of features.

\subsection{Lower bounds}
\label{sec:lower_bounds}

Is there any algorithm which can always determine the loss parameter $\cost$ to precision $\epsilon$ with high probability using fewer samples than required by Theorems \ref{thm:opt_dec} and \ref{thm:subopt_known}? We show that the answer is no: our previously given sample complexity rates are minimax optimal up to constant factors. We formalize this by considering any generic IDT algorithm, which we represent as a function $\hat\cost: (\obsspace \times \{0, 1\})^m \to (0, 1)$. The algorithm maps the sample of observations and decisions $\sample$ to an estimated loss parameter $\hat\cost(\sample)$. The algorithm also takes as input the distribution $\dist$ and in the suboptimal cases the hypothesis class $\hypcls$ or family of hypothesis classes $\hypset$, but we leave this dependence implicit in our notation. First, we consider the optimal (Theorem \ref{thm:opt_dec}) and known suboptimal (Theorem \ref{thm:subopt_known}) cases; since these are nearly identical, we focus on the optimal case.

\begin{restatable}[Lower bound for optimal decision maker]{theorem}{theoremoptimallower}
\label{thm:optimal_lower}
Fix $0 < \epsilon < \nicefrac{1}{4}$, $0 < \delta \leq \nicefrac{1}{2}$, and $0 < p_\cost \leq \nicefrac{1}{8 \epsilon}$. Then for any IDT algorithm $\hat{\cost}(\cdot)$, there exists a decision problem $(\dist, \cost)$ satisfying the conditions of Theorem \ref{thm:subopt_known} such that $m < \frac{\log(\nicefrac{1}{2 \delta})}{8 p_\cost \epsilon}$ implies that $\PP(| \hat\cost(\sample) - \cost | \geq \epsilon) > \delta$.
\end{restatable}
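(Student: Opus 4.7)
The plan is to apply Le Cam's two-point method. I will exhibit two decision problems $(\dist, c_1)$ and $(\dist, c_2)$ sharing the same observation distribution but with loss parameters separated by $c_2 - c_1 = 2\epsilon$, so that no estimator $\hat\cost$ can satisfy $|\hat\cost - c_i| < \epsilon$ simultaneously for $i = 1, 2$. A bound on the total variation distance between the induced $m$-sample laws will then force a constant error probability on at least one of the problems whenever $m$ is too small.

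For the construction, take $\obsspace = [0, 1]$ and pick any $c_0 \in (2\epsilon, 1 - 2\epsilon)$ (possible since $\epsilon < \nicefrac{1}{4}$); set $c_1 = c_0 - \epsilon$ and $c_2 = c_0 + \epsilon$. Under $\dist$, let $\Obs$ have density $p_\cost$ on $[c_0 - 2\epsilon, c_0 + 2\epsilon]$ with the remaining mass placed at $\obs = 0$ (total mass $4 p_\cost \epsilon \leq \nicefrac{1}{2}$, valid because $p_\cost \leq \nicefrac{1}{(8\epsilon)}$), and let $\Lbl \mid \Obs \sim \mathrm{Bernoulli}(\Obs)$, so $\post(\obs) = \obs$. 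For either $c_i$, both windows $(c_i, c_i + \epsilon]$ and $[c_i - \epsilon, c_i)$ lie inside the uniform region and each carry mass exactly $p_\cost \epsilon$, so both $(\dist, c_i)$ satisfy the density hypothesis of Theorem~\ref{thm:opt_dec}.

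By Lemma~\ref{lemma:bayesopt}, the Bayes-optimal decisions under $c_1$ and $c_2$ coincide except on the disagreement set $A = \{\post(\Obs) \in [c_1, c_2)\}$, which has $\dist$-probability $q = 2 p_\cost \epsilon$. Because decisions are deterministic given $\Obs$, the joint laws $P_i$ of $(\Obs, \Dec)$ agree off $A$ and are mutually singular on $A$, giving $\mathrm{TV}(P_1, P_2) = q$; for the i.i.d.\ sample, the product laws are identical on the event that no $\obs_i$ lands in $A$ and singular otherwise, so $\mathrm{TV}(P_1^{\otimes m}, P_2^{\otimes m}) = 1 - (1 - q)^m$.

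Finally, $\{\hat\cost(\sample) \geq c_0\} \subseteq \{|\hat\cost - c_1| \geq \epsilon\}$ and its complement is contained in $\{|\hat\cost - c_2| \geq \epsilon\}$, so Le Cam's inequality yields $\PP_1(|\hat\cost - c_1| \geq \epsilon) + \PP_2(|\hat\cost - c_2| \geq \epsilon) \geq (1-q)^m$. Using $\log(1/(1-q)) \leq 2q$ for $q \leq \nicefrac{1}{2}$ (valid here since $q \leq \nicefrac{1}{4}$), the hypothesis $m < \log(\nicefrac{1}{(2\delta)})/(8 p_\cost \epsilon)$ forces $m \log(1/(1-q)) \leq 4 m p_\cost \epsilon < \log(\nicefrac{1}{(2\delta)})/2$, whence $(1-q)^m > \sqrt{2\delta}$. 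Therefore the maximum of the two failure probabilities exceeds $\sqrt{2\delta}/2 \geq \delta$ for $\delta \leq \nicefrac{1}{2}$, and we may take $\cost$ to be whichever $c_i$ attains this maximum. I anticipate no major technical obstacle: this is a textbook two-point argument, and the only delicate point is tracking the constants to produce the factor $8$ in the denominator, which works out via $\sqrt{2\delta} \geq 2\delta$ under the stated range of $\delta$.
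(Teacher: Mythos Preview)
Your proposal is correct and is essentially the paper's own argument: the same two-point construction (uniform density $p_\cost$ on a $4\epsilon$-window with $\post(\obs)=\obs$, two thresholds $2\epsilon$ apart), followed by the observation that the two sample laws coincide whenever no observation lands in the disagreement region, which has probability $(1-q)^m$. The only cosmetic difference is that you phrase the last step via Le Cam/total variation while the paper conditions directly on the event $E$ that all samples avoid the informative interval; both routes yield the same constant $8p_\cost\epsilon$ in the denominator and the same conclusion.
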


\begin{restatable}[Lack of uncertainty precludes identifiability]{corollary}{corollarycertainlower}
\label{corollary:certain_lower}
Fix $0 < \epsilon < 1/4$ and suppose a decision problem $(\dist, \cost)$ has no uncertainty. Then for any IDT algorithm $\hat\cost(\cdot)$, there is a loss parameter $\cost$ and hypothesis class $\hypcls$ such that for any sample size $m$, $\PP(| \hat\cost(\sample) - \cost | \geq \epsilon) \geq 1/2$.
\end{restatable}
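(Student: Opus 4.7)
The plan is to use a two-point minimax lower bound (à la Le Cam). The crucial observation is that when the decision problem has no uncertainty, the parameter $\cost$ becomes completely invisible in the distribution of observed decisions, so no algorithm—regardless of sample size—can localize it.

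First, I would invoke Definition \ref{defn:uncertainty} and Lemma \ref{lemma:bayesopt} to show that when $\post(\Obs) \in \{0, 1\}$ almost surely, the Bayes optimal decision rule $\hyp_\cost(\obs) = \II\{\post(\obs) \geq \cost\}$ reduces to $\hyp_\cost(\obs) = \post(\obs)$ for every $\cost \in (0, 1)$, because $\II\{0 \geq \cost\} = 0$ and $\II\{1 \geq \cost\} = 1$ whenever $\cost \in (0,1)$. Taking $\hypcls$ to be the class of all decision rules, this means the optimal decision rule is the same deterministic function of $\obs$ regardless of the true $\cost$.

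Next, since the sample $\sample = \{(\obs_i, \hyp_\cost(\obs_i))\}_{i=1}^m$ is generated from $\obs_i \sim \dist$ i.i.d. followed by the deterministic map $\hyp_\cost$, and this map does not depend on $\cost \in (0,1)$, the distribution of $\sample$—and hence the distribution of $\hat\cost(\sample)$—is identical for every $\cost \in (0,1)$. I would then choose two candidate loss parameters $\cost_1 = \tfrac{1}{4}$ and $\cost_2 = \tfrac{3}{4}$. Because $\epsilon < \tfrac{1}{4}$, the intervals $(\cost_1 - \epsilon, \cost_1 + \epsilon)$ and $(\cost_2 - \epsilon, \cost_2 + \epsilon)$ are disjoint, so the events $A_j = \{|\hat\cost(\sample) - \cost_j| < \epsilon\}$ for $j = 1, 2$ are disjoint. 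Writing $\PP_j$ for the probability when the true parameter is $\cost_j$, the identical-distribution argument gives $\PP_1(A_1) = \PP_2(A_1)$ and $\PP_1(A_2) = \PP_2(A_2)$, and disjointness yields $\PP_1(A_1) + \PP_2(A_2) = \PP_1(A_1) + \PP_1(A_2) \leq 1$. Therefore at least one of $\PP_1(A_1), \PP_2(A_2)$ is $\leq \tfrac{1}{2}$, and the corresponding $\cost_j$ satisfies $\PP(|\hat\cost(\sample) - \cost_j| \geq \epsilon) \geq \tfrac{1}{2}$.

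There is no significant obstacle here: the entire argument hinges on a one-line observation—that removing uncertainty collapses the family of Bayes optimal rules indexed by $\cost$ to a single rule—after which the two-point testing bound is routine. The only care needed is in the choice of $\cost_1, \cost_2$ to be valid (i.e., in $(0,1)$) and separated by more than $2\epsilon$, which is guaranteed by the hypothesis $\epsilon < \tfrac{1}{4}$.
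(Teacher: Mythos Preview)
Your proposal is correct and follows essentially the same approach as the paper: both argue that under no uncertainty the Bayes optimal rule $\II\{\post(\obs)\geq\cost\}$ collapses to a single rule independent of $\cost\in(0,1)$, so the sample distribution is identical across all $\cost$, and then apply a two-point indistinguishability argument. The only cosmetic differences are your choice of $\cost_1=\tfrac14,\cost_2=\tfrac34$ versus the paper's $\tfrac12\pm\epsilon$, and your phrasing via disjoint acceptance intervals rather than the paper's split at the midpoint $\tfrac12$.
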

Corollary \ref{corollary:certain_lower} shows that a lack of uncertainty in the decision problem means that no algorithm can learn the loss parameter $\cost$ to a non-trivial precision with high probability. Thus, uncertainty is \emph{required} for IDT to learn the loss parameter $\cost$. Since $\cost$ represents the preferences of the decision maker, \emph{decisions made under certainty do not reveal precise preference information}.
In Appendix \ref{sec:subopt_unknown_additional}, we explore lower bounds for the unknown suboptimal case (Section \ref{sec:suboptimal_unknown} and Theorem \ref{thm:subopt_unknown}).

\section{Discussion}
\label{sec:discussion}

Now that we have thoroughly analyzed IDT, we explore its applications, implications, and limitations.

\subsection{IDT for fine-grained loss functions with applications to fairness}
\label{sec:fairness}
First, we discuss an extension of IDT to loss functions which depend not only on the chosen decision $\Dec = \hyp(\Obs)$ and the ground truth $\Lbl$, but on the observation $\Obs$ as well. In particular, we extend the formulation of IDT from Section \ref{sec:problem} to include loss functions which depend on the observations via a ``sensitive attribute'' $\Att \in \attspace$. We denote the value of the sensitive attribute for an observation $\obs$ by $\att(\obs)$. We again assume that the decision maker chooses the optimal decision rule for this extended loss function:
\begin{equation}
    \label{eq:att_risk}
    \textstyle \hyp \in \argmin_\hyp \EE_{(\Obs, \Lbl) \sim \dist}[\ell(\hyp(\Obs), \Lbl, \att(\Obs))].
\end{equation}
This optimal decision rule $\hyp \in \hypcls$ is equivalent to a set of decision rules for every value of $\Att$, each of which is chosen to minimize the conditional risk for observations with that attribute value:
\begin{equation*}
    \hyp(\obs) = \hyp_{\att(\obs)}(\obs)
    \quad \text{where} \quad
    \hyp_\att \in \argmin_\hyp \EE_{(\Obs, \Lbl) \sim \dist}[\ell(\hyp(\Obs), \Lbl, \att) \mid \att(\Obs) = \att].
\end{equation*}
In this formulation, each attribute-specific decision rule $\hyp_\att$ minimizes an expected loss which only depends on the made and correct decisions $\hyp(\Obs)$ and $\Lbl$ over a conditional distribution.
Thus, we can split a sample of decisions into samples for each value of the sensitive attribute and perform IDT separately. This will result in a loss parameter estimate $\hat\cost_\att$ for each value of $\att$.

Once we have estimated loss parameters for each value of $\Att$, we may ask if the decision maker is applying the same loss function across all such values, i.e. if $\cost_\att = \cost_{\att'}$ for any $\att, \att' \in \attspace$. If the loss function is not identical for all values of $\Att$, i.e. if $\cost_\att \neq \cost_{\att'}$, then one might conclude that the decision maker is unfair or discriminatory against observations with certain values of $\Att$. For instance, in the ER example, we might be concerned if the doctor is using different loss functions for patients with and without insurance. Concepts like these have received extensive treatment in the machine learning fairness literature, which studies criteria for when a decision rule can be considered ``fair.'' One such fairness criterion is that of group calibration, also known as sufficiency \citep{kleinberg_inherent_2016,liu_implicit_2019,barocas_fairness_2019}:

\begin{restatable}{defn}{defnsufficiency}
A decision rule $\hyp: \obsspace \to \{0, 1\}$ for a distribution $(\Obs, \Lbl) \sim \dist$ satisfies the \emph{group calibration/sufficiency} fairness criterion if there is a function $r: \obsspace \to \mathbb{R}$ and threshold $t \in \mathbb{R}$ such that $\hyp(\obs) = \II\{r(\obs) \geq t\}$ and $r$ satisfies $\Lbl \indep \Att \mid r(\Obs)$.
\end{restatable}
Testing for group calibration is known to be difficult because of the problem of infra-marginality \citep{simoiu_problem_2017}. While complex Bayesian models have previously been used to perform a ``threshold test'' for group calibration, we can use IDT to directly test this criterion in an observed decision maker:
\begin{restatable}[Equal loss parameters imply group calibration]{lemma}{lemmafairness}
\label{lemma:fairness}
Let $\hyp$ be chosen as in (\ref{eq:att_risk}) where $\ell(\dec, \lbl, \att) = \cost_\att$ if $\dec = 1$ and $\lbl = 0$, $\ell(\dec, \lbl, \att) = 1 - \cost_\att$ if $\dec = 0$ and $\lbl = 1$, and $\ell(\dec, \lbl, \att) = 0$ otherwise.
Then $\hyp$ satisfies group calibration (sufficiency) if $\cost_\att = \cost_{\att'}$ for every $\att, \att' \in \attspace$.

Conversely, if there exist $\att, \att' \in \attspace$ such that $\cost_\att \neq \cost_\att'$ and $\PP(\post(\Obs) \in (\cost_\att, \cost_{\att'})) > 0$, then $\hyp$ does not satisfy group calibration.
\end{restatable}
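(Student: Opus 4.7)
For the forward direction, when $\cost_\att = \cost$ for every $\att \in \attspace$, Lemma~\ref{lemma:bayesopt} applied to each attribute-specific sub-problem collapses $\hyp$ into the single Bayes classifier $\hyp(\obs) = \II\{\post(\obs) \geq \cost\}$; the choice $r = \post$ and $t = \cost$ then realises $\hyp(\obs) = \II\{r(\obs) \geq t\}$. The conditional independence $\Lbl \indep \Att \mid \post(\Obs)$ follows from a tower-property computation: since $\EE[\II\{\Lbl = 1\} \mid \Obs] = \post(\Obs)$,
$$\PP(\Lbl = 1, \Att \in B \mid \post(\Obs) = p) = \EE\bigl[\II\{\Att \in B\}\,\post(\Obs) \,\big|\, \post(\Obs) = p\bigr] = p \cdot \PP(\Att \in B \mid \post(\Obs) = p),$$
which combined with $\PP(\Lbl = 1 \mid \post(\Obs) = p) = p$ gives the factorisation witnessing conditional independence (the $\Lbl = 0$ case is symmetric).

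For the converse, assume WLOG $\cost_\att < \cost_{\att'}$, and suppose for contradiction that some $(r,t)$ satisfies $\hyp = \II\{r \geq t\}$ together with $\Lbl \indep \Att \mid r(\Obs)$. Define $\phi(v) \triangleq \PP(\Lbl = 1 \mid r(\Obs) = v)$; by sufficiency and the tower property, $\phi(v) = \EE[\post(\Obs) \mid r(\Obs) = v, \Att = a]$ for every $a$ with positive conditional mass at $v$. Bayes-optimality of $\hyp$ within each group then yields two pointwise bounds: for $v \geq t$ in the conditional support of $r$ given $\Att = \att'$, every such observation has $\post \geq \cost_{\att'}$, so $\phi(v) \geq \cost_{\att'}$; symmetrically, for $v < t$ in the conditional support given $\Att = \att$, $\phi(v) < \cost_\att$.

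The positive-mass hypothesis $\PP(\post(\Obs) \in (\cost_\att, \cost_{\att'})) > 0$ is the engine of the contradiction: middle-region observations with $\Att = \att$ are pushed to the positive side $r \geq t$ even though their posteriors lie strictly below $\cost_{\att'}$, and those with $\Att = \att'$ are pushed to the negative side $r < t$ even though their posteriors lie strictly above $\cost_\att$. The plan is to locate a value $v$ near $t$ at which the two pointwise bounds on $\phi(v)$ collide: $\Att = \att$ middle-region observations at $v$ force $\EE[\post \mid r(\Obs) = v, \Att = \att] < \cost_{\att'}$, whereas $\Att = \att'$ observations at the same $v$ force $\phi(v) \geq \cost_{\att'}$, contradicting $\phi(v) = \EE[\post \mid r(\Obs) = v, \Att = \att]$ by sufficiency. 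The hard part will be the support-overlap step: the pointwise bounds on $\phi$ only bite at $v$-values reached from both groups, so one must verify that a single threshold $t$ on $r$ cannot simultaneously realise two distinct posterior thresholds $\cost_\att, \cost_{\att'}$ once the posterior genuinely takes intermediate values between them. This is essentially where the positive-mass hypothesis is indispensable, and formalising it is the measure-theoretic crux of the argument.
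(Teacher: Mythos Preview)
Your forward direction is essentially the paper's, with a cosmetic difference: you take $r = \post$ and $t = \cost$, while the paper sets $r(\obs) = \post(\obs) + \hyp(\obs)$ and $t = \cost + 1$. The point of the paper's choice is that an optimal $\hyp$ is determined by $\post$ only away from $\{\post(\obs) = \cost\}$; on that set $\hyp$ may take either value, so $\II\{\post(\obs) \geq \cost\}$ need not coincide with $\hyp(\obs)$ pointwise. The shift $r = \post + \hyp$ absorbs this ambiguity. Your tower-property argument for $\Lbl \indep \Att \mid \post(\Obs)$ is correct and a bit slicker than the paper's case split.

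Your converse is genuinely incomplete, and the route you sketch is harder than the paper's. Even setting aside the support-overlap issue you flag, the contradiction you aim for does not clearly follow: the presence of some middle-region $\Att=\att$ observations at a value $v \geq t$ does not force $\EE[\post(\Obs) \mid r(\Obs)=v, \Att=\att] < \cost_{\att'}$, since other $\Att=\att$ observations with $\post \geq \cost_{\att'}$ may also land at $v$ and pull the average up. The paper avoids all of this by conditioning not on a point value of $r$ but on the \emph{event} $\{r(\Obs) > t\}$ (equivalently $\{\hyp(\Obs)=1\}$). Given $\Att=\att$ this event coincides with $\{\post(\Obs) > \cost_\att\}$, and given $\Att=\att'$ with $\{\post(\Obs) > \cost_{\att'}\}$; the paper then runs an inequality chain comparing $\PP(\Lbl=1 \mid \Att=\att, r(\Obs)>t)$ with $\PP(\Lbl=1 \mid \Att=\att', r(\Obs)>t)$, in which the positive-mass hypothesis $\PP(\post(\Obs) \in (\cost_\att, \cost_{\att'})) > 0$ supplies the strict step. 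Working with the whole decision region rather than with fibres of $r$ removes any need to match supports across groups.
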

If we can estimate $\cost_\att$ for a decision rule $\hyp$ for each $\att \in \attspace$, then Lemma \ref{lemma:fairness} allows us to immediately determine if $\hyp$ satisfies sufficiency.
The minimax guarantees on the accuracy of IDT may make this approach more attractive than the Bayesian threshold test in many scenarios.


\subsection{Suboptimal decision making with and without uncertainty}
\label{sec:uncertain_certain}
We have so far compared the effect of decisions made with and without uncertainty on the \emph{identifiability} of preferences; here, we argue that uncertainty also allows for much more expressive models of \emph{suboptimality} in decision making.
In decisions made with certainty, suboptimality can generally only take two forms: either the decision maker is noisy and sometimes randomly makes incorrect decisions, or the decision maker is systematically suboptimal and always makes the wrong decision. Neither seems realistic in the ER doctor example: we would not expect to the doctor to randomly choose not to treat some patients who are clearly having heart attacks, and certainly not expect them to \emph{never} treat patients having heart attacks. In contrast, the models of suboptimality we have presented for uncertain decisions allow for much more rich and realistic forms of suboptimal decision making, like ignoring certain data or over-/under-weighting evidence. We expect that there are similarly more rich forms of suboptimality for uncertain sequential decision problems.

\subsection{Limitations and future work}
\label{sec:limitations}
While this study sheds significant light on preference learning for uncertain humans, there are some limitations that may be addressed by future work. First, while we assume the data distribution $\dist$ of observations $\Obs$ and ground truth decisions $\Lbl$ is known, this is rarely satisfied in practice. However, statistics is replete with methods for estimating properties of a data distribution given samples from it. Such methods are  beyond the scope of this work, which focuses on the less-studied problem of inferring a decision maker's loss function. Our work also lacks computational analysis of algorithms for performing IDT. However, such algorithms are likely straightforward; we decide to focus on the statistical properties of IDT, which are more relevant for preference learning in general.
Finally, we assume in this work that the decision maker is maximizing expected utility (EU), or equivalently minimizing expected loss. In reality, human decision making may not agree with EU theory; alternative models of decision making under uncertainty such as prospect theory are discussed in the behavioral economics literature \citep{kahneman_prospect_1979}. Some work has applied these models to statistical learning \citep{leqi_human-aligned_2019}, but we leave their implications for IDT to future work.


\section{Conclusion and Societal Impact}
\label{sec:conclusion}
We have presented an analysis of preference learning for uncertain humans through the setting of inverse decision theory. Our principle findings are that decisions made under uncertainty can reveal more preference information than obvious ones; and, that uncertainty can alleviate underspecification in preference learning, even in the case of suboptimal decision making. We hope that this and other work on preference learning will lead to AI systems which better understand human preferences and can thus better fulfill them. However, improved understanding of humans could also be applied by malicious actors to manipulate people or invade their privacy. Additionally, building AI systems which learn from human decisions could reproduce racism, sexism, and other harmful biases which are widespread in human decision-making.
Despite these concerns, understanding human preferences is important for the long-term positive impact of AI systems. Our work shows that uncertain decisions can be a valuable source of such preference information.

\begin{ack}
We would like to thank Kush Bhatia for valuable discussions, Meena Jagadeesan, Sam Toyer, and Alex Turner for feedback on drafts, and the NeurIPS reviewers for helping us improve the clarity of the paper. This research was supported by the Open Philanthropy Foundation. Cassidy Laidlaw is also supported by a National Defense Science and Engineering Graduate (NDSEG) Fellowship.
\end{ack}

\small

\bibliographystyle{unsrtnat}
\bibliography{paper}

\newpage
\appendix
\allowdisplaybreaks
{\bf \Large Appendix}

\section{Proofs}
\label{sec:proofs}

\subsection{Proof of Lemma \ref{lemma:costmat}}
\lemmacostmat*
\begin{proof}
\label{proof:costmat}
Let $a = \costmat_{1 0} + \costmat_{0 1} - \costmat_{0 0} - \costmat_{1 1}$ and $b = \PP(\Lbl = 0) \costmat_{0 0} + \PP(\Lbl = 1) \costmat_{1 1}$. Then
\begin{align*}
    \risk_\costmat(\hyp) \\
    \quad = \;& \PP(\hyp(\Obs) = 0 \wedge \Lbl = 0) \costmat_{0 0} + \PP(\hyp(\Obs) = 1 \wedge \Lbl = 0) \costmat_{1 0} \\
    & + \PP(\hyp(\Obs) = 0 \wedge \Lbl = 1) \costmat_{0 1} + \PP(\hyp(\Obs) = 1 \wedge \Lbl = 1) \costmat_{1 1} \\
    = \;& \PP(\hyp(\Obs) = 1 \wedge \Lbl = 0) (\costmat_{1 0} - \costmat_{0 0}) + \PP(\Lbl = 0) \costmat_{0 0} \\
    & + \PP(\hyp(\Obs) = 0 \wedge \Lbl = 1) (\costmat_{0 1} - \costmat_{1 1}) + \PP(\Lbl = 1) \costmat_{1 1} \\
    = \;& \PP(\hyp(\Obs) = 1 \wedge \Lbl = 0) (\costmat_{1 0} - \costmat_{0 0}) + \PP(\hyp(\Obs) = 0 \wedge \Lbl = 1) (\costmat_{0 1} - \costmat_{1 1}) + b \\
    = \;& (\costmat_{1 0} + \costmat_{0 1} - \costmat_{0 0} - \costmat_{1 1}) \left( \PP(\hyp(\Obs) = 1 \wedge \Lbl = 0) \frac{\costmat_{1 0} - \costmat_{0 0}}{\costmat_{1 0} + \costmat_{0 1} - \costmat_{0 0} - \costmat_{1 1}} \right. \\
    & \left. + \PP(\hyp(\Obs) = 0 \wedge \Lbl = 1) \frac{\costmat_{0 1} - \costmat_{1 1}}{\costmat_{1 0} + \costmat_{0 1} - \costmat_{0 0} - \costmat_{1 1}} \right) + b \\
    = \;& a (\PP(\hyp(\Obs) = 1 \wedge \Lbl = 0) \cost + \PP(\hyp(\Obs) = 0 \wedge \Lbl = 1) (1 - \cost)) + b \\
    = \;& a \risk_{\costmat'}(\hyp) + b.
\end{align*}
\end{proof}

\subsection{Proof of Lemma \ref{lemma:bayesopt}}
\lemmabayesopt*

This result is well-known \citep{devroye_probabilistic_2013} but we include a proof here for completeness.

\begin{proof}
\label{proof:bayesopt}
Let $\hyp(\obs) = \II\{\post(\obs) \geq \cost\}$ and let $\tilde{\hyp}: \obsspace \to \{0, 1\}$ be any other decision rule. We will show that not only is $\hyp$ an optimal decision rule, but in fact that if $\PP(\hyp(\Obs) \neq \tilde{\hyp}(\Obs) \wedge \post(\Obs) \neq c) > 0$, then $\risk_c(\tilde{\hyp}) > \risk_c(\hyp)$; that is, $\tilde{\hyp}$ is strictly suboptimal. Thus, any optimal decision rule $\hyp^*$ must satisfy $\hyp(\obs) = \hyp^*(\obs)$ almost surely except where $\post(\obs) = c$.

First, let's define the \emph{conditional risk of $\hyp$ at $\obs$}, denoted by $\risk_\cost(\hyp \mid \Obs = \obs)$:
\begin{equation*}
    \risk_\cost(\hyp \mid \Obs = \obs)
    = \cost \, \PP(\hyp(\Obs) = 1 \wedge \Lbl = 0 \mid \Obs = \obs) + (1 - \cost) \, \PP(\hyp(\Obs) = 0 \wedge \Lbl = 1 \mid \Obs = \obs).
\end{equation*}
Note that one of the two terms is always zero, depending on whether $\hyp(\Obs)$ is 0 or 1, since $\hyp(\Obs)$ is deterministic given $\Obs$. The risk of $\hyp$ is the expectation of the conditional risk:
\begin{equation*}
    \risk_\cost(\hyp) = \EE_{\Obs \sim \dist_\obs}[\risk_\cost(\hyp \mid \Obs = \obs)].
\end{equation*}
We can bound the conditional risk for the optimal decision rule $\hyp$:
\begin{align}
    \risk_\cost(\hyp \mid \obs) & = \begin{cases}
        \cost \PP(\Lbl = 0 \mid \Obs = \obs) \quad & \post(\obs) \geq \cost \\
        (1 - \cost) \PP(\Lbl = 1 \mid \Obs = \obs) \quad & \post(\obs) < \cost
    \end{cases} \nonumber \\
    & = \begin{cases}
        \cost (1 - \post(\obs)) \quad & \post(\obs) \geq \cost \\
        (1 - \cost) \post(\obs) \quad & \post(\obs) < \cost
    \end{cases} \nonumber \\
    & \leq \cost (1 - \cost). \label{eq:conditional_risk_optimal}
\end{align}

Now, consider the conditional risk for the other decision rule $\tilde{\hyp}$ at $\obs$. First, suppose $\tilde{\hyp}(\obs) = \hyp(\obs)$; that is, the decision rule agrees with the optimal one. Then clearly $\risk_\cost(\tilde{\hyp} \mid \Obs = \obs) = \risk_\cost(\hyp \mid \Obs = \obs) \leq \cost (1 - \cost)$. Next, suppose $\post(\obs) \neq \cost$ and $\tilde{\hyp}(\obs) \neq \hyp(\obs)$. Then
\begin{align}
    \risk_\cost(\tilde{\hyp} \mid \Obs = \obs) & = \begin{cases}
        \cost \PP(\Lbl = 0 \mid \Obs = \obs) \quad & \post(\obs) < \cost \\
        (1 - \cost) \PP(\Lbl = 1 \mid \Obs = \obs) \quad & \post(\obs) > \cost
    \end{cases} \nonumber \\
    & = \begin{cases}
        \cost (1 - \post(\obs)) \quad & \post(\obs) < \cost \\
        (1 - \cost) \post(\obs) \quad & \post(\obs) > \cost
    \end{cases} \nonumber \\
    & > \cost (1 - \cost). \label{eq:conditional_risk_not_optimal}
\end{align}
Finally, suppose $\post(\obs) = \cost$; in this case, it is clear that $\risk_\cost(\tilde{\hyp} \mid \Obs = \obs) = \cost (1 - \cost)$ regardless of what $\tilde{\hyp}(\obs)$ is. Putting this together, we can break down the risk of $\tilde{\hyp}$ by conditioning on whether $\tilde{\hyp}(\obs) = \hyp(\obs)$ or $\post(\obs) = \cost$:
\begin{align*}
    \risk_\cost(\tilde{\hyp}) & = \EE[\risk_\cost(\tilde{\hyp} \mid \Obs = \obs)] \\
    & = \EE[\risk_\cost(\tilde{\hyp} \mid \Obs = \obs) \mid \tilde{\hyp}(\Obs) = \hyp(\Obs) \vee \post(\Obs) = \cost] \; \PP(\tilde{\hyp}(\Obs) = \hyp(\Obs) \vee \post(\Obs) = \cost) \\
    & \quad + \EE[\risk_\cost(\tilde{\hyp} \mid \Obs = \obs) \mid \tilde{\hyp}(\Obs) \neq \hyp(\Obs) \wedge \post(\Obs) \neq \cost] \; \PP(\tilde{\hyp}(\Obs) \neq \hyp(\Obs) \wedge \post(\Obs) \neq \cost) \\
    & \overset{\text{(i)}}{>\!\!/\!\!\geq} \EE[\risk_\cost(\tilde{\hyp} \mid \Obs = \obs) \mid \tilde{\hyp}(\Obs) = \hyp(\Obs) \vee \post(\Obs) = \cost] \; \PP(\tilde{\hyp}(\Obs) = \hyp(\Obs) \vee \post(\Obs) = \cost) \\
    & \quad + \EE[c (1 - c) \mid \tilde{\hyp}(\Obs) \neq \hyp(\Obs) \wedge \post(\Obs) \neq \cost] \; \PP(\tilde{\hyp}(\Obs) \neq \hyp(\Obs) \wedge \post(\Obs) \neq \cost) \\
    & \overset{\text{(ii)}}{\geq} \EE[\risk_\cost(\hyp \mid \Obs = \obs) \mid \tilde{\hyp}(\Obs) = \hyp(\Obs) \vee \post(\Obs) = \cost] \; \PP(\tilde{\hyp}(\Obs) = \hyp(\Obs) \vee \post(\Obs) = \cost) \\
    & \quad + \EE[\risk_\cost(\hyp \mid \Obs = \obs) \mid \tilde{\hyp}(\Obs) \neq \hyp(\Obs) \wedge \post(\Obs) \neq \cost] \; \PP(\tilde{\hyp}(\Obs) \neq \hyp(\Obs) \wedge \post(\Obs) \neq \cost) \\
    & = \EE[\risk_\cost(\hyp \mid \Obs = \obs)] \\
    & = \risk_\cost(\hyp).
\end{align*}
(i) uses (\ref{eq:conditional_risk_not_optimal}) and (ii) uses (\ref{eq:conditional_risk_optimal}). The above shows that $\risk_\cost(\tilde{\hyp}) \geq \risk_\cost(\hyp)$ for any decision rule $\tilde{\hyp}$, demonstrating that $\hyp$ must have the lowest risk achievable. Note that (i) is strictly greater as long as $\PP(\tilde{\hyp}(\Obs) \neq \hyp(\Obs) \wedge \post(\Obs) \neq \cost) > 0$, validating the claim above that any optimal decision rule must agree with $\hyp$ almost surely except when $\post(\Obs) = \cost$.

\end{proof}

\subsection{Proof of Theorem \ref{thm:opt_dec}}
\thmoptdec*
\begin{proof}
\label{proof:opt_dec}
Let $\hyp$ denote the decision maker's decision rule. From the proof of Lemma \ref{lemma:bayesopt}, we know that the optimality of $\hyp$ means that $\hyp(\Obs) = \II\{\post(\Obs) \geq \cost\}$ almost surely as long as $\post(\Obs) \neq \cost$.

Let $E$ denote the event that we observe $\obs_i$ and $\obs_j$ in the sample such that $\post(\obs_i) \in (\cost, \cost + \epsilon]$ and $\post(\obs_j) \in [\cost - \epsilon, \cost)$:
\begin{equation*}
    E \quad = \quad
    \underbrace{\exists \obs_i \; \post(\obs_i) \in (\cost, \cost + \epsilon]}_{E_1}
    \quad \wedge \quad
   \underbrace{ \exists \obs_j \; \post(\obs_i) \in [\cost - \epsilon, \cost)}_{E_2}.
\end{equation*}

First, we will lower bound the probability of $E_1$:
\begin{align*}
    \PP(E_1)
    & = 1 - \PP(\forall \obs_i \; \post(\obs_i) \notin (\cost, \cost + \epsilon]) \\
    & = 1 - \left(\PP(\post(\Obs) \notin (\cost, \cost + \epsilon])\right)^m \\
    & = 1 - \left(1 - \PP(\post(\Obs) \in (\cost, \cost + \epsilon])\right)^m \\
    & \geq 1 - (1 - \epsilon p_\cost)^m \\
    & \geq 1 - e^{-m \epsilon p_\cost} \\
    & \geq 1 - e^{-\log(2 / \delta)} \\
    & = 1 - \delta / 2.
\end{align*}
Second, we will lower bound the probability of $E_2$:
\begin{align*}
    \PP(E_2)
    & = 1 - \PP(\forall j \; \post(\obs_j) \notin [\cost - \epsilon, \cost)) \\
    & = 1 - \left(\PP(\post(\Obs) \notin [\cost - \epsilon, \cost))\right)^m \\
    & = 1 - \left(1 - \PP(\post(\Obs) \in [\cost - \epsilon, \cost))\right)^m \\
    & \geq 1 - (1 - \epsilon p_\cost)^{m} \\
    & \geq 1 - e^{-m \epsilon p_\cost} \\
    & \geq 1 - e^{-\log(2 / \delta)} \\
    & = 1 - \delta / 2.
\end{align*}
Putting the above together, we can lower bound the probability of $E$:
\begin{align*}
    \PP(E) & = \PP(E_1 \wedge E_2) \\
    & = 1 - \PP(\neg E_1 \vee \neg E_2) \\
    & \geq 1 - \PP(\neg E_1) - \PP(\neg E_2) \\
    & \geq 1 - \delta.
\end{align*}
Finally, we will show that $E$ implies $|\hat\cost - \cost| \leq \epsilon$. Suppose $E$ occurs. Then $\post(\obs_i) > c$, so $\hyp(\obs_i) = \dec_i = 1$. This means that $\hat\cost \leq \post(\obs_i) \leq \cost + \epsilon$. Also, $\post(\obs_j) < c$, so $\hyp(\obs_j) = \dec_j = 0$. This means that $\hat\cost > \post(\obs_j) \geq \cost - \epsilon$. Thus
\begin{equation*}
    \cost - \epsilon < \hat\cost \leq \cost + \epsilon
\end{equation*}
\begin{equation*}
    |\hat\cost - \cost| \leq \epsilon.
\end{equation*}
So with probability at least $1 - \delta$, $|\hat\cost - \cost| \leq \epsilon$.
\end{proof}

\subsection{Proof of Lemma \ref{lemma:induced_postprob}}
The proof of Lemma \ref{lemma:induced_postprob} depends on another lemma, which will also be useful in the unknown hypothesis class setting. This lemma bounds the conditional probability that the correct decision $\Lbl = 1$ for observations $\obs$ between the decision boundaries of two optimal decision rules.
\begin{lemma}
\label{lemma:condprob}
Suppose $\opt_\dist(\hypcls)$ is monotone and let $\hyp_{\cost}, \hyp_{\cost'} \in \hypcls$ be optimal decision rules for loss parameters $\cost$ and $\cost'$, respectively, where $\cost < \cost'$. Then for every $\obs \in \obsspace$, $\hyp_{\cost'}(\obs) \leq \hyp_\cost(\obs)$. Furthermore, assuming $\PP(\hyp_{\cost}(\Obs) \neq \hyp_{\cost'}(\Obs)) = \PP(\hyp_{\cost}(\Obs) = 1 \wedge \hyp_{\cost'}(\Obs) = 0) > 0$,
\begin{equation*}
    \cost \leq \PP(\Lbl = 1 \mid \hyp_{\cost}(\Obs) = 1 \wedge \hyp_{\cost'}(\Obs) = 0) \leq \cost'.
\end{equation*}
\end{lemma}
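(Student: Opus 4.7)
The plan is to prove the monotonicity statement $\hyp_{\cost'}(\obs)\le \hyp_\cost(\obs)$ first, and then derive the conditional probability bounds by comparing risks; both parts reduce to the same elementary computation.

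For the monotonicity claim, I will use the hypothesis that $\opt_\dist(\hypcls)$ is monotone, which implies that for any two decision rules in $\opt_\dist(\hypcls)$ one must dominate the other pointwise. Since $\hyp_\cost, \hyp_{\cost'} \in \opt_\dist(\hypcls)$, it suffices to rule out the case where $\hyp_{\cost'}(\obs) \geq \hyp_\cost(\obs)$ for all $\obs$ with strict inequality on a positive-probability event. Assume for contradiction that the event $B = \{\hyp_\cost(\Obs) = 0 \wedge \hyp_{\cost'}(\Obs) = 1\}$ has positive probability and $A = \{\hyp_\cost(\Obs)=1 \wedge \hyp_{\cost'}(\Obs)=0\}$ has probability zero. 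I would then write out $\risk_\cost(\hyp_{\cost'}) - \risk_\cost(\hyp_\cost)$ using the fact that $\hyp_\cost$ and $\hyp_{\cost'}$ differ only on $A\cup B$, and evaluate the integrand on $B$ as $\cost\,\II\{\Lbl=0\}-(1-\cost)\II\{\Lbl=1\}$. Optimality of $\hyp_\cost$ for loss parameter $\cost$ forces this difference to be nonnegative, which after dividing by $\PP(B)$ gives $\PP(\Lbl=1 \mid B) \leq \cost$. By the symmetric computation applied to $\risk_{\cost'}(\hyp_\cost) - \risk_{\cost'}(\hyp_{\cost'}) \geq 0$, I obtain $\PP(\Lbl=1\mid B) \geq \cost'$. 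Since $\cost < \cost'$ this is a contradiction, forcing $\hyp_{\cost'}(\obs) \leq \hyp_\cost(\obs)$ everywhere.

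For the second claim, I now assume $\PP(A) > 0$ (which, together with the monotonicity just proved, is equivalent to $\PP(\hyp_\cost(\Obs)\ne \hyp_{\cost'}(\Obs))>0$) and $\PP(B)=0$. The same decomposition of $\risk_\cost(\hyp_{\cost'}) - \risk_\cost(\hyp_\cost)$ applied to $A$ instead of $B$ yields the integrand $(1-\cost)\II\{\Lbl=1\} - \cost\,\II\{\Lbl=0\}$ on $A$. Optimality of $\hyp_\cost$ for $\cost$ makes this nonnegative, and dividing by $\PP(A)$ gives $\PP(\Lbl=1\mid A) \geq \cost$. Similarly, optimality of $\hyp_{\cost'}$ for $\cost'$ gives $\risk_{\cost'}(\hyp_\cost) - \risk_{\cost'}(\hyp_{\cost'}) \geq 0$, which by the same manipulation yields $\PP(\Lbl=1\mid A) \leq \cost'$. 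Combining the two inequalities establishes the desired bound.

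The main obstacle is really just bookkeeping: correctly identifying on which of $A$ and $B$ the rules flip their predictions, and keeping track of signs when translating loss differences into the posterior bound $\PP(\Lbl=1\mid\cdot)$. The monotonicity of $\opt_\dist(\hypcls)$ is essential—without it the event of disagreement could split into both $A$ and $B$, and the clean one-sided inequality would fail. Once the $\PP(B)=0$ reduction is in hand, the probability bound is immediate from two applications of the same optimality argument with the roles of $\cost$ and $\cost'$ exchanged.
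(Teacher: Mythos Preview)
Your proposal is correct and follows essentially the same argument as the paper: assume the wrong pointwise ordering, use optimality of $\hyp_\cost$ and $\hyp_{\cost'}$ for their respective parameters to sandwich $\PP(\Lbl=1\mid B)$ between $\cost'$ and $\cost$, derive a contradiction, and then rerun the same optimality comparisons on the event $A$ to obtain the stated bounds. The only cosmetic difference is that the paper first rewrites $\risk_\cost(\hyp)$ algebraically before taking differences, whereas you compute the loss difference directly on the disagreement set; the resulting inequalities are identical.
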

\begin{proof}We can write the risk of a decision rule $\hyp$ for cost $\cost$ as
\begin{align}
    \risk_\cost(\hyp) & = \cost\, \PP(\hyp(\Obs) = 1 \wedge \Lbl = 0)
    + (1 - \cost)\, \PP(\hyp(\Obs) = 0 \wedge \Lbl = 1) \nonumber \\
    & = \cost \big[ \PP(\Lbl = 0) - \PP(\hyp(\Obs) = 0 \wedge \Lbl = 0) \big]
    + (1 - \cost)\, \PP(\hyp(\Obs) = 0 \wedge \Lbl = 1) \nonumber \\
    & = \cost \Big( \PP(\Lbl = 0) - \big[\PP(\hyp(\Obs) = 0) - \PP(\hyp(\Obs) = 0 \wedge \Lbl = 1) \big] \Big) + (1 - \cost)\, \PP(\hyp(\Obs) = 0 \wedge \Lbl = 1) \nonumber \\
    & = \cost \, \PP(\Lbl = 0) - \cost \, \PP(\hyp(\Obs) = 0) + \cost \, \PP(\hyp(\Obs) = 0 \wedge \Lbl = 1) \nonumber \\
    & \quad + \PP(\hyp(\Obs) = 0 \wedge \Lbl = 1) - \cost \, \PP(\hyp(\Obs) = 0 \wedge y = 1) \nonumber \\
    & = \cost \, \PP(\Lbl = 0) - \cost \, \PP(\hyp(\Obs) = 0) + \PP(\hyp(\Obs) = 0 \wedge \Lbl = 1). \label{eq:risk_rewrite}
\end{align}
Since $\hyp_\cost$ is optimal for $\cost$, we have
\begin{equation}
\label{eq:hyp_feasibility}
    \risk_\cost(\hyp_{\cost'}) - \risk_\cost(\hyp_\cost) \geq 0.
\end{equation}
Applying (\ref{eq:risk_rewrite}) to (\ref{eq:hyp_feasibility}) gives
\begin{equation}
\label{eq:hyp_feasibility_rewrite}
    \PP(\hyp_{\cost'}(\Obs) = 0 \wedge \Lbl = 1)
    - \PP(\hyp_\cost(\Obs) = 0 \wedge \Lbl = 1)
    - \cost\, \Big[ \PP(\hyp_{\cost'}(\Obs) = 0)
    - \PP(\hyp_\cost(\Obs) = 0) \Big]
    \geq 0.
\end{equation}

Now, suppose the lemma does not hold; that is, there is some $\obs \in \obsspace$ such that $\hyp_{\cost'}(\obs) > \hyp_\cost(\obs)$. Since $\opt_\dist(\hypcls)$ is monotone, this implies
\begin{equation}
\label{eq:monotone_claim} \tag{$\star$}
    \forall \obs \in \obsspace \quad \hyp_\cost(\obs) \leq \hyp_{\cost'}(\obs).
\end{equation}

Assuming (\ref{eq:monotone_claim}) we have the following two identities:
\begin{align*}
    \PP(\hyp_\cost(\Obs) = 0) - \PP(\hyp_{\cost'}(\Obs) = 0)
    & = \PP(\hyp_\cost(\Obs) = 0) \wedge \hyp_{\cost'}(\Obs) = 1) \\
    \PP(\hyp_\cost(\Obs) = 0 \wedge \Lbl = 1) - \PP(\hyp_{\cost'}(\Obs) = 0 \wedge \Lbl = 1)
    & = \PP(\hyp_\cost(\Obs) = 0) \wedge \hyp_{\cost'}(\Obs) = 1 \wedge \Lbl = 1).
\end{align*}
Plugging these in to (\ref{eq:risk_rewrite}) gives
\begin{equation*}
    \cost \, \PP(\hyp_\cost(\Obs) = 0) \wedge \hyp_{\cost'}(\Obs) = 1) - \PP(\hyp_\cost(\Obs) = 0) \wedge \hyp_{\cost'}(\Obs) = 1 \wedge \Lbl = 1) \geq 0
\end{equation*}
\begin{equation*}
    \PP(\hyp_\cost(\Obs) = 0) \wedge \hyp_{\cost'}(\Obs) = 1 \wedge \Lbl = 1)
    \leq \cost \, \PP(\hyp_\cost(\Obs) = 0) \wedge \hyp_{\cost'}(\Obs) = 1)
\end{equation*}
\begin{equation*}
    \frac{\PP(\hyp_\cost(\Obs) = 0) \wedge \hyp_{\cost'}(\Obs) = 1 \wedge \Lbl = 1)}{\PP(\hyp_\cost(\Obs) = 0) \wedge \hyp_{\cost'}(\Obs) = 1)} \leq \cost
\end{equation*}
\begin{equation*}
    \PP(\Lbl = 1 \mid \hyp_\cost(\Obs) = 0 \wedge \hyp_{\cost'}(\Obs) = 1) \leq \cost.
\end{equation*}
This is the first claim of the lemma. Now, we can apply the same set of steps to $\risk_{\cost'}(\hyp_\cost) - \risk_{\cost'}(\hyp_{\cost'}) \geq 0$ (i.e., using (\ref{eq:risk_rewrite}) and the above identities) to obtain
\begin{equation*}
    \cost' \leq \PP(\Lbl = 1 \mid \hyp_\cost(\Obs) = 0 \wedge \hyp_{\cost'}(\Obs) = 1).
\end{equation*}
Combining these two equations implies $\cost' \leq \cost$, but we assumed that $\cost < \cost'$, so this is a contradiction. Thus, (\ref{eq:monotone_claim}) must be false!

Since $\opt_\dist(\hypcls)$ is monotone, the falsity of (\ref{eq:monotone_claim}) implies that actually,
\begin{equation}
    \label{eq:monotone_claim_corrected}
    \forall \obs \in \obsspace \quad \hyp_{\cost'}(\obs) \leq \hyp_\cost(\obs).
\end{equation}

Now, we can complete the proof by repeating the above steps using (\ref{eq:monotone_claim_corrected}) instead of (\ref{eq:monotone_claim}) to obtain
\begin{equation*}
    \cost \leq \PP(\Lbl = 1 \mid \hyp_{\cost}(\Obs) = 1 \wedge \hyp_{\cost'}(\Obs) = 0) \leq \cost'.
\end{equation*}

\end{proof}

\lemmainducedpostprob*
\begin{proof}
Fix $\obs \in \obsspace$. Using Lemma \ref{lemma:condprob}, we have that
\begin{equation*}
    \cost < \cost' \quad \Rightarrow \quad \hyp_\cost(\obs) \geq \hyp_{\cost'}(\obs).
\end{equation*}
That is, $\hyp_\cost(\obs)$ is monotone non-increasing in $\cost$. This is enough to show that $\post_\hypcls(\obs)$ is well-defined. Consider three cases:
\begin{enumerate}
    \item $\forall \cost, \hyp_\cost(\obs) = 1$. In this case, $\overline{\post}_{\hypcls}(\obs) = \sup \{ \cost \in [0, 1] \mid \hyp_\cost(\obs) = 1 \} \cup \{0\} = 1$ and $\underline{\post}_\hypcls(\obs) = \inf \{ \cost \in [0, 1] \mid \hyp_\cost(\obs) = 0 \} \cup \{1\} = \inf \, \emptyset \cup \{1\} = 1$ so $\post_\hypcls(\obs) = 1$.
    \item $\forall \cost, \hyp_\cost(\obs) = 0$. In this case, $\overline{\post}_{\hypcls}(\obs) = \sup \{ \cost \in [0, 1] \mid \hyp_\cost(\obs) = 1 \} \cup \{0\} = \sup \, \emptyset \cup \{0\} = 0$ and $\underline{\post}_\hypcls(\obs) = \inf \{ \cost \in [0, 1] \mid \hyp_\cost(\obs) = 0 \} \cup \{1\} = 0$ so $\post_\hypcls(\obs) = 0$.
    \item $\exists \, \cost_0, \cost_1$ such that $\hyp_{\cost_0}(\obs) = 0$ and $\hyp_{\cost_1}(\obs) = 1$. In this case, neither $\{ \cost \in [0, 1] \mid \hyp_\cost(\obs) = 1 \}$ nor $\{ \cost \in [0, 1] \mid \hyp_\cost(\obs) = 0 \}$ is empty so we have
    \begin{align*}
        \overline{\post}_{\hypcls}(\obs) & = \sup \, \{ \cost \in [0, 1] \mid \hyp_\cost(\obs) = 1 \} \\
        \underline{\post}_\hypcls(\obs) & = \inf \, \{ \cost \in [0, 1] \mid \hyp_\cost(\obs) = 0 \}.
    \end{align*}
    Say $\post_\hypcls(\obs)$ is not well-defined; that is,
    \begin{equation*}
        \sup \, \{ \cost \in [0, 1] \mid \hyp_\cost(\obs) = 1 \}
        \neq
        \inf \, \{ \cost \in [0, 1] \mid \hyp_\cost(\obs) = 0 \}.
    \end{equation*}
    First, suppose $\sup \, \{ \cost \in [0, 1] \mid \hyp_\cost(\obs) = 1 \} < \inf \, \{ \cost \in [0, 1] \mid \hyp_\cost(\obs) = 0 \}$. Then there exists some $\cost$ for which $\hyp_\cost(\obs) \notin \{0, 1\}$, which is impossible. So $\sup \, \{ \cost \in [0, 1] \mid \hyp_\cost(\obs) = 1 \} > \inf \, \{ \cost \in [0, 1] \mid \hyp_\cost(\obs) = 0 \}$. However, this implies that $\exists \, \cost_1 \geq \cost_0$ such that $\hyp_{\cost_1}(\obs) = 1$ but $\hyp_{\cost_0}(\obs) = 0$. Since $\hyp_\cost(\obs)$ is nonincreasing in $c$, this is a contradiction. Thus $\post_\hypcls(\obs) = \overline{\post}_{\hypcls}(\obs) = \underline{\post}_\hypcls(\obs)$ is well-defined.
\end{enumerate}
\end{proof}
\corollarybayesoptsuboptknown*
\begin{proof}
Let
\begin{equation*}
    \hyp_\cost \in \argmin_{\hyp \in \hypcls} \risk_\cost(\hyp)
\end{equation*}
be an optimal decision rule in $\hypcls$ for loss parameter $\cost$.

Fix any $\obs \in \obsspace$. If $\post_\hypcls(\obs) = \cost$, we don't need to prove anything. If $\post_\hypcls(\obs) > \cost$, then suppose $\hyp_\cost(\obs) \neq 1$, i.e. $\hyp_\cost(\obs) = 0$. Then
\begin{equation*}
    \underline{\post}_\hypcls(\obs) = \inf \, \{ \cost' \in [0, 1] \mid \hyp_{\cost'}(\obs) = 0 \} \leq c
\end{equation*}
since $\hyp_\cost(\obs) = 0$. However, this is a contradiction since we assumed $\post_\hypcls(\obs) > \cost$. Thus $\hyp_\cost(\obs) = 1$.

Now, if $\post_\hypcls(\obs) < \cost$, suppose $\hyp_\cost(\obs) \neq 0$, i.e. $\hyp_\cost(\obs) = 1$. Then
\begin{equation*}
    \overline{\post}_\hypcls(\obs) = \sup \, \{ \cost' \in [0, 1] \mid \hyp_{\cost'}(\obs) = 1 \} \geq c.
\end{equation*}
This is also a contradiction since we assumed $\post_\hypcls(\obs) < \cost$, so $\hyp_\cost(\obs) = 0$.
\end{proof}

\subsection{Proof of Theorem \ref{thm:subopt_known}}
\theoremsuboptknown*

\begin{proof}
Let $\hyp \in \hypcls$ denote the decision maker's decision rule. From Corollary \ref{corollary:bayesopt_subopt_known}, we know that $\hyp(\obs) = \II\{\post_\hypcls(\obs) \geq \cost\}$ as long as $\post_\hypcls(\obs) \neq \cost$.

Let $E$ denote the event that we observe $\obs_i$ and $\obs_j$ in the sample such that $\post_\hypcls(\obs_i) \in (\cost, \cost + \epsilon]$ and $\post_\hypcls(\obs_j) \in [\cost - \epsilon, \cost)$. An analogous computation to the proof of Theorem \ref{thm:opt_dec} (Section \ref{proof:opt_dec}) shows that if $m \geq \frac{\log(2/\delta)}{p_\cost \epsilon}$, then $\PP(E) \geq 1 - \delta$.

If $E$ occurs, then $\hyp(\obs_i) = 1$ and so $\hat\cost \leq \cost + \epsilon$. Also, $\hyp(\obs_j) = 0$ so $\hat\cost \geq \cost - \epsilon$. Thus, we have
\begin{equation*}
    \PP(|\hat\cost - \cost| \leq \epsilon) \geq \PP(E) \geq 1 - \delta.
\end{equation*}
\end{proof}

\subsection{Proof of Theorem \ref{thm:subopt_unknown}}

\thmsuboptunknown*

\begin{proof}
Specifically, we will prove that $\PP(|\hat\cost - \cost| \leq \epsilon) \geq 1 - \delta$ as long as
\begin{equation}
    \label{eq:subopt_unknown_samples}
    m \geq O\left[ \left(\frac{\alpha}{\epsilon} + \frac{1}{\epsilon^2}\right)\left( \frac{d \log (\alpha / (p_\cost \epsilon)) + \log(1 / \delta)}{p_\cost}\right) \right].
\end{equation}

Throughout the proof, let $\hyp_\cost \in \argmin_{\hyp \in \hypcls} \risk_\cost(\hyp)$ be the true decision rule and let $\hat\hyp_{\hat\cost} \in \argmin_{\hat\hyp \in \hat\hypcls} \risk_{\hat\cost}(\hat\hypcls)$ be the estimated decision rule, i.e. one that agrees with the decisions in the sample of observations $\sample$.

First, we use a standard result from PAC learning theory to upper bound the disagreement between the estimated decision rule $\hat\hyp_{\hat\cost}$ and the true decision rule $\hyp_\cost$. In particular, since this is a case of \emph{realizable} PAC learning, i.e. the true decision rule $\hyp_\cost$ is in one of the hypothesis classes $\hypcls \in \hypset$, we have that
\begin{equation*}
    \PP(\hyp_\cost(\Obs) \neq \hat\hyp_{\hat\cost}(\Obs))
    \leq O \left(\frac{1}{\frac{\alpha}{p_\cost \epsilon} + \frac{1}{p_\cost \epsilon^2}}\right)
    = O \left( \min\left( \frac{p_\cost \epsilon}{\alpha}, p_\cost \epsilon^2 \right) \right)
\end{equation*}
with probability at least $1 - \delta$ over the drawn sample.
This bound follows from \citet{vapnik_estimation_2006} and \citet{blumer_learnability_1989} since the set of all possible hypotheses $\cup_{\hypcls \in \hypset} \hypcls$ has VC-dimension at most $d$, and we observe a sample of $m$ observations $\obs_i$ and decisions $\dec_i = \hyp_\cost(\obs_i)$ where $m$ satisfies (\ref{eq:subopt_unknown_samples}). In particular, denote
\begin{equation}
    \label{eq:disagree_bound}
    r = 
    \PP(\hyp_\cost(\Obs) \neq \hat\hyp_{\hat\cost}(\Obs))
    \leq \min\left( \frac{p_\cost \epsilon}{6 \alpha}, \frac{p_\cost \epsilon^2}{36} \right).
\end{equation}

Next, we show that (\ref{eq:disagree_bound}) implies that $| \hat\cost - \cost | \leq \epsilon$; since (\ref{eq:disagree_bound}) holds with probability at least $1 - \delta$, this is enough to complete the proof of Theorem \ref{thm:subopt_unknown}. We will prove that $\hat\cost - \cost \leq \epsilon$ given (\ref{eq:disagree_bound}). The proof that $\cost - \hat\cost \leq \epsilon$ is analogous. We require a technical lemma on probability theory:

\begin{lemma}
\label{lemma:overlap_bound}
Let $A$, $B$, and $C$ be events in a probability space with $\PP(A) > 0$ and $\PP(B) > 0$. Then
\begin{equation*}
    | \PP(C \mid A) - \PP(C \mid B) | \leq
    \frac{\PP(A \wedge \neg B) + \PP(\neg A \wedge B)}{\min(\PP(A), \PP(B))}.
\end{equation*}
\end{lemma}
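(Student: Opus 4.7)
The plan is to reduce the conditional-probability difference to a comparison of $\PP(C\cap A)$ and $\PP(C\cap B)$, then exploit a simple set containment.

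First I would assume without loss of generality that $\PP(A) \leq \PP(B)$, so $\min(\PP(A), \PP(B)) = \PP(A)$. The key containment is $C \cap A \subseteq (C \cap B) \cup (A \setminus B)$: any point in $C\cap A$ is either in $B$, contributing to $C\cap B$, or in $A\setminus B$. Applying $\PP$ (and then swapping $A$ and $B$) yields
\begin{equation*}
    \PP(C \cap A) \;\leq\; \PP(C \cap B) + \PP(A\setminus B), \qquad \PP(C\cap B) \;\leq\; \PP(C\cap A) + \PP(B\setminus A).
\end{equation*}

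Next I would bound each sign of $\PP(C\mid A) - \PP(C\mid B)$ using the appropriate one of these inequalities. Dividing the first by $\PP(A)$ and subtracting $\PP(C\mid B) = \PP(C\cap B)/\PP(B)$ rewrites $\PP(C\mid A) - \PP(C\mid B)$ as $\PP(A\setminus B)/\PP(A) + \PP(C\mid B)\,(\PP(B) - \PP(A))/\PP(A)$. Using $\PP(C\mid B) \leq 1$ together with the identity $\PP(B) - \PP(A) = \PP(B\setminus A) - \PP(A\setminus B)$ (both sides decompose over $A\cap B$), the $\PP(A\setminus B)$ contributions cancel and I am left with at most $\PP(B\setminus A)/\PP(A)$, which is below the claimed bound. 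The analogous manipulation starting from $\PP(C\cap B) \leq \PP(C\cap A) + \PP(B\setminus A)$ gives $\PP(C\mid B) - \PP(C\mid A) \leq \PP(B\setminus A)/\PP(B) - \PP(C\mid A)\,(\PP(B) - \PP(A))/\PP(B)$, which is at most $\PP(B\setminus A)/\PP(B) \leq \PP(B\setminus A)/\PP(A)$ since $\PP(C\mid A) \geq 0$ and $\PP(B) \geq \PP(A)$. Combining the two directions proves the lemma.

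The main obstacle is making the constant come out to $1$ rather than $2$. A naive triangle-inequality split of $\PP(C\mid A) - \PP(C\mid B)$ into a numerator-change piece and a denominator-change piece produces the bound with an extra factor of two, because each piece individually can be as large as $(\PP(A\setminus B) + \PP(B\setminus A))/\PP(A)$. The remedy is to split \emph{asymmetrically}: use exactly one of the two set containments on each side of zero, so that the $\PP(A\setminus B)$ term from the numerator change cancels the $\PP(A\setminus B)$ piece of $\PP(B) - \PP(A)$ coming from the denominator change, leaving only $\PP(B\setminus A)/\PP(A)$.
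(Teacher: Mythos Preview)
Your argument is correct. In fact you establish slightly more than the lemma asks: under the WLOG assumption $\PP(A)\le\PP(B)$ you obtain the one-sided bound $\PP(B\setminus A)/\PP(A)$, which is at most the symmetric-difference expression in the statement.

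The paper's proof is different in flavor. It writes $\PP(C\mid A)-\PP(C\mid B)$ over the common denominator $\PP(A)\PP(B)$, expands every probability via the partition into $A\cap B$, $A\setminus B$, $B\setminus A$, and then applies $|u-v|\le\max(u,v)$ to the resulting positive numerator terms, followed by the crude bounds $\PP(ABC)\le\PP(B)$, $\PP(A\bar BC)\le\PP(A\bar B)$, etc. Your route avoids this expansion entirely by using the set containment $C\cap A\subseteq(C\cap B)\cup(A\setminus B)$ and then tracking the numerator and denominator perturbations together so that the $\PP(A\setminus B)$ pieces cancel. The payoff is a shorter argument and a sharper intermediate bound; the paper's expansion, on the other hand, is completely mechanical and does not require spotting the cancellation. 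Both are elementary, but yours is the cleaner of the two.
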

\begin{subproof}[Proof of Lemma \ref{lemma:overlap_bound}]
To simply the proof of this lemma, we adopt the boolean algebra notation that $AB$ is equivalent to $A \wedge B$ and $\bar{A}$ is equivalent to $\neg A$. Then we have
\begin{align*}
    & \left| \PP(C \mid A) - \PP(C \mid B) \right| \\
    & \quad = \left| \frac{\PP(AC)}{\PP(A)} - \frac{\PP(BC)}{\PP(B)} \right| \\
    & \quad = \left| \frac{\PP(ABC) + \PP(A\bar{B}C)}{\PP(AB) + \PP(A\bar{B})} - \frac{\PP(ABC) + \PP(\bar{A}BC)}{\PP(AB) + \PP(\bar{A}B)} \right| \\
    & \quad = \frac{|\PP(ABC)\PP(\bar{A}B) + \PP(A\bar{B}C)\PP(B) - \PP(ABC)\PP(A\bar{B}) - \PP(\bar{A}BC)\PP(A)|}{\PP(A)\PP(B)} \\
    & \quad \overset{\text{(i)}}{\leq} \frac{\max\Big(\PP(ABC)\PP(\bar{A}B) + \PP(A\bar{B}C)\PP(B), \, \PP(ABC)\PP(A\bar{B}) + \PP(\bar{A}BC)\PP(A)\Big)}{\PP(A)\PP(B)} \\
    & \quad = \max\left( \frac{\PP(ABC)\PP(\bar{A}B) + \PP(A\bar{B}C)\PP(B)}{\PP(A)\PP(B)}, \, \frac{\PP(ABC)\PP(A\bar{B}) + \PP(\bar{A}BC)\PP(A)}{\PP(A)\PP(B)} \right) \\
    & \quad \overset{\text{(ii)}}{\leq} \max\left(
        \frac{\PP(B)\PP(\bar{A}B) + \PP(A\bar{B})\PP(B)}{\PP(A)\PP(B)}, \,
        \frac{\PP(A)\PP(A\bar{B}) + \PP(\bar{A}B)\PP(A)}{\PP(A)\PP(B)}
    \right) \\
    & \quad = \max\left( \frac{\PP(\bar{A}B) + \PP(A\bar{B})}{\PP(A)}, \, \frac{\PP(A\bar{B}) + \PP(\bar{A}B)}{\PP(B)} \right) \\
    & \quad = \frac{\PP(A\bar{B}) + \PP(\bar{A}B)}{\min(\PP(A), \PP(B))}.
\end{align*}
(i) uses the fact that for positive $u$ and $v$, $|u - v| \leq \max(u, v)$. (ii) uses the fact that $\PP(E_1 E_2) \leq \PP(E_1)$ for any events $E_1$ and $E_2$.
\end{subproof}

Essentially, Lemma \ref{lemma:overlap_bound} says that if events $A$ and $B$ have high ``overlap,'' then the conditional probabilities of another event $C$ given $A$ and $B$ should be close. We next carefully construct two such events with high overlap.

First, let $\cost' = \cost + \epsilon / 2$ and let $\hyp_{\cost'} \in \argmin_{\hyp \in \hypcls} \risk_{\cost'}(\hyp)$. Since $\hyp_\cost$ and $\hypset$ are $\alpha$-MD-smooth, we have that
\begin{align}
    \MD(\hyp_{\cost'}, \opt_\dist(\hat\hypcls))
    & \leq (1 + \alpha |\cost' - \cost|) \MD(\hyp_\cost, \opt_\dist(\hat\hypcls)) \\
    & \leq (1 + \alpha \epsilon / 2) \PP(\hyp_\cost(\Obs) \neq \hat\hyp_{\hat\cost}(\Obs)) \nonumber \\
    & \leq (1 + \alpha \epsilon / 2) r. \label{eq:cost_prime_bound}
\end{align}
Since $\MD(\hyp, \opt_\dist(\hat\hypcls)) = \inf_{\hat\hyp \in \opt_\dist(\hat\hypcls)} \PP(\hyp(\Obs) \neq \hat\hyp(\Obs))$, there must be some hypothesis $\hat\hyp_{\hat\cost'} \in \argmin_{\hat\hyp \in \hat\hypcls} \risk_{\hat\cost'}(\hat\hyp)$ that matches the minimum disagreement with $\hyp_{\cost'}$ plus a small positive number (in case the infimum is not achieved):
\begin{align*}
    \PP(\hat\hyp_{\hat\cost'}(\Obs) \neq \hyp_{\cost'}(\Obs))
    & \leq \MD(\hyp_{\cost'}, \opt_\dist(\hat{\cost})) + r \\
    & \leq (2 + \alpha \epsilon / 2) r.
\end{align*}
Now, let the events $A$, $B$, and $C$ be defined as follows:
\begin{align*}
    A: & \quad
    \hyp_\cost(\Obs) = 1 \wedge \hyp_{\cost'}(\Obs) = 0, \\
    B: & \quad
    \hat\hyp_{\hat\cost}(\Obs) = 1 \wedge \hat\hyp_{\hat\cost'}(\Obs) = 0, \\
    C: & \quad \Lbl = 1.
\end{align*}
Using Lemma \ref{lemma:overlap_bound}, we can write the bound
\begin{equation}
    \label{eq:overlap_result}
    \PP(\Lbl = 1 \mid B) \leq \PP(\Lbl = 1 \mid A) +  \frac{\PP(A \wedge \neg B \vee \neg A \wedge B)}{\min(\PP(A), \PP(B))}.
\end{equation}
We will establish bounds on each term in (\ref{eq:overlap_result}).

\smallparagraph{Upper bound on $\PP(A \wedge \neg B \vee \neg A \wedge B)$} It is easy to see that
\begin{align*}
    A \wedge \neg B \vee \neg A \wedge B
    \quad \Rightarrow \quad
    \hyp_\cost(\Obs) \neq \hat\hyp_{\hat\cost}(\Obs)
    \vee \hyp_{\cost'}(\Obs) \neq \hat\hyp_{\hat\cost'}(\Obs).
\end{align*}
Given this implication, it must be that
\begin{align*}
    \PP(A \wedge \neg B \vee \neg A \wedge B)
    & \leq \PP(\hyp_\cost(\Obs) \neq \hat\hyp_{\hat\cost}(\Obs) \vee \hyp_{\cost'}(\Obs) \neq \hat\hyp_{\hat\cost'}(\Obs)) \\
    & \leq (3 + \alpha \epsilon/2) r \\
    & \leq p_\cost \epsilon^2 / 12 + p_\cost \epsilon^2 / 12
    = p_\cost \epsilon^2 / 6
\end{align*}
where the inequalities follow from (\ref{eq:disagree_bound}) and (\ref{eq:cost_prime_bound}).

\smallparagraph{Lower bound on $\min(\PP(A), \PP(B))$}
Since $\hyp_\cost$ is optimal within $\hypcls$ for loss parameter $\cost$, Corollary \ref{corollary:bayesopt_subopt_known} gives that $\hyp_\cost(\obs) = 1$ if $\post_\hypcls(\obs) > \cost$. Similarly, $\hyp_{\hat\cost}(\obs) = 0$ if $\post_\hypcls(\obs) < \cost$. Therefore,
\begin{equation*}
    \post_\hypcls(\Obs) \in (\cost, \cost') \quad \Rightarrow \quad
    \hyp_\cost(\Obs) = 1 \wedge \hyp_{\hat\cost}(\Obs) = 0
    \quad \Leftrightarrow \quad
    A.
\end{equation*}
This implication allows us to lower bound $\PP(A)$:
\begin{equation*}
    \PP(A)
    \geq \PP(\post_\hypcls(\Obs) \in (\cost, \cost'))
    = \PP(\post_\hypcls(\Obs) \in (\cost, \cost + \epsilon / 2))
    \geq p_\cost \epsilon / 2
\end{equation*}
where the final inequality is by assumption. We also need to lower bound $\PP(B)$ in order to lower bound $\min(\PP(A), \PP(B))$:
\begin{align*}
    \PP(B) & = \PP(A \wedge B) + \PP(\neg A \wedge B) \\
    & = \PP(A) - \PP(A \wedge \neg B) + \PP(\neg A \wedge B) \\
    & \geq \PP(A) - \Big(\PP(A \wedge \neg B) + \PP(\neg A \wedge B)\Big) \\
    & \geq p_\cost \epsilon / 2 - p_\cost \epsilon^2 / 6 \\
    & \geq p_\cost \epsilon / 3.
\end{align*}
We assume that $\epsilon \leq 1$ to lower bound $\epsilon \geq \epsilon^2$, but this is fine since if $\epsilon > 1$ then Theorem \ref{thm:subopt_unknown} holds trivially. Thus we have $\min(\PP(A), \PP(B)) \geq p_\cost \epsilon / 3$.

\smallparagraph{Lower bound on $\PP(\Lbl = 1 \mid B)$}
By Lemma \ref{lemma:condprob}, we have that, since $\PP(B) > 0$,
\begin{equation*}
    \PP(\Lbl = 1 \mid B) = \PP(\Lbl = 1 \mid \hat\hyp_{\hat\cost}(\Obs) = 1 \wedge \hat\hyp_{\hat\cost'}(\Obs) = 0) \geq \hat\cost.
\end{equation*}

\smallparagraph{Upper bound on $\PP(\Lbl = 1 \mid A)$}
Similarly, by Lemma \ref{lemma:condprob}, we have that, since $\cost' > \cost$ and $\PP(A) > 0$,
\begin{equation*}
    \PP(\Lbl = 1 \mid A) = \PP(\Lbl = 1 \mid \hyp_{\cost}(\Obs) = 1 \wedge \hyp_{\cost'}(\Obs) = 0) \leq \cost' = \cost + \epsilon / 2.
\end{equation*}

\smallparagraph{Concluding the proof}
Given all these bounds, we can rewrite (\ref{eq:cost_prime_bound}) as
\begin{align*}
    \hat\cost \leq \PP(\Lbl = 1 \mid B) & \leq \PP(\Lbl = 1 \mid A) +  \frac{\PP(A \wedge \neg B \vee \neg A \wedge B)}{\min(\PP(A), \PP(B))} \\
    & \leq \cost + \epsilon / 2 + \frac{p_\cost \epsilon^2 / 6}{p_\cost \epsilon / 3} \\
    & \leq \cost + \epsilon / 2 + \epsilon / 2 = \cost + \epsilon \\
    \hat\cost - \cost & \leq \epsilon.
\end{align*}
This completes the proof that $\hat\cost - \cost \leq \epsilon$ with probability at least $1 - \delta$; the proof that $\cost - \hat\cost \leq \epsilon$ is analogous.
\end{proof}

\subsection{Proof of Theorem \ref{thm:optimal_lower}}

\theoremoptimallower*

\begin{proof}
Consider a distribution over $\Obs \in \obsspace = [0, 1]$ where
\begin{equation*}
    \post(\obs) = \PP(\Lbl = 1 \mid \Obs = \obs) = \obs.
\end{equation*}
Let the distribution $\dist_\Obs$ over $\Obs$ have density $p_\cost$ on the interval $(\nicefrac{1}{2} - 2 \epsilon, \nicefrac{1}{2} + 2 \epsilon)$ and let $\PP(\Obs = 0) = \PP(\Obs = 1) = \nicefrac{1}{2} - 2 p_\cost \epsilon$.

Let $\cost_1 = \nicefrac{1}{2} - \epsilon$ and $\cost_2 = \nicefrac{1}{2} + \epsilon$. Then clearly, for $\cost \in \{\cost_1, \cost_2\}$, the conditions of Theorem \ref{thm:opt_dec} are satisfied:
\begin{equation*}
    \PP(\post(\Obs) \in [\cost - \epsilon, \cost)) = \PP(\post(\Obs) \in (\cost, \cost + \epsilon]) = p_\cost \epsilon.
\end{equation*}
By Lemma \ref{lemma:bayesopt}, the optimal decision rule for loss parameter $\cost_1$ is $\hyp_{\cost_1}(\obs) = \II\{\obs \geq \cost_1\}$ and for $\cost_2$ it is $\hyp_{\cost_2}(\obs) = \II\{\obs \geq \cost_2\}$.

Now suppose
\begin{equation*}
    m < \frac{\log(\nicefrac{1}{2 \delta})}{8 p_\cost \epsilon}
\end{equation*}
as stated in the theorem. We can bound the probability of the following event $E$:
\begin{align*}
    \PP(\underbrace{\forall \obs_i \in \sample \quad \post(\obs_i) \in \{0, 1\}}_E)
    & = \left[ \PP(\Obs \in \{0, 1\}) \right]^m \\
    & = (1 - 4 p_\cost \epsilon)^m \\
    & \overset{\text{(i)}}{\geq} \Big( e^{-8 p_\cost \epsilon} \Big)^m \\
    & = e^{-\log(\nicefrac{1}{2 \delta})} = 2 \delta.
\end{align*}
(i) uses the fact that $1 - u \geq e^{-2u}$ for $u \in [0, \nicefrac{1}{2}]$. Now, suppose $E$ occurs. In this case, $\hyp_{\cost_1}(\obs_i) = \hyp_{\cost_2}(\obs_i)$ for all $\obs_i \in \sample$. That is, regardless of which loss parameter $\cost \in \{\cost_1, \cost_2\}$ is used, the distribution of samples will be the same. Let $\sample_1$ denote the random variable for a sample taken from a decision maker using $\hyp_{\cost_1}$ and $\sample_2$ a sample taken from $\hyp_{\cost_2}$. Since these have the same distribution under $E$, they must induce the same probabilities when the IDT algorithm $\hat\cost$ is applied to them:
\begin{align*}
    p_1 = \PP(\hat\cost(\sample_1) \leq \nicefrac{1}{2} \mid E) & = \PP(\hat\cost(\sample_2) \leq \nicefrac{1}{2} \mid E), \\
    p_2 = \PP(\hat\cost(\sample_1) > \nicefrac{1}{2} \mid E) & = \PP(\hat\cost(\sample_2) > \nicefrac{1}{2} \mid E).
\end{align*}
Since $p_1 + p_2 = 1$, at least one of $p_1, p_2 \geq \nicefrac{1}{2}$. Suppose WLOG that $p_1 \geq \nicefrac{1}{2}$. Then
\begin{align*}
    \PP(|\hat\cost(\sample_2) - \cost_2| \geq \epsilon)
    & \geq \PP(\hat\cost(\sample_2) \leq \nicefrac{1}{2}) \\
    & = \PP(\hat\cost(\sample_2) \leq \nicefrac{1}{2} \mid E) \, \PP(E) \\
    & \geq \nicefrac{1}{2} (2 \delta) = \delta.
\end{align*}
Thus there is a decision problem $(\dist, \cost_2)$ for which the IDT algorithm $\hat\cost$ must make an error of at least size $\epsilon$ with at least probability $\delta$. This concludes the proof.
\end{proof}

\corollarycertainlower*

\begin{proof}
Let the loss parameters $\cost_1 = \nicefrac{1}{2} - \epsilon$ and $\cost_2 = \nicefrac{1}{2} + \epsilon$ be defined as in the proof of Theorem \ref{thm:optimal_lower} above. By Lemma \ref{lemma:bayesopt}, the optimal decision rule for loss parameter $\cost_1$ is $\hyp_{\cost_1}(\obs) = \II\{\post(\obs) \geq \cost_1\}$ and for $\cost_2$ it is $\hyp_{\cost_2}(\obs) = \II\{\post(\obs) \geq \cost_2\}$. Since $\PP(\post(\obs) \in \{0, 1\}) = 1$, it is clear that the decision rules make the same decision rules almost surely, i.e. $\PP(\hyp_{\cost_1}(\Obs) = \hyp_{\cost_2}(\Obs)) = 1$. Thus, letting $\sample_1$ and $\sample_2$ denote samples drawn from decision rules $\hyp_{\cost_1}$ and $\hyp_{\cost_2}$, respectively, as above, we have that the distributions of $\sample_1$ and $\sample_2$ are indistinguishable. Thus by the same argument as above we can show that (WLOG)
\begin{equation*}
    \PP(|\hat\cost(\sample_2) - \cost_2| \geq \epsilon) \geq \PP(\hat\cost(\sample_2) \leq \nicefrac{1}{2})
    \geq \nicefrac{1}{2}.
\end{equation*}
\end{proof}

\subsection{Proof of Lemma \ref{lemma:fairness}}

\defnsufficiency*

\lemmafairness*

\begin{proof}[Proof that equal $\cost_\att$ imply group calibration]
Assume $\cost_\att = \cost_{\att'} = \cost$ for every $\att, \att' \in \attspace$. Then define
\begin{equation}
    \label{eq:score}
    r(\obs) = \post(\obs) + \hyp(\obs).
\end{equation}
That is, $r(\obs)$ is the posterior probability $\post(\obs) = \PP(\Lbl = 1 \mid \Obs = \obs)$ plus one if the decision rule outputs the decision $\hyp(\obs) = 1$.
From the proof of Lemma \ref{lemma:bayesopt}, we know that $\hyp(\obs) = 1$ if $\post(\obs) > c$ and $\hyp(\obs) = 0$ if $\post(\obs) < c$. From this and (\ref{eq:score}) we can write
\begin{equation*}
    \hyp(\obs) = \II\{ r(\obs) \geq \cost + 1\}.
\end{equation*}
Now we need to show that $\Lbl \indep \Att \mid r(\Obs)$. Note that $r(\Obs) \in [0, \cost] \cup [\cost + 1, 2]$. First, we consider $r(\Obs) \in [0, \cost]$. In this case, for any $\att \in \attspace$, we have
\begin{align*}
    \PP(\Lbl = 1 \mid \Att = \att, r(\Obs) = r)
    & = \PP(\Lbl = 1 \mid \Att = \att, \post(\Obs) = r) \\
    & = r \\
    & = \PP(\Lbl = 1 \mid r(\Obs) = r).
\end{align*}
Next, say $r(\Obs) \in [\cost + 1, 2]$. Then
\begin{align*}
    \PP(\Lbl = 1 \mid \Att = \att, r(\Obs) = r)
    & = \PP(\Lbl = 1 \mid \Att = \att, \post(\Obs) = r - 1) \\
    & = r - 1 \\
    & = \PP(\Lbl = 1 \mid r(\Obs) = r).
\end{align*}
So in either case, $\PP(\Lbl = 1 \mid \Att = \att, r(\Obs) = r) = \PP(\Lbl = 1 \mid r(\Obs) = r)$. Thus $\Lbl \indep \Att \mid r(\Obs)$.
\end{proof}

\begin{proof}[Proof of inverse]
Now, assume $\exists \, \att, \att' \in \attspace$ such that $\cost_\att \neq \cost_{\att'}$. WLOG, suppose that $\cost_{\att} < \cost_{\att'}$. Let $r: \obsspace \to \mathbb{R}$ be any function satisfying $\hyp(\obs) = \II\{r(\obs) \geq t\}$. WLOG we can also assume $t = 0$. 
From Lemma \ref{lemma:bayesopt}, we know that if $\att(\obs) = \att$, then $\post(\obs) < \cost_\att$ implies $\hyp(\obs) = 0$ and $\post(\obs) > \cost_\att$ implies $\hyp(\obs) = 1$. Also, if $\att(\obs) = \att'$, then $\post(\obs) < \cost_{\att'}$ implies $\hyp(\obs) = 0$ and $\post(\obs) > \cost_{\att'}$ implies $\hyp(\obs) = 1$. Therefore,
\begin{align*}
    & \PP(\Lbl = 1 \mid \Att = \att, r(\Obs) > 0) \\
    & \quad = \PP(\Lbl = 1 \mid \Att = \att, \post(\Obs) > \cost_\att) \\
    & \quad = \PP(\Lbl = 1 \mid \post(\Obs) > \cost_\att) \\
    & \quad = \PP(\Lbl = 1 \mid \post(\Obs) \in (\cost_\att, \cost_{\att'})) \frac{\PP(\post(\Obs) \in (\cost_\att, \cost_{\att'}))}{\PP(\post(\Obs) > \cost_\att)} \\
    & \qquad + \PP(\Lbl = 1 \mid \post(\Obs) \geq \cost_{\att'}) \frac{\PP(\post(\Obs) \geq \cost_{\att'})}{\PP(\post(\Obs) > \cost_\att)} \\
    & \quad \overset{\text{(i)}}{<} \cost_{\att'} \frac{\PP(\post(\Obs) \in (\cost_\att, \cost_{\att'}))}{\PP(\post(\Obs) > \cost_\att)}
    + \PP(\Lbl = 1 \mid \post(\Obs) \geq \cost_{\att'}) \frac{\PP(\post(\Obs) \geq \cost_{\att'})}{\PP(\post(\Obs) > \cost_\att)} \\
    & \quad \overset{\text{(ii)}}{\leq} \PP(\Lbl = 1 \mid \post(\Obs) \geq \cost_{\att'}) \\
    & \quad \leq \PP(\Lbl = 1 \mid \post(\Obs) > \cost_{\att'}) \\
    & \quad = \PP(\Lbl = 1 \mid \Att = \att', \post(\Obs) > \cost_{\att'}) \\
    & \quad = \PP(\Lbl = 1 \mid \Att = \att', r(\Obs) > 0).
\end{align*}

(i) and (ii) make use of the fact that
\begin{align*}
    & \PP(\Lbl = 1 \mid \post(\Obs) \in (\cost_\att, \cost_{\att'}))
    = \EE[\post(\Obs) \mid \post(\Obs) \in (\cost_\att, \cost_{\att'})] \\
    & \quad < \cost_{\att'}
    \leq \EE[\post(\Obs) \mid \post(\Obs) \geq \cost_{\att'}]
    = \PP(\Lbl = 1 \mid \post(\Obs) \geq \cost_{\att'}).
\end{align*}
(i) also uses the assumption that $\PP(\post(\Obs) \in (\cost_\att, \cost_{\att'})) > 0$.

Thus, we have that $\PP(\Lbl = 1 \mid \Att = \att, r(\Obs) > 0) \neq \PP(\Lbl = 1 \mid \Att = \att', r(\Obs) > 0)$; therefore, $\Lbl$ and $\Att$ are \emph{not} independent given $r(\Obs)$, so group calibration is not satisfied.
\end{proof}

\section{POMDP Formulation of IDT}
\label{sec:pomdp}

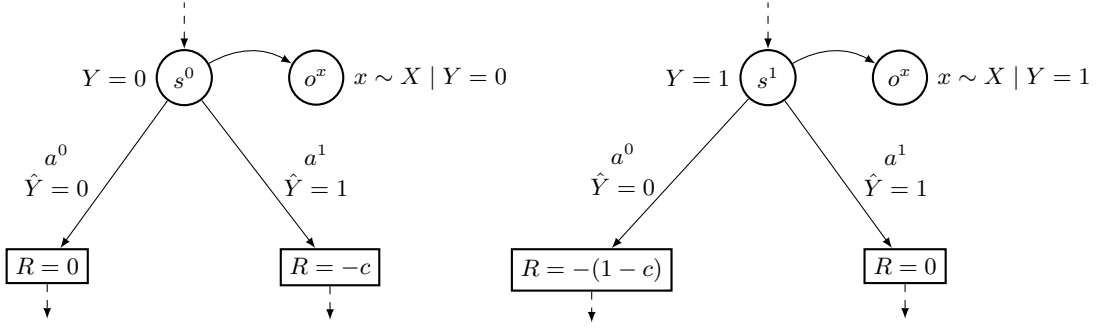
\begin{figure}
    \centering
    \begin{tikzpicture}[auto,node distance=8mm,>=latex,font=\small]

        \tikzstyle{state}=[thick,draw=black,circle]
        \tikzstyle{reward}=[thick,draw=black,rectangle]
    
        \node[state,label=left:${\Lbl=0}$] (s0) {$s^0$};
        \node[state,right=10mm of s0,label=right:${\obs \sim \Obs \mid \Lbl = 0}$] (o0) {$o^\obs$};
        \draw[->] (s0) to[bend left] (o0) ;
    
        \node[reward,below left=20mm and 10mm of s0] (s0a0) {$R = 0$};
        \node[reward,below right=20mm and 10mm of s0] (s0a1) {$R = -\cost$};
        \draw[->] (s0) -- node[left] {\begin{tabular}{c} $a^0$ \\ $\Dec = 0$\end{tabular}} (s0a0);
        \draw[->] (s0) -- node[right] {\begin{tabular}{c} $a^1$ \\ $\Dec = 1$\end{tabular}} (s0a1);
        
        \node[state,right=70mm of s0,label=left:${\Lbl=1}$] (s1) {$s^1$};
        \node[state,right=10mm of s1,label=right:${\obs \sim \Obs \mid \Lbl = 1}$] (o1) {$o^\obs$};
        \draw[->] (s1) to[bend left] (o1) ;
        
        \node[reward,below left=20mm and 10mm of s1] (s1a0) {$R = -(1 - \cost)$};
        \node[reward,below right=20mm and 10mm of s1] (s1a1) {$R = 0$};
        \draw[->] (s1) -- node[left] {\begin{tabular}{c} $a^0$ \\ $\Dec = 0$\end{tabular}} (s1a0);
        \draw[->] (s1) -- node[right] {\begin{tabular}{c} $a^1$ \\ $\Dec = 1$\end{tabular}} (s1a1);
    
        \draw[dashed,->] (s0a0) -- ++ (0, -0.7);
        \draw[dashed,->] (s0a1) -- ++ (0, -0.7);
        \draw[dashed,->] (s1a0) -- ++ (0, -0.7);
        \draw[dashed,->] (s1a1) -- ++ (0, -0.7);
        \draw[dashed,<-] (s0) -- ++ (0, 1);
        \draw[dashed,<-] (s1) -- ++ (0, 1);
    
    \end{tikzpicture}
    \caption{A graphical depiction of the POMDP formulation of IDT described in Appendix \ref{sec:pomdp}. A state in $\{s^0, s^1\}$ is randomly selected at each timestep and an observation is generated according to the conditional distribution of $\Obs \mid \Lbl$. An action (decision) is taken and the agent receives reward equal to the negative of the loss.}
    \label{fig:pomdp}
\end{figure}

As mentioned in the main text, IDT can be seen as a special case of inverse reinforcement learning (IRL) in a partially observable Markov decision process (POMDP) (or equivalently, belief state MDP). Here, we present the equivalent POMDP and discuss connections to to our results.

A POMDP is a tuple consisting of seven elements. For an IDT decision problem $(\dist, \cost)$ they are:
\begin{itemize}
    \item The state space consists of two states, each corresponding to a value of $\Lbl$, the ground truth/correct decision. We call them $s^0$ for $\Lbl = 0$ and $s^1$ for $\Lbl = 1$.
    \item The action space consists of two actions, each corresponding to one of the decisions $\Dec$. We equivalently call them $a^0$ for $\Dec = 0$ and $a^1$ for $\Dec = 1$.
    \item The transition probabilities do not depend on the previous state or action; rather, $s^0$ or $s^1$ is randomly selected based on their probabilities under the distribution $\dist$:
    \begin{align*}
        p(s_{t+1} = s^0 \mid s_t, a_t) = \PP_{\Obs, \Lbl \sim \dist}(\Lbl = 0), \\
        p(s_{t+1} = s^1 \mid s_t, a_t) = \PP_{\Obs, \Lbl \sim \dist}(\Lbl = 1). \\
    \end{align*}
    \item The reward function is the negative of the loss function described in Section \ref{sec:problem}:
    \begin{align*}
        R(s^0, a^0) & = 0  \quad &  R(s^1, a^0) & = -(1 - c), \\
        R(s^0, a^1) & = -c \quad &  R(s^1, a^1) & = 0. \\
    \end{align*}
    \item The observation space includes elements for each $\Obs \in \obsspace$. We denote by $o^\obs$ the POMDP observation for $\obs \in \obsspace$.
    \item The observation probabilities are
    \begin{equation*}
        p(o_t = o^\obs \mid s_t = s^\lbl) = \PP(\Obs = \obs \mid \Lbl = \lbl).
    \end{equation*}
    \item The discount factor $\gamma$ is basically irrelevant to IDT, since the decisions are non-sequential. Thus any $\gamma$ will produce the same behavior.
\end{itemize}

A graphical depiction of this POMDP is shown in Figure \ref{fig:pomdp}. Any decision rule $\hyp: \obsspace \to \{0, 1\}$ corresponds to a policy $\pi$ in this POMDP:
\begin{equation*}
    \pi(a_t = a^{\dec} \mid o_t = o^\obs) = \II\{\hyp(\obs) = \dec\}.
\end{equation*}

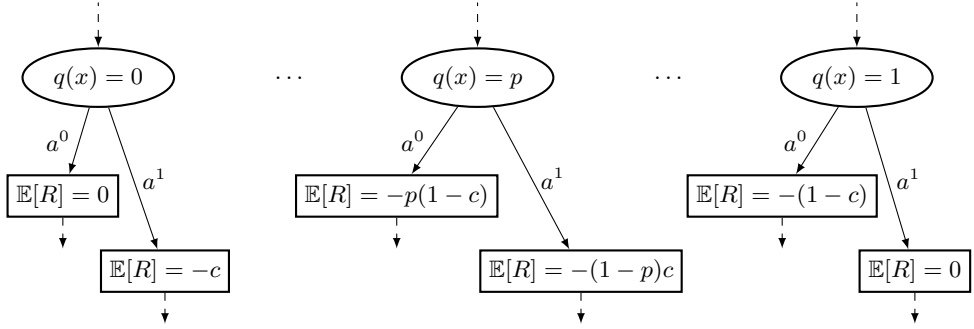
\begin{figure}
    \centering
    \begin{tikzpicture}[auto,node distance=8mm,>=latex,font=\small]
        \usetikzlibrary{shapes}

        \tikzstyle{beliefstate}=[thick,draw=black,ellipse]
        \tikzstyle{reward}=[thick,draw=black,rectangle]

        \node[beliefstate,label={[label distance=12mm]right:$\hdots$}] (s0) {$q(\obs) = 0$};
        \draw[dashed,<-] (s0) -- ++ (0, 1);
        \node[reward,below left=10mm and -10mm of s0] (s0a0) {$\EE [R] = 0$};
        \node[reward,below right=20mm and -7mm of s0] (s0a1) {$\EE [R] = -\cost$};
        \draw[->] (s0) -- node[left] {$a^0$} (s0a0);
        \draw[->] (s0) -- node[right] {$a^1$} (s0a1);
        
        \node[beliefstate,right=30mm of s0,label={[label distance=12mm]right:$\hdots$}] (s05) {$\post(\obs) = p$};
        \draw[dashed,<-] (s05) -- ++ (0, 1);
        \node[reward,below left=10mm and -10mm of s05] (s05a0) {$\EE [R] = -p (1 - \cost)$};
        \node[reward,below right=20mm and -7mm of s05] (s05a1) {$\EE [R] = -(1 - p)\cost$};
        \draw[->] (s05) -- node[left] {$a^0$} (s05a0);
        \draw[->] (s05) -- node[right] {$a^1$} (s05a1);
        
        \node[beliefstate,right=30mm of s05] (s1) {$q(\obs) = 1$};
        \draw[dashed,<-] (s1) -- ++ (0, 1);
        \node[reward,below left=10mm and -10mm of s1] (s1a0) {$\EE [R] = -(1 - \cost)$};
        \node[reward,below right=20mm and -7mm of s1] (s1a1) {$\EE [R] = 0$};
        \draw[->] (s1) -- node[left] {$a^0$} (s1a0);
        \draw[->] (s1) -- node[right] {$a^1$} (s1a1);
    
        \draw[dashed,->] (s0a0) -- ++ (0, -0.7);
        \draw[dashed,->] (s0a1) -- ++ (0, -0.7);
        \draw[dashed,->] (s05a0) -- ++ (0, -0.7);
        \draw[dashed,->] (s05a1) -- ++ (0, -0.7);
        \draw[dashed,->] (s1a0) -- ++ (0, -0.7);
        \draw[dashed,->] (s1a1) -- ++ (0, -0.7);
    
    \end{tikzpicture}
    \caption{A graphical depiction of the belief state MDP formulation of IDT. There is a belief state for each posterior probability $\post(\obs) = \PP(\Lbl = 1 \mid \Obs = \obs) \in [0, 1]$. Observing the agent at a belief state gives a constraint on their reward function \citep{ng_algorithms_2000}. Thus, if $\post(\Obs)$ has support on $[0, 1]$, i.e. if there is a significant range of uncertainty in the decision problem, then there can be arbitrarily many such constraints, allowing the loss parameter $\cost$ to be learned to arbitrary precision.}
    \label{fig:belief_state_mdp}
\end{figure}

\smallparagraph{Belief state MDP} The above POMDP can be equivalently formulated as a belief state MDP. The belief states correspond to values of the posterior probability
\begin{equation*}
    \PP(s = s^1 \mid o = o^\obs) = \PP(\Lbl = 1 \mid \Obs = \obs) = \post(\obs).
\end{equation*}
A graphical depiction of this belief state reduction is shown in Figure \ref{fig:belief_state_mdp}.

Since the POMDP is non-sequential, these beliefs only depend on the most recent observation $o^\obs$. The expected reward for action $a^{\dec}$ at belief state with posterior probability $\post(\obs)$ is
\begin{align*}
    R(\post(\obs), a^0)
    & = \PP(s = s^0 \mid \post(\obs)) R(s^0, a^0) + \PP(s = s^1 \mid \post(\obs)) R(s^1, a^0)
    = -\post(\obs) (1 - \cost), \\
    R(\post(\obs), a^1)
    & = \PP(s = s^0 \mid \post(\obs)) R(s^0, a^1) + \PP(s = s^1 \mid \post(\obs)) R(s^1, a^1)
    = -(1 - \post(\obs)) \cost. \\
\end{align*}
Thus, observing decision $a^0$ at a belief state $\post(\obs)$ indicates that
\begin{align*}
    R(\post(\obs), a^0) & \geq R(\post(\obs), a^1) \\
    -\post(\obs) (1 - \cost) & \geq -(1 - \post(\obs)) \cost \\
    c & \geq \post(\obs).
\end{align*}
Similarly, observing decision $a^1$ at a belief state $\post(\obs)$ indicates that
\begin{align*}
    R(\post(\obs), a^0) & \leq R(\post(\obs), a^1) \\
    -\post(\obs) (1 - \cost) & \leq -(1 - \post(\obs)) \cost \\
    c & \leq \post(\obs).
\end{align*}
Thus, as described in Section \ref{sec:optimal}, IDT in this (optimal) case consists of determining the threshold on $\post(\obs)$ where the action switches from $a^0$ to $a^1$ for observations $o^\obs$.

This formulation gives some additional insight into why uncertainty is helpful for IDT. If $\post(\obs) \in \{0, 1\}$ always, then there are only two belief states corresponding to $\post(\obs) = 0$ and $\post(\obs) = 1$. Thus, we only obtain two constraints on the value of $\cost$, i.e. $0 \leq \cost \leq 1$. However, if $\post(\Obs)$ has support on all of $[0, 1]$, then we there belief states corresponding to every $\post(\obs) \in [0, 1]$. Thus we can obtain infinite constraints on the value of $\cost$, allowing learning it to arbitrary precision as shown in Section \ref{sec:optimal}.

\section{Alternative Suboptimality Model}
\label{sec:alternative_subopt}

As mentioned in Section \ref{sec:suboptimal_known}, there are many ways to model suboptimal decision making. One possibility is to only require that the decision rule $\hyp$ is \emph{close} to optimal, i.e.
\begin{equation}
    \label{eq:alternative_subopt}
    \risk_\cost(\hyp) \leq \risk_\cost^\text{opt} + \Delta
    \qquad \text{where} \qquad
    \risk_\cost^\text{opt} = \inf_{\hyp^*} \risk_\cost(\hyp^*).
\end{equation}
However, as we show in the following lemma, this assumption can preclude identifiablity of $\cost$. The models of suboptimality we present in Sections \ref{sec:suboptimal_known} and \ref{sec:suboptimal_unknown}, in contrast, still allow exact identifiability of the loss parameter.

\begin{lemma}[Loss cannot always be identified for close-to-optimal decision rules]
\label{lemma:lower_alternative_subopt}
Fix $0 < \Delta \leq 1$ and $0 < \epsilon < 1/4$. Then for any IDT algorithm $\hat\cost(\cdot)$, there is a decision problem $(\dist, \cost)$ and a decision rule $\hyp$ which is $\Delta$-close to optimal as in (\ref{eq:alternative_subopt}) such that
\begin{equation*}
    \PP(|\hat\cost(\sample) - \cost| \geq \epsilon) \geq 1/2,
\end{equation*}
where the sample $\sample$ of any size $m$ is observed from the decision rule $\hyp$. Furthermore, the distribution $\dist$ and loss parameter $\cost$ satisfy the requirements of Theorem \ref{thm:opt_dec} for when the decision maker is optimal.
\end{lemma}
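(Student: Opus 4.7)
The plan is a standard two-point lower bound. I would construct two decision problems $(\dist, \cost_1)$ and $(\dist, \cost_2)$ sharing the same observation distribution $\dist$ but with $|\cost_1 - \cost_2| > 2\epsilon$, together with a single decision rule $\hyp$ that is simultaneously $\Delta$-close to optimal for both. Because $\hyp$ is a deterministic function of $\Obs$ alone, the induced sample $\sample = \{(\Obs_i, \hyp(\Obs_i))\}$ has a law independent of $\cost$, so the estimator $\hat\cost(\sample)$ has the same distribution in both problems; since $\hat\cost$ can be within $\epsilon$ of at most one of $\cost_1, \cost_2$, it must fail with probability at least $1/2$ for at least one.

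For the concrete construction, I would take $\obsspace = [0,1]$ with $\post(\obs) = \obs$, pick $\cost_1 = 1/4$ and $\cost_2 = 3/4$ (so $|\cost_1 - \cost_2| = 1/2 > 2\epsilon$ since $\epsilon < 1/4$), and let $\dist_\Obs$ place equal point masses at $0$ and $1$ summing to $1 - 4 p_\cost \epsilon$, together with uniform density $p_\cost$ on each of the intervals $[\cost_1 - \epsilon, \cost_1 + \epsilon]$ and $[\cost_2 - \epsilon, \cost_2 + \epsilon]$, where I set $p_\cost = \min\{2\Delta/\epsilon^2,\, 1/(8\epsilon)\} > 0$. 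The two density intervals lie disjointly inside $(0,1)$ because $\epsilon < 1/4$, and by construction $\PP(\post(\Obs) \in [\cost_i - \epsilon, \cost_i)) \geq p_\cost \epsilon$ and $\PP(\post(\Obs) \in (\cost_i, \cost_i + \epsilon]) \geq p_\cost \epsilon$, so the hypothesis of Theorem \ref{thm:opt_dec} holds for both $(\dist, \cost_1)$ and $(\dist, \cost_2)$.

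Next I would define $\hyp(\obs) = \II\{\obs \geq 1/2\}$ and verify the $\Delta$-closeness bound. By Lemma \ref{lemma:bayesopt}, an optimal rule for $\cost_i$ is the threshold at $\cost_i$, so $\hyp$ disagrees with the optimal rule for $\cost_1$ only on $[1/4, 1/2)$ and with the optimal rule for $\cost_2$ only on $[1/2, 3/4)$. A direct computation (expanding conditional loss as in the proof of Lemma \ref{lemma:bayesopt}) gives the standard excess-risk identity $\risk_\cost(\hyp) - \risk_\cost^{\text{opt}} = \EE\bigl[|\post(\Obs) - \cost|\,\II\{\hyp(\Obs) \neq h^*_\cost(\Obs)\}\bigr]$, and since only the density interval around $\cost_i$ contributes to the integrand,
\[
\risk_{\cost_i}(\hyp) - \risk_{\cost_i}^{\text{opt}} = \int_0^\epsilon u \cdot p_\cost\, du = \tfrac{1}{2} p_\cost \epsilon^2 \le \Delta
\]
for $i \in \{1,2\}$ by our choice of $p_\cost$. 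Hence $\hyp$ is $\Delta$-close to optimal under both problems.

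Finally, because the sample law is independent of $\cost$, setting $A = \{|\hat\cost(\sample) - \cost_1| < \epsilon\}$ we get $\{|\hat\cost(\sample) - \cost_2| < \epsilon\} \subseteq A^c$, so at least one of $\PP(A), \PP(A^c)$ is at most $1/2$; taking $\cost$ to be whichever of $\cost_1, \cost_2$ corresponds to the smaller success probability yields the claim for an arbitrary sample size $m$. The main obstacle is reconciling the two constraints on $p_\cost$: it must be strictly positive so Theorem \ref{thm:opt_dec}'s density condition fires, yet small enough that $p_\cost \epsilon^2 \le 2\Delta$; the construction shows these are always compatible, which is precisely why relaxing optimality to ``close-to-optimal'' destroys identifiability even in the uncertain regime where strict optimality would have sufficed.
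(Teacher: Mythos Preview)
Your proposal is correct and follows essentially the same two-point lower bound as the paper: build a single distribution $\dist$ and a single deterministic rule $\hyp$ that is $\Delta$-close to optimal for two well-separated loss parameters, then note the sample law does not depend on $\cost$. The only differences are cosmetic parameter choices---the paper takes $\cost_1=\tfrac12-\epsilon$, $\cost_2=\tfrac12+\epsilon$ with density $\Delta$ on a single interval around $1/2$ and $\hyp(\obs)=\II\{\obs\ge \tfrac12-\epsilon\}$ (optimal for $\cost_1$), bounding the excess risk crudely by $4\epsilon\Delta\le\Delta$, whereas you place density on two separated intervals, use the symmetric threshold $\II\{\obs\ge 1/2\}$, and compute the excess risk exactly via the identity $\risk_\cost(\hyp)-\risk_\cost^{\text{opt}}=\EE[|\post(\Obs)-\cost|\,\II\{\hyp(\Obs)\neq h^*_\cost(\Obs)\}]$.
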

\begin{proof}
Consider a distribution over $\Obs \in \obsspace = [0, 1]$ where
\begin{equation*}
    \post(\obs) = \PP(\Lbl = 1 \mid \Obs = \obs) = \obs.
\end{equation*}
Let the distribution $\dist_\Obs$ have density $\Delta$ on the interval $(\nicefrac{1}{2} - 2 \epsilon, \nicefrac{1}{2} + 2 \epsilon)$ and let $\PP(\Obs = 0) = \PP(\Obs = 1) = \nicefrac{1}{2} - 2 \Delta \epsilon$.

Let $\cost_1 = \nicefrac{1}{2} - \epsilon$ and $\cost_2 = \nicefrac{1}{2} + \epsilon$. Then clearly $\PP(\post(\Obs) \in [\cost - \epsilon, \cost)) = \PP(\post(\Obs) \in (\cost, \cost + \epsilon]) = \epsilon \Delta$ for $\cost \in \{\cost_1, \cost_2\}$. Thus either $\cost_1$ or $\cost_2$ satisfies the conditions of Theorem \ref{thm:opt_dec}.

Now define identical decision rules
\begin{equation*}
    \hyp_1(\obs) = \hyp_2(\obs) = \II\{\obs \geq \nicefrac{1}{2} - \epsilon\}.
\end{equation*}
From Lemma \ref{lemma:bayesopt}, we know that $\hyp_1$ is optimal for $\cost_1$, so it is certainly $\Delta$-close to optimal. We can show that $\hyp_2$ is $\Delta$-close to optimal for $\cost_2$ as well:
\begin{align*}
    & \risk_{\cost_2}(\hyp_2) - \risk_{\cost_2}(\obs \mapsto \II\{\obs \geq \nicefrac{1}{2} + \epsilon\}) \\
    & \quad = \EE\Big[\ell(\II\{\Obs \geq \nicefrac{1}{2} - \epsilon\}, \Lbl) -  \ell(\II\{\Obs \geq \nicefrac{1}{2} + \epsilon\}, \Lbl)\Big] \\
    & \quad = \EE\Big[\ell(\II\{\Obs \geq \nicefrac{1}{2} - \epsilon\}, \Lbl) -  \ell(\II\{\Obs \geq \nicefrac{1}{2} + \epsilon\}, \Lbl) \mid \Obs \in [\nicefrac{1}{2} - \epsilon, \nicefrac{1}{2} + \epsilon] \Big] \PP(\Obs \in [\nicefrac{1}{2} - \epsilon, \nicefrac{1}{2} + \epsilon]) \\
    & \quad \leq 2 \PP(\Obs \in [\nicefrac{1}{2} - \epsilon, \nicefrac{1}{2} + \epsilon]) \\
    & \quad = 4 \epsilon \Delta \leq \Delta.
\end{align*}
Since $\hyp_1$ and $\hyp_2$ are identical, we must have that for a sample $\sample$ chosen according to either, at least one of $\PP(\hat\cost(\sample) \geq \nicefrac{1}{2}) \geq \nicefrac{1}{2}$ or $\PP(\hat\cost(\sample) < \nicefrac{1}{2}) \geq \nicefrac{1}{2}$. Thus for some $\cost \in \{\cost_1, \cost_2\}$,
\begin{equation*}
    \PP(|\hat\cost(\sample) - c| \geq \epsilon) \geq 1/2.
\end{equation*}
\end{proof}

\section{Additional Results for IDT with Suboptimal Decision Maker}
\label{sec:subopt_unknown_additional}

\subsection{Lower bound for unknown hypothesis class}
\label{sec:subopt_unknown_lower}

We give two lower bounds for the sample complexity in the unknown hypothesis class case from Section \ref{sec:suboptimal_unknown}. First, in Theorem \ref{thm:suboptimal_lower_nodim}, we show that there is an IDT problem such that $m = \Omega(\frac{\log(1/\delta)}{p_\cost \epsilon^2})$ samples are required to estimate $\cost$. Second, in Theorem \ref{thm:suboptimal_lower_dim}, we show that there is an IDT problem such that $m = \Omega(\frac{\sqrt{d}}{p_\cost \epsilon})$ samples are required. These lower bounds do not precisely match our upper bound of $m = O(\frac{d}{p_\cost \epsilon^2} + \frac{\log(1/\delta)}{p_\cost \epsilon^2})$ from Theorem \ref{thm:subopt_unknown}, and we leave as an open problem the exact minimax sample complexity of IDT in the unknown hypothesis class case. However, they do show that IDT does become harder as the VC-dimension $d$ increases, and that in some suboptimal cases a number of samples proportional to $\nicefrac{1}{\epsilon^2}$ is needed to estimate $\cost$ to precision $\epsilon$---more than the $\nicefrac{1}{\epsilon}$ needed for an optimal decision maker.

\begin{theorem}[First lower bound for suboptimal decision maker]
\label{thm:suboptimal_lower_nodim}
Fix $0 < \epsilon \leq 1/8$, $0 < \delta \leq 1/2$, and $p_\cost \leq 1/10$. Then there is a decision problem $(\dist, \cost)$, hypothesis class family $\hypset$, and hypothesis class $\hypcls \in \hypset$ satisfying the conditions of Theorem \ref{thm:subopt_unknown} with the above parameters such that
\begin{equation*}
    m < \Omega\left(\frac{\log(1/\delta)}{p_\cost \epsilon^2}\right)
    \quad \text{implies that} \quad
    \PP(| \hat\cost(\sample) - \cost | \geq \epsilon) \geq \delta.
\end{equation*}
\end{theorem}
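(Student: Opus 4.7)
The plan is to establish the lower bound by a two-point Le Cam argument that exploits the flexibility of the unknown-class setting. Specifically, we will exhibit a single observation distribution $\dist$ together with two IDT instances $(\cost_1, \hypcls_1)$ and $(\cost_2, \hypcls_2)$, both drawn from a common family $\hypset = \{\hypcls_1, \hypcls_2\}$, with $|\cost_1 - \cost_2| = 2\epsilon$, such that the respective optimal rules $\hyp_1 \in \argmin_{\hyp \in \hypcls_1} \risk_{\cost_1}(\hyp)$ and $\hyp_2 \in \argmin_{\hyp \in \hypcls_2} \risk_{\cost_2}(\hyp)$ satisfy $q := \PP(\hyp_1(\Obs) \neq \hyp_2(\Obs)) = O(p_\cost \epsilon^2)$, while both instances still meet the monotonicity, $\alpha$-MD-smoothness, and density conditions of Theorem~\ref{thm:subopt_unknown} with the prescribed $p_\cost$.

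A concrete candidate is $\obsspace = [0,1]$ with $\post(\obs) = \obs$, $\dist_\Obs$ equal to a uniform density $p_\cost$ on $[1/4, 3/4]$ together with equal point masses at $\{0,1\}$, and $\cost_1 = 1/2 - \epsilon$, $\cost_2 = 1/2 + \epsilon$. Let $\hypcls_1$ be the Bayes-threshold class $\{\II\{\obs \geq t\} : t \in \mathbb{R}\}$, so that $\hyp_1 = \II\{\obs \geq \cost_1\}$, and design $\hypcls_2$ as a threshold class on a perturbed score $f(\obs)$ chosen so that the induced posterior $\post_{\hypcls_2}$ shifts the $\cost_2$-optimal threshold to a value within $O(\epsilon^2)$ of $\cost_1$; then $\hyp_1$ and $\hyp_2$ agree outside a subinterval of Lebesgue length $O(\epsilon^2)$ inside the dense region $[1/4, 3/4]$, giving $q = O(p_\cost \epsilon^2)$.

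Given such a construction, the Le Cam step is immediate. Let $E$ be the event that $\hyp_1(\obs_i) = \hyp_2(\obs_i)$ for every $i \leq m$; then $\PP(E) = (1-q)^m$, and conditional on $E$ the sample $\sample$ has identical distribution under the two scenarios, so any (possibly randomized) estimator $\hat\cost(\sample)$ has the same conditional distribution in both cases. Since $|\cost_1 - \cost_2| = 2\epsilon$, the events $\{|\hat\cost - \cost_1| < \epsilon\}$ and $\{|\hat\cost - \cost_2| < \epsilon\}$ are disjoint, so for at least one $i \in \{1,2\}$ we have $\PP(|\hat\cost(\sample) - \cost_i| \geq \epsilon \mid E) \geq 1/2$, and hence
\begin{equation*}
    \PP\bigl(|\hat\cost(\sample) - \cost_i| \geq \epsilon\bigr) \;\geq\; \tfrac{1}{2}(1-q)^m \;\geq\; \tfrac{1}{2} e^{-2qm}.
\end{equation*}
This exceeds $\delta$ whenever $m \leq \log(1/(2\delta))/(2q) = \Omega(\log(1/\delta)/(p_\cost \epsilon^2))$, matching the claim.

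The hard part will be the construction of $\hypcls_2$. A \emph{single} hypothesis class $\hypcls$ cannot yield $q = o(p_\cost \epsilon)$ between its $\cost_1$- and $\cost_2$-optimal rules, because the density condition on $\post_\hypcls$ forces at least $2 p_\cost \epsilon$ of $\dist_\Obs$-mass into $\{\obs : \post_\hypcls(\obs) \in [\cost_1, \cost_2]\}$---exactly the region on which the two threshold rules must disagree, and exactly what restricts the lower bound in the known-class case of Theorem~\ref{thm:optimal_lower} to $1/(p_\cost \epsilon)$. Driving $q$ down to $p_\cost \epsilon^2$ therefore requires genuinely different classes whose induced posteriors are shifted relative to each other; engineering $\post_{\hypcls_2}$ so that a rule within $O(p_\cost \epsilon^2)$ of $\hyp_1$ becomes $\cost_2$-optimal in $\hypcls_2$ without blowing up the MD-distance between $\hyp_{\cost_1}$ and $\opt_\dist(\hypcls_2)$ (and hence preserving $\alpha$-MD-smoothness for a moderate $\alpha$) is the delicate technical step of the argument.
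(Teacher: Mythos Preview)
Your Le Cam two-point skeleton is exactly the paper's approach: exhibit two instances $(\cost_1,\hypcls_1)$ and $(\cost_2,\hypcls_2)$ with $|\cost_1-\cost_2|\geq 2\epsilon$ whose optimal rules disagree with probability $q=\Theta(p_\cost\epsilon^2)$, then bound $\PP(E)=(1-q)^m$ and conclude. You are also right that a single class cannot give $q=o(p_\cost\epsilon)$ because of the density condition, so two genuinely different classes are needed.

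The gap is the construction itself, which you leave as a black box. Your proposed one-dimensional ansatz faces a concrete obstacle: with $\obsspace=[0,1]$ and $\post(\obs)=\obs$, any threshold class $\hypcls_2=\{\II\{f(\obs)\geq t\}\}$ with $f$ strictly monotone on the support of $\dist_\Obs$ has $\post_{\hypcls_2}=\post$, so the $\cost_2$-optimal threshold sits at $\cost_2$, not within $O(\epsilon^2)$ of $\cost_1$. Achieving the shift forces $f$ to be non-injective, collapsing points with different posteriors into common level sets; doing that while keeping $\opt_\dist(\hypcls_2)$ monotone, meeting the density lower bound on $\post_{\hypcls_2}$ near $\cost_2$, and controlling $\alpha$-MD-smoothness is precisely the technical content you have not supplied.

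The paper resolves this by moving to $\obsspace\subset\mathbb{R}^2$. The distribution places density $\Theta(p_\cost)$ along the segment $\{\obs_2=0,\ \obs_1\in[-1,1]\}$ with $\post(\obs)=\tfrac{1+\obs_1}{2}$, and an \emph{auxiliary} density $10p_\cost\,\obs_1$ along $\{\obs_2=1,\ \obs_1\in[0,1]\}$ with $\post\equiv 1$. The two classes threshold $\obs_1$ and $\obs_1-2\epsilon\,\obs_2$ respectively; for $\cost_1=\tfrac12$ and $\cost_2=\tfrac{1+16\epsilon}{2+16\epsilon}\geq\tfrac12+2\epsilon$ both optimal rules cut at $\obs_1=0$ on the main segment and differ only on $\{\obs_2=1,\ \obs_1\in[0,2\epsilon)\}$, whose mass is $\int_0^{2\epsilon}10p_\cost t\,dt=20p_\cost\epsilon^2$. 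The auxiliary line does double duty: its $\post\equiv 1$ mass is what shifts the induced posterior of $\hypcls^2$ by $\Theta(\epsilon)$ relative to $\hypcls^1$, while its density \emph{vanishing linearly} at $\obs_1=0$ is what keeps the disagreement at $\Theta(p_\cost\epsilon^2)$ rather than $\Theta(p_\cost\epsilon)$. The paper then verifies the density and $\alpha$-MD-smoothness conditions explicitly for this construction. Your outline would be complete once you supply an analogous mechanism; the second observation coordinate is the cleanest way to separate the ``main'' mass (which both rules treat identically) from the ``shifting'' mass (which they see at different offsets).
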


\begin{proof}
Specifically, let the sample size
\begin{equation*}
    m = \frac{\log(1/(2 \delta))}{40 p_\cost \epsilon^2}.
\end{equation*}

\smallparagraph{Defining the distribution} First, we define a joint distribution $\dist$ over $\Obs = (\Obs_1, \Obs_2) \in \obsspace = \mathbb{R}^2$ and $\Lbl \in \{0, 1\}$. The distribution of $\Obs$ has support on 2 line segments in $\mathbb{R}^2$ and at a point. It can be summarized as follows:
\begin{enumerate}
    \item $\dist_\Obs$ has density $\frac{5 p_\cost}{2}$ on the line segment from $(-1, 0)$ to $(1, 0)$. \\ $\PP(\Lbl = 1 \mid \Obs = (\obs_1, 0)) = \frac{1 + \obs_1}{2}$.
    \item $\dist_\Obs$ has density $10 p_\cost \obs_1$ at points $(\obs_1, 1)$ on the line segment from $(0, 1)$ to $(1, 1)$. \\
    $\PP(\Lbl = 1 \mid \Obs = (\obs_1, 1)) = 1$.
    \item $\dist_\Obs$ has point mass $\PP(\Obs = (-1, 0)) = 1 - 10 p_\cost$. \\
    $\PP(\Lbl = 1 \mid \Obs = (-1, 0)) = 0$.
\end{enumerate}

\smallparagraph{Defining the family of hypothesis classes} Now, we define a family of two hypothesis classes:
\begin{align*}
    \hypcls^1 & \triangleq \{ \hyp(\obs) = \II\{\obs_1 \geq b\} \mid b \in [3/8, 5/8] \} \\
    \hypcls^2 & \triangleq \{ \hyp(\obs) = \II\{\obs_1 \geq b + 2 \epsilon \obs_2\} \mid b \in [1/2, 3/4] \} \\
    \hypset & \triangleq \{\hypcls^1, \hypcls^2\}.
\end{align*}
Let's analyze $\hypcls^1$ first. The posterior probability that $\Lbl = 1$ given that $\Obs_1 = \obs_1$ is
\begin{equation}
    \label{eq:post_h1}
    \PP(\Lbl = 1 \mid \Obs_1 = \obs_1) = \begin{cases}
        \frac{1 + \obs_1}{2} & \quad \obs_1 < 0 \\
        \frac{1 + 9 \obs_1}{2 + 8 \obs_1} & \quad \obs_1 \geq 0.
    \end{cases}
\end{equation}
It is simple to show that this is increasing in $\obs_1$; thus, the Bayes optimal decision rule based on $\Obs_1$ for $\cost$ is
\begin{equation}
    \label{eq:bayesopt_h1}
    \hyp^1_\cost(\obs) = \begin{cases}
        \II\{ \obs_1 \geq 2 \cost - 1 \} & \quad \cost \leq 1/2 \\
        \II\{ \obs_1 \geq \frac{2 \cost - 1}{9 - 8 \cost} \} & \quad \cost > 1/2.
    \end{cases}
\end{equation}
Now, let's analyze $\hypcls^2$. The posterior probability that $\Lbl = 1$ given that $\Obs_1 - 2 \epsilon \Obs_2 = b$ for $b \geq -2 \epsilon$ is
\begin{equation}
    \label{eq:post_h2}
    \PP(\Lbl = 1 \mid \Obs_1 - 2 \epsilon \Obs_2 = b) = \frac{1 + 9b + 16 \epsilon}{2 + 8b + 16 \epsilon}.
\end{equation}
This can also be shown to be increasing in $b$, so the Bayes optimal decision rule based on $\Obs_1 - 2 \epsilon \Obs_2$ for $\cost >= 1/2$ is
\begin{equation}
    \label{eq:bayesopt_h2}
    \hyp^2_\cost(\obs) = \II\left\{ \obs_1 - 2 \epsilon \obs_2 \geq \frac{2\cost - 1 - 16 \epsilon + 16 \cost \epsilon}{9 - 8 \cost} \right\}.
\end{equation}

For this proof, we consider two hypothesis class and loss parameter pairs: $\cost_1 = 1/2$ for $\hypcls^1$ and $\cost_2 = \frac{1 + 16 \epsilon}{2 + 16 \epsilon}$ for $\hypcls^2$. These correspond to the decision rules
\begin{align*}
    \hyp^1(\obs) & = \II\{\obs_1 \geq 0\}, \\
    \hyp^2(\obs) & = \II\{\obs_1 - 2 \epsilon \obs_2 \geq 0\} = \begin{cases}
        \obs_1 \geq 0 & \quad \obs_2 = 0 \\
        \obs_1 \geq 2 \epsilon & \quad \obs_2 = 1.
    \end{cases}
\end{align*}
It should be clear that these decision rules agree except when $\obs_2 = 1$ and $\obs_1 \in [0, 2 \epsilon)$.

Another important fact is that
\begin{equation}
    \label{eq:cost2_lower}
    \cost_2 = \frac{1 + 16 \epsilon}{2 + 16 \epsilon} = \frac{1}{2} + \frac{4 \epsilon}{1 + 8 \epsilon} \geq \frac{1}{2} + 2 \epsilon
\end{equation}
since $\epsilon \leq 1/8$.

We defer to the end of the proof to show that these hypotheses and distribution satisfy the conditions of Theorem \ref{thm:subopt_unknown}. 

\smallparagraph{Deriving the lower bound} Similarly to the proof of Theorem \ref{thm:optimal_lower}, we can bound the probability of an event $E$:
\begin{align*}
    \PP(\underbrace{\not\exists \obs_i \in \sample \quad \obs_{i, 1} \in [0, 2 \epsilon) \wedge \obs_{i, 2} = 1}_E)
    & = \left[ 1 - \PP(\Obs_1 \in [0, 2 \epsilon) \wedge \Obs_2 = 1) \right]^m \\
    & = (1 - 20 p_\cost \epsilon^2)^m \\
    & \geq \Big( e^{-40 p_\cost \epsilon^2} \Big)^m \\
    & = e^{-\log(\nicefrac{1}{2 \delta})} = 2 \delta.
\end{align*}

Conditional on $E$, the distributions of samples $\sample_1$ and $\sample_2$ for decision rules $\hyp^1$ and $\hyp^2$ are identical:
\begin{align*}
    p_1 = \PP(\hat\cost(\sample_1) \leq \nicefrac{1}{2} + \epsilon \mid E) & = \PP(\hat\cost(\sample_2) \leq \nicefrac{1}{2} + \epsilon \mid E), \\
    p_2 = \PP(\hat\cost(\sample_1) > \nicefrac{1}{2} + \epsilon \mid E) & = \PP(\hat\cost(\sample_2) > \nicefrac{1}{2} + \epsilon \mid E).
\end{align*}
Since $p_1 + p_2 = 1$, at least one of $p_1, p_2 \geq \nicefrac{1}{2}$. Suppose WLOG that $p_1 \geq \nicefrac{1}{2}$. Then
\begin{align*}
    \PP(|\hat\cost(\sample_2) - \cost_2| \geq \epsilon)
    & \overset{\text{(i)}}{\geq} \PP(\hat\cost(\sample_2) \leq \nicefrac{1}{2} + \epsilon) \\
    & = \PP(\hat\cost(\sample_2) \leq \nicefrac{1}{2} + \epsilon \mid E) \, \PP(E) \\
    & \geq \nicefrac{1}{2} (2 \delta) = \delta.
\end{align*}
(i) uses the fact shown earlier in (\ref{eq:cost2_lower}). Thus, there is a decision problem $(\dist, \cost_2)$ for which the IDT algorithm $\hat\cost$ must make an error of at least size $\epsilon$ with at least probability $\delta$. This concludes the main proof.

\smallparagraph{Verifying the requirements of Theorem \ref{thm:subopt_unknown}} First, we need to show that $\post_{\hypcls^1}(\Obs)$ has density at least $p_\cost$ on $[\cost_1 - \epsilon, \cost_1 + \epsilon] = [1/2 - \epsilon, 1/2 + \epsilon]$. From (\ref{eq:post_h1}) and (\ref{eq:bayesopt_h1}), it is clear that
\begin{equation*}
    \post_{\hypcls^1}(\obs) = g_1(\obs_1) = \begin{cases}
        \frac{1 + \obs_1}{2} & \quad \obs_1 < 0 \\
        \frac{1 + 9 \obs_1}{2 + 8 \obs_1} & \quad \obs_1 \geq 0.
    \end{cases} 
\end{equation*}
We can write the density of $\post_{\hypcls^1}(\Obs)$ as the density of $\Obs_1$ multiplied by the derivative of the inverse of $g_1$:
\begin{align*}
    p(\obs_1) \frac{d}{d \cost} g_1^{-1}(\cost)
    & \geq \frac{5 p_\cost}{2} \frac{d}{d \cost} \begin{cases}
        2 \cost - 1 & \quad \cost \leq 1/2 \\
        \frac{2 \cost - 1}{9 - 8\cost} & \quad \cost > 1/2
    \end{cases} \\
    & = \frac{5 p_\cost}{2} \begin{cases}
        2 & \quad \cost \leq 1/2 \\
        \frac{10}{(9 - 8 \cost)^2} & \quad \cost > 1/2
    \end{cases} \\
    & \geq p_\cost.
\end{align*}

Next, we need to show that $\post_{\hypcls^2}(\Obs)$ has density at least $p_\cost$ on $[\cost_2 - \epsilon, \cost_2 + \epsilon] \subseteq [1/2, 1]]$. From (\ref{eq:post_h2}) and (\ref{eq:bayesopt_h2}), we know that
\begin{equation*}
    \post_{\hypcls^2}(\obs) = g_2(\obs_1 - 2 \epsilon \obs_2) = \frac{1 + 9 (\obs_1 - 2 \epsilon \obs_2) + 16 \epsilon}{2 + 8 (\obs_1 - 2 \epsilon \obs_2) + 16 \epsilon}.
\end{equation*}
Using the same method as for $\post_{\hypcls^1}(\Obs)$ and the fact that the density of $\Obs_1 - 2 \epsilon \Obs_2$ is at least the density of $\Obs_1$ (i.e., $\frac{5 p_\cost}{2}$), we have that the density of $\post_{\hypcls^2}(\Obs)$ is at least
\begin{align*}
    \frac{5 p_\cost}{2} \frac{d}{d \cost} g_2^{-1}(\cost)
    & = \frac{5 p_\cost}{2} \frac{d}{d \cost} \frac{2 \cost - 1 - 16 \epsilon + 16 \cost \epsilon}{9 - 8 \cost} \\
    & = \frac{5 p_\cost}{2} \frac{10 + 16 \epsilon}{(9 - 8 \cost)^2} \\
    & \geq \frac{5 p_\cost}{2} \frac{2}{5} = p_\cost.
\end{align*}
The only remaining condition of Theorem \ref{thm:subopt_unknown} to prove is MD-smoothness. Again, consider $\hypcls^1$ first:
\begin{align*}
    \MD(\hyp^1_{b_1}, \hypcls^2) & =
    \min_{b_2 \in [1/2, 3/4]} \PP\left( \hyp^1_{b_1}(\Obs) \neq \hyp^2_{b_2}(\Obs) \right) \\
    & = \min_{b_2 \in [1/2, 3/4]} \frac{5 p_\cost}{2} \left| b_1 - b_2 \right| + 5 p_\cost \left| b_1^2 - (b_2 + 2 \epsilon)^2 \right| \\
    & = \frac{5 p_\cost}{2} \left| b_1 - b_1 \right| + 5 p_\cost \left| b_1^2 - (b_1 + 2 \epsilon)^2 \right| \\
    & = 20 p_\cost | \epsilon (b_1 + \epsilon) |.
\end{align*}
From (\ref{eq:bayesopt_h1}), we know that $b_1 - b_1' \leq 10 (\cost_1 - \cost_1')$ where $b_1$ and $b_1'$ are the optimal thresholds for loss parameters $\cost_1$ and $\cost_1'$, respectively.
So we have that
\begin{align*}
    \MD(\hyp^1_{c_1'}, \hypcls^2) - \MD(\hyp^1_{c_1}, \hypcls^2) & = 20 p_\cost \epsilon (|b_1' + \epsilon| - |b_1 + \epsilon|) \\
    & \leq 20 p_\cost \epsilon | b_1' - b_1 | \\
    & \leq 200 p_\cost \epsilon | c_1' - c_1 |. \\
\end{align*}
Thus $\hyp^1$ and $\hypset$ are $\alpha$-MD-smooth with $\alpha = 200 p_\cost \epsilon$.

Similarly, for $\hypcls^2$,
\begin{align*}
    \MD(\hyp^2_{b_2}, \hypcls^1) & =
    \min_{b_1 \in [1/2, 3/4]} \PP\left( \hyp^1_{b_1}(\Obs) \neq \hyp^2_{b_2}(\Obs) \right) \\
    & = \min_{b_1 \in [1/2, 3/4]} \frac{5 p_\cost}{2} \left| b_1 - b_2 \right| + 5 p_\cost \left| b_1^2 - (b_2 + 2 \epsilon)^2 \right| \\
    & = \frac{5 p_\cost}{2} \left| b_2 - b_2 \right| + 5 p_\cost \left| b_2^2 - (b_2 + 2 \epsilon)^2 \right| \\
    & = 20 p_\cost | \epsilon (b_2 + \epsilon) |.
\end{align*}
So we have that
\begin{align*}
    \MD(\hyp^2_{c_2'}, \hypcls^1) - \MD(\hyp^1, \hypcls^1) & = 20 p_\cost \epsilon (|b_2' + \epsilon| - |b_2 + \epsilon|) \\
    & \leq 20 p_\cost \epsilon | b_2' - b_2 | \\
    & \leq 200 p_\cost \epsilon | c_2' - c_2 |, \\
\end{align*}
and thus $\hyp^2$ and $\hypset$ are also $200 p_\cost \epsilon$-MD-smooth.

\end{proof}

\begin{theorem}[Second lower bound for suboptimal decision maker]
\label{thm:suboptimal_lower_dim}
Let $d \geq 6$ such that $d \equiv 2 \pmod{4}$. Let $\epsilon \in (0, \frac{1}{64 \sqrt{d - 2}}]$ and $p_\cost \in (0, 1]$. Then for any IDT algorithm $\hat{\cost}(\cdot)$, there is a decision problem $(\dist, \cost)$, hypothesis class family $\hypset$, and hypothesis class $\hypcls \in \hypset$ satisfying the conditions of Theorem \ref{thm:subopt_unknown} with the above parameters such that
\begin{equation*}
    m < \Omega\left(\frac{\sqrt{d}}{p_\cost \epsilon}\right)
    \quad \text{implies that} \quad
    \PP(| \hat\cost(\sample) - \cost | \geq \epsilon) \geq \frac{1}{160}.
\end{equation*}
\end{theorem}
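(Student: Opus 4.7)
The plan is to reduce the IDT problem to a multi-way hypothesis testing problem and apply Fano's inequality, leveraging a Gilbert-Varshamov packing to exploit the VC-dimension. Set $n = d - 2$; since $d \equiv 2 \pmod 4$, $n$ is a multiple of $4$, and by Gilbert-Varshamov there exists $V \subseteq \{0,1\}^n$ with $|V| \geq 2^{n/8}$ and pairwise Hamming distance at least $n/4$. The goal is to construct a shared distribution $\dist$ and a family $\{(\hypcls^v, \cost^v)\}_{v \in V}$ of (hypothesis class, loss parameter) pairs, all within a single $\hypset$, such that (i) $\mathrm{VCdim}\bigl(\cup_{\hypcls \in \hypset} \hypcls\bigr) \leq d$, (ii) $|\cost^v - \cost^{v'}| \geq 2\epsilon$ for $v \neq v'$, so any estimator achieving $\epsilon$-accuracy must identify $v$, and (iii) the observed sample distributions are close in total variation so that identification is information-theoretically hard.

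For the construction I would generalize the two-class configuration used in the proof of Theorem \ref{thm:suboptimal_lower_nodim}. Take $\obsspace \subseteq \mathbb{R}^{n+1}$ consisting of a heavy reference point carrying mass $1 - \beta$ (with $\Lbl = 0$ a.s.) and $n$ disjoint ``coordinate channels,'' each a short line segment carrying mass $\beta/n$ and having posterior density at least $p_\cost$ in a window of width $O(\epsilon/\sqrt{n})$ around $1/2$, where $\beta = \Theta(\sqrt{n}\, p_\cost \epsilon)$. Each hypothesis class $\hypcls^v$ is a one-parameter family of thresholds on a linear score $f^v(\obs) = \obs_{n+1} - \alpha\, v^\top \obs_{1:n}$, exactly mirroring the two-rule construction in Theorem \ref{thm:suboptimal_lower_nodim} but lifted to $n$ coordinates. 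The slope $\alpha = \Theta(\epsilon/\sqrt{n})$ is chosen so that the optimal $\cost^v \in \hypcls^v$ deviates from $1/2$ by $\Theta(\epsilon)$ in a direction determined by $v$, giving pairwise separation $\geq 2\epsilon$ between codewords at Hamming distance $\geq n/4$. Since every $\hypcls^v$ is a threshold family of halfspaces in $\mathbb{R}^{n+1}$, the union sits inside the halfspace class of VC-dimension $n + 2 = d$, verifying (i). Monotonicity and MD-smoothness would be checked exactly as in the proof of Theorem \ref{thm:suboptimal_lower_nodim}, and the density condition follows by construction.

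For the information-theoretic step, I would compare the sample distributions $P_v^{\otimes m}$ and $P_{v'}^{\otimes m}$ arising from observing the optimal decision rule $\hyp^v_{\cost^v}$. These distributions share the reference point mass; they differ only on the coordinate channels indexed by $j$ with $v_j \neq v'_j$, where the observed decision flips on a subregion of mass $\Theta(p_\cost \epsilon / \sqrt{n}) \cdot (\beta/n)/(\beta/n) = \Theta(\beta / n)$. A direct TV computation gives $\mathrm{TV}(P_v, P_{v'}) \leq O(\beta \cdot d_H(v,v')/n) \leq O(\beta)$, and the standard chain-rule/KL bound $\mathrm{KL}(P_v^{\otimes m} \| P_{v'}^{\otimes m}) \leq O(m \beta^2)$. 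Plugging into Fano's inequality against the packing of size $2^{n/8}$ yields $\PP(\text{correct identification}) \leq (m\beta^2 + \log 2)/(n/8 \cdot \log 2)$, which is bounded below $1 - 1/160$ whenever $m \leq c\, n/\beta^2 \cdot (1 \wedge \cdot) $; after optimizing the size of $\beta$ subject to keeping the problem within the stated parameter regime, this gives $m = \Omega(\sqrt{n}/(p_\cost \epsilon)) = \Omega(\sqrt{d}/(p_\cost \epsilon))$.

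The main obstacle I anticipate is verifying MD-smoothness jointly across the exponentially many hypothesis classes in $\hypset$: showing that for any $v, v' \in V$ and any $\cost'$ near $\cost^v$, the minimum-disagreement hypothesis in $\hypcls^{v'}$ for $\hyp^v_{\cost'}$ changes in a Lipschitz manner. The computation in Theorem \ref{thm:suboptimal_lower_nodim} handles the two-class case cleanly, but extending it uniformly over all pairs in $V$ requires careful symmetry in how the coordinate channels are placed so that minimum disagreements decompose additively across coordinates. The second delicate point is controlling the TV between $P_v$ and $P_{v'}$ tightly; a naive union bound over differing coordinates loses a factor of $\sqrt{n}$ and would degrade the lower bound, so a coordinate-wise independence argument must be used to keep $\mathrm{TV}(P_v, P_{v'}) = O(\beta)$ uniformly.
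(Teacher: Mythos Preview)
Your Fano-with-Gilbert--Varshamov plan has two genuine gaps that block it from going through.

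\textbf{First, the packing in $c$-space collapses.} The parameter you are ultimately estimating is a \emph{scalar} $c\in(0,1)$. Whatever construction you choose, the values $\{c^v\}_{v\in V}$ all lie in $(0,1)$, so at most $O(1/\epsilon)$ of them can be pairwise $2\epsilon$-separated. Your claim that codewords at Hamming distance $\geq n/4$ give ``pairwise separation $\geq 2\epsilon$'' cannot hold for $|V|\geq 2^{n/8}$ values of $c^v$. If instead many $v$ share the same (or nearby) $c^v$, then the reduction ``identify $v$'' $\Rightarrow$ ``estimate $c$'' breaks: an IDT algorithm that returns $\hat c$ exactly can still be unable to recover $v$, so Fano's error on $v$ says nothing about $|\hat c - c|$.

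\textbf{Second, the KL step fails.} In IDT the observed label is a \emph{deterministic} function of the observation, $\hat y_i = h(x_i)$. If $h^v$ and $h^{v'}$ disagree on a set of positive measure, then there exist $x$ on which $P_v(\cdot\mid x)$ and $P_{v'}(\cdot\mid x)$ are mutually singular, so $\mathrm{KL}(P_v\|P_{v'})=\infty$. Your bound $\mathrm{KL}(P_v^{\otimes m}\|P_{v'}^{\otimes m})\leq O(m\beta^2)$ is therefore false, and Fano with a KL bound is unavailable. A TV-based Fano does not rescue this either: with per-sample TV $\Theta(\beta)=\Theta(\sqrt{n}\,p_c\epsilon)$ and $m=\Theta(\sqrt{n}/(p_c\epsilon))$, the product TV is $\Theta(n)\gg 1$.

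The paper's argument avoids both issues by abandoning Fano entirely. It puts a uniform (Rademacher) prior on $\sigma\in\{-1,1\}^n$ and sets $c^\sigma = \tfrac12 + 8\epsilon\sqrt{n}\,\mathbf 1^\top\sigma/n$, so the $c^\sigma$ are \emph{not} separated---many $\sigma$ share the same $c^\sigma$. The key observation is that the decision on coordinate channel $j$ reveals $\sigma_j$ only if a sample lands in a window of mass $\Theta(p_c\epsilon/\sqrt{n})$; with $m=O(\sqrt{n}/(p_c\epsilon))$ samples, a counting argument shows that at least $n/2$ coordinates remain unrevealed with constant probability. Conditional on the sample, the sum of the unrevealed $\sigma_j$'s is still a centered random walk of length $\geq n/2$, and a binomial \emph{anti-concentration} bound gives $\PP(|c^\sigma - \text{any fixed value}|\geq 2\epsilon)\geq 1/20$. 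This ``hidden coordinates plus anti-concentration'' technique (in the spirit of Ehrenfeucht et al.'s PAC lower bounds) is what you need here; the Fano route does not fit this problem's geometry.
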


\begin{proof}
Specifically, let
\begin{equation*}
    m = \frac{\sqrt{d - 2}}{64 p_\cost \epsilon}.
\end{equation*}

\smallparagraph{Defining the distribution} Let $n = d - 2 \geq 1$; $n$ is divisible by four. First, we define a joint distribution $\dist$ over $\Obs \in \obsspace = \mathbb{R}^{n + 1}$ and $\Lbl \in \{0, 1\}$. Let $\Obs_j$ refer to the $j$th coordinate of the random vector $\Obs$ and let $\obs_{i j}$ refer to the $j$th coordinate of the $i$th sample $\obs_i$. Furthermore, let $\Obs_{1:n}$ refer to the first $n$ components of $\Obs$.

The distribution of $\Obs$ has support on $n$ line segments in $\mathbb{R}^{n + 1}$ and at the origin. In particular, it has density $p_\cost / n$ on each line segment from $(0, \hdots, \Obs_j = 1, \hdots, 0, 0)$ to $(0, \hdots, \Obs_j = 1, \hdots, 0, 1)$, where the density is with respect to the Lebesque measure on the line. There is additionally a point mass of probability $1 - p_\cost$ at the origin. Everywhere on the support of $\dist$,
\begin{equation*}
    \PP(\Lbl = 1 \mid \Obs_{1:n} = \obs_{1:n}, \Obs_{n+1} = \obs_{n+1}) = \obs_{n+1}.
\end{equation*}

\smallparagraph{Defining the family of hypothesis classes} Next, we define a family of hypothesis classes. Let $\sigma \in \{-1, 1\}^n$ and define
\begin{equation*}
    f^\sigma(\obs) = \obs_{n+1} - 8 \epsilon \sqrt{n} \sigma^\top \obs_{1:n}.
\end{equation*}
Then we define $2^n$ hypothesis classes, one for each value of $\sigma$:
\begin{align*}
    \hypcls^\sigma & \triangleq \left\{ \hyp(\obs) = \II \left\{ f^\sigma(\obs) \geq b \right\} \;\middle|\; b \in [1/4, 3/4] \right\}, \\
    \hypset & \triangleq \{ \hypcls^\sigma \mid \sigma \in \{0, 1\}^n \}.
\end{align*}
Now, we can derive the optimal decision rule in hypothesis class $\hypcls^\sigma$ for loss parameter $\cost$. Let $[f^\sigma(\Obs)]_{1/4}^{3/4} = \max(1/4, \min(3/4, f^\sigma(\Obs))$ denote the value $f^\sigma(\Obs)$ clamped to the interval $[1/4, 3/4]$. Then for $b \in (1/4, 3/4)$,
\begin{align*}
    \PP\left(\Lbl = 1 \mid [f^\sigma(\Obs)]_{1/4}^{3/4} = b\right)
    & = \PP\left(\Lbl = 1 \mid \Obs_{n+1} - 8 \epsilon \sqrt{n} \sigma^\top \Obs_{1:n} = b\right) \\
    & = \frac{1}{n} \sum_{j=1}^n \PP\left(\Lbl = 1 \mid \Obs_j = 1 \wedge \Obs_{n+1} = b + 8 \epsilon \sqrt{n} \sigma_j\right) \\
    & = b + 8 \epsilon \sqrt{n} \frac{\mathbf{1}^\top \sigma}{n}.
\end{align*}
where $\mathbf{1}$ is the all-ones vector. Thus, the Bayes optimal decision rule based on $[f^\sigma(\Obs)]_{1/4}^{3/4}$ is
\begin{align*}
    \hyp^\sigma_c(\obs) & = \II\left\{f^\sigma(\obs) + 8 \epsilon \sqrt{n} \frac{\mathbf{1}^\top \sigma}{n} \geq c\right\} \\
    & = \II\left\{f^\sigma(\obs) \geq c - 8 \epsilon \sqrt{n} \frac{\mathbf{1}^\top \sigma}{n}\right\}
\end{align*}
for $c - 8 \epsilon \sqrt{n} \frac{\mathbf{1}^\top \sigma}{n} \in (1/4, 3/4)$. The induced posterior probability for $\hypcls^\sigma$ is
\begin{equation*}
    \post_{\hypcls^\sigma}(\obs) = f^\sigma(\obs) + 8 \epsilon \sqrt{n} \frac{\mathbf{1}^\top \sigma}{n}.
\end{equation*}

We consider one hypothesis from each hypothesis class $\hypcls^\sigma \in \hypset$. Specifically, we consider the optimal decision rule for
\begin{equation*}
    \cost^\sigma = \frac{1}{2} + 8 \epsilon \sqrt{n} \frac{\mathbf{1}^\top \sigma}{n},
\end{equation*}
which, as shown above is,
\begin{equation}
    \label{eq:hyp_def}
    \hyp^\sigma(\obs) = \II\left\{f^\sigma(\obs) \geq \frac{1}{2} \right\}.
\end{equation}
We leave until the end of the proof to show that each of these decision rules $\hyp_\sigma$ for $\sigma \in \{-1, 1\}^n$ satisfies the requirements of Theorem \ref{thm:subopt_unknown}.

\smallparagraph{Deriving the lower bound}
Now, we are ready to derive the lower bound that there is some $\hyp^\sigma$ such that $\PP(|\hat\cost(\sample) - \cost| \geq \epsilon) \geq \frac{1}{80}$. First, we can rewrite $\hyp^\sigma$ from (\ref{eq:hyp_def}) as
\begin{align*}
    \hyp^\sigma((0, \obs_j = 1, 0, \obs_{n+1}))
    & = \II\{\obs_{n+1} - 8 \epsilon \sqrt{n} \sigma_j \geq 1/2\} \\
    & = \II\{\obs_{n+1} \geq 1/2 + 8 \epsilon \sqrt{n} \sigma_j\}.
\end{align*}
Thus, only decisions made on points where $\obs_{n+1} \in [1/2 - 8 \epsilon \sqrt{n}, 1/2 + 8 \epsilon \sqrt{n}]$ are dependent on $\sigma_j$. Denote by $E_j$ the event that there is an observed sample that depends on $\sigma_j$:
\begin{equation*}
    E_j \quad \triangleq \quad \exists \obs_i \in \sample \text{ such that } \obs_{i j} = 1 \wedge \obs_{i, n+1} \in [1/2 - 8 \epsilon \sqrt{n}, 1/2 + 8 \epsilon \sqrt{n}].
\end{equation*}

Suppose we let $\sigma_j$ be independently Rademacher distributed, i.e. we assign equal probability $1 / 2^n$ to each $\sigma \in \{-1, 1\}$. Then if $E_j$ does not occur, the sample of decisions $\sample$ is independent from $\sigma_j$, i.e.
\begin{equation*}
    \sample \indep \sigma_j \mid \neg E_j.
\end{equation*}

Now let $F$ denote the event that more than $n / 2$ of the $E_j$ events occur:
\begin{equation*}
    F \quad \triangleq \quad | \{ j \in 1, \hdots, n \mid E_j \} | > n / 2.
\end{equation*}
We will start by proving a lower bound on $\PP(| \hat\cost(\sample) - \cost^\sigma| \geq \epsilon \mid \neg F)$. If $F$ does not occur, then at least half of the $E_j$ do not occur. Thus at least half of the elements of $\sigma$ are independent from the sample $\sample$. Let $I$ be the set of indices $j$ for which $E_j$ does not occur; thus, $\sigma_I \indep \sample$, and given $\neg F$, $|I| \geq n / 2$.

We can decompose $\cost^\sigma$ into part that depends on $\sigma_I$ and part that depends on $\sigma_{I^C}$:
\begin{equation}
    \label{eq:c_decomp}
    \cost^\sigma = \frac{1}{2} + 8 \epsilon \sqrt{n} \frac{\mathbf{1}^\top \sigma_I}{n} + 8 \epsilon \sqrt{n} \frac{\mathbf{1}^\top \sigma_{I^C}}{n}.
\end{equation}
Note that for each $j \in I$, $\frac{\sigma_j + 1}{2}$ is $\nicefrac{1}{2}$-Bernoulli distributed. Thus
\begin{equation*}
    Z = \frac{\mathbf{1}^\top \sigma_I + |I|}{2}
    = \sum_{j \in I} \frac{\sigma_j + 1}{2}
    \sim \text{Binom}\left(|I|, \frac{1}{2}\right).
\end{equation*}
We can establish lower bounds on the tails of this given that $F$ occurs:
\begin{equation*}
    \PP\left(Z - \frac{|I|}{2} \geq t \mid \neg F\right) = \left(Z - \frac{|I|}{2} \leq -t \mid \neg F\right)
    \geq \frac{1}{15} e^{-32 t^2 / n}.
\end{equation*}
This lower bound is from \citet{matousek_probablistic_2008}. Plugging in $t = \frac{1}{8}\sqrt{n}$, we obtain
\begin{align}
    \PP\left(Z - \frac{|I|}{2} \geq \frac{1}{8}\sqrt{n} \mid \neg F\right) = \left(Z - \frac{|I|}{2} \leq -\frac{1}{8}\sqrt{n} \mid \neg F\right)
    & \geq \frac{1}{20} \nonumber \\
    \PP\left(\mathbf{1}^\top \sigma_I \geq \frac{1}{4}\sqrt{n} \mid \neg F\right) = \left(\mathbf{1}^\top \sigma_I \leq -\frac{1}{4}\sqrt{n} \mid \neg F\right)
    & \geq \frac{1}{20}. \label{eq:sigma_tail}
\end{align}
Given $\sample$, $\sigma_{I^C}$ is completely known (since $E_j$ occurs for each $j \in I^C$, revealing $\sigma_j$). So plugging (\ref{eq:sigma_tail}) into (\ref{eq:c_decomp}) gives
\begin{align*}
    \PP\left(\cost^\sigma - \frac{1}{2} - 8 \epsilon \sqrt{n} \frac{\mathbf{1}^\top \sigma_{I^C}}{n} \geq 2 \epsilon \mid \neg F, \sample \right)
    = \PP\left(\cost^\sigma - \frac{1}{2} - 8 \epsilon \sqrt{n} \frac{\mathbf{1}^\top \sigma_{I^C}}{n} \leq -2 \epsilon \mid \neg F, \sample \right)
    & \geq \frac{1}{20} \\
    \PP\left(\cost^\sigma - \cost^{\sigma_{I^C}} \geq 2 \epsilon \mid \neg F, \sample \right)
    = \PP\left(\cost^\sigma - \cost^{\sigma_{I^C}} \leq -2 \epsilon \mid \neg F, \sample \right)
    & \geq \frac{1}{20}. \\
\end{align*}
That is, there is at least probability $\nicefrac{1}{20}$ that $\cost^\sigma$ is more than $2 \epsilon$ above and below $\cost^{\sigma_{I^C}}$, given $\neg F$ and the observed sample $\sample$.

This is enough to show that $\PP(|\hat\cost(\sample) - \cost^\sigma| \geq \epsilon \mid \neg F, \sample) \geq \frac{1}{40}$. First, observe that
\begin{equation*}
    \PP(\hat\cost(\sample) \geq \cost^{\sigma_{I^C}} \mid \neg F, \sample)
    + \PP(\hat\cost(\sample) < \cost^{\sigma_{I^C}} \mid \neg F, \sample) = 1,
\end{equation*}
so one of these probabilities must be at least $\nicefrac{1}{2}$. Say WLOG that it is the first. Then
\begin{align*}
    & \PP(|\hat\cost(\sample) - \cost^\sigma| \geq \epsilon \mid \neg F, \sample) \\
    & \quad \geq \PP(\cost^\sigma - \cost^{\sigma_{I^C}} \leq -2 \epsilon \wedge \hat\cost(\sample) \geq \cost^{\sigma_{I^C}} \mid \neg F, \sample) \\
    & \quad \overset{(i)}{=} \PP(\cost^\sigma - \cost^{\sigma_{I^C}} \leq -2 \epsilon \mid \neg F, \sample) \, \PP( \hat\cost(\sample) \geq \cost^{\sigma_{I^C}} \mid \neg F, \sample) \\
    & \quad \geq \left(\frac{1}{20}\right)\left(\frac{1}{2}\right) = \frac{1}{40}.
\end{align*}
Here, (i) makes use of the fact that $\sample \indep \sigma_I \mid \neg F$.
Given this, we can finally derive the lower bound on the unconditional probability that $\PP(|\hat\cost(\sample) - \cost^\sigma| \geq \epsilon)$:
\begin{align}
    & \PP(|\hat\cost(\sample) - \cost^\sigma| \geq \epsilon) \nonumber \\
    & \quad = \PP(|\hat\cost(\sample) - \cost^\sigma| \geq \epsilon \mid F) \PP(F) + \PP(|\hat\cost(\sample) - \cost^\sigma| \geq \epsilon \mid \neg F) \PP(\neg F) \nonumber \\
    & \quad \geq \PP(|\hat\cost(\sample) - \cost^\sigma| \geq \epsilon \mid \neg F) \PP(\neg F) \nonumber \\
    & \quad \geq \frac{\PP(\neg F)}{40}. \label{eq:f_cond_lower_bound}
\end{align}
So we need to derive a lower bound on $\PP(\neg F)$. We can do so by noting that in order for $F$ to occur, there must be at least $n / 2$ samples $x_i$ with $x_{i,n+1} \in [1/2 - 8 \epsilon \sqrt{n}, 1/2 + 8 \epsilon \sqrt{n}]$. The probability of this event for a particular sample is
\begin{equation*}
    \PP\Big(X_{n+1} \in [1/2 - 8 \epsilon \sqrt{n}, 1/2 + 8 \epsilon \sqrt{n}]\Big) = 16 p_\cost \epsilon \sqrt{n}.
\end{equation*}
So at least $n / 2$ of the $m$ samples must have the event with probability $16 p_\cost \epsilon \sqrt{n}$ occur for $F$ to occur. Let $\text{GE}(p, m, r)$ denote the probability of at least $r$ successes of probability $p$ in $m$ independent trials. Then there is the following fact from probability theory \citep{angluin_fast_1979}:
\begin{equation*}
    \text{GE}(p, m, (1 + \gamma) m p) \leq e^{- \gamma^2 m p / 3}.
\end{equation*}
Then
\begin{align*}
    \PP(F) & \leq \text{GE}(16 p_\cost \epsilon \sqrt{n}, m, n/2) \\
    & = \text{GE}\left(16 p_\cost \epsilon \sqrt{n}, \frac{\sqrt{n}}{64 p_\cost \epsilon}, 2 \left(\frac{\sqrt{n}}{64 p_\cost \epsilon}\right)\left(16 p_\cost \epsilon \sqrt{n}\right)\right) \\
    & \leq e^{-n/12} \leq 3/4
\end{align*}
as long as $n \geq 4$ as assumed. Thus $\PP(\neg F) > 1/4$. So putting this together with (\ref{eq:f_cond_lower_bound}), we have
\begin{equation*}
    \PP(|\hat\cost(\sample) - \cost^\sigma| \geq \epsilon) \geq \frac{1}{160}.
\end{equation*}
This equation is given with respect to the uniform distribution over $\sigma$. But there also must be a particular $\sigma$ and thus corresponding $\hyp^\sigma \in \hypcls^\sigma$ which has the same tails on $\hat\cost(\sample) - \cost$. Thus we conclude the proof.

\smallparagraph{Verifying the requirements of Theorem \ref{thm:subopt_unknown}} Now we show that the distribution and hypothesis class family satisfy the conditions of Theorem \ref{thm:subopt_unknown}. First, note that all $\hyp \in \hypcls \in \hypset$ are thresholds on linear functions of the observation $\obs$. Thus, $\cup_{\hypcls \in \hypset} \hypcls$ is a subset of the halfspaces in $\mathbb{R}^{n+1}$ and so it has VC-dimension at most $n + 2 = d$.

Next, it is clear that for $\rho \leq \epsilon$,
\begin{align*}
    \PP(\post_{\hypcls^\sigma}(\Obs) \in (c, c + \rho])
    & = \PP\left(f^\sigma(\Obs) + 8 \epsilon \sqrt{n} \frac{\mathbf{1}^\top \sigma}{n} \in (c, c + \rho] \right) \\
    & = \sum_{j=1}^n \PP\left(X_j = 1 \wedge X_{n+1} - 8 \epsilon \sqrt{n} \sigma_j + 8 \epsilon \sqrt{n} \frac{\mathbf{1}^\top \sigma}{n} \in (c, c + \rho] \right) \\
    & = \sum_{j=1}^n \frac{p_\cost \rho}{n} = p_\cost \rho.
\end{align*}
A similar result can be shown for $\PP(\post_{\hypcls^\sigma}(\Obs) \in [c - \rho, c))$.

Finally, we need to show that MD-smoothness holds. Take any $\hyp^\sigma$ and any $\hypcls^{\tilde\sigma}$. Then the disagreement between $\hyp^\sigma$ and a hypothesis in $\hypcls^{\tilde\sigma}$ with threshold $b$ is
\begin{align*}
    \PP(\hyp^\sigma(\Obs) \neq \hyp^{\tilde\sigma}_b(\Obs)
    & = \frac{p_\cost}{n} \sum_{j=1}^n \left| \frac{1}{2} + 8 \epsilon \sqrt{n} \sigma_j - b - 8 \epsilon \sqrt{n} \tilde\sigma_j \right| \\
    & = \frac{p_\cost}{n} \sum_{j=1}^n \left| \left(\frac{1}{2} + 8 \epsilon \sqrt{n} ( \sigma_j - \tilde\sigma_j)\right) - b  \right|.
\end{align*}
This is minimized when $b$ is the median of $\left(\frac{1}{2} + 8 \epsilon \sqrt{n} ( \sigma_j - \tilde\sigma_j)\right)$ for $j = 1, \hdots, n$. Thus $b \in [\frac{1}{2} - 8 \epsilon \sqrt{n}, \frac{1}{2} + 8 \epsilon \sqrt{n}]$; since $\epsilon \leq \frac{1}{64 \sqrt{n}}$, this implies $b \in [3/8, 5/8]$. Suppose now we let $c' \in [c^\sigma - 1/8, c^\sigma + 1/8]$. Then we can let $b' = b + (c' - c^\sigma)$ and
\begin{align*}
    \MD(\hyp^\sigma_{\cost'}, \hypcls^{\tilde\sigma}) \leq \PP\Big(\hyp^\sigma_{\cost'}(\Obs) \neq \hyp^{\tilde\sigma}_{b'}(\Obs)\Big)
    = \PP\Big(\hyp^\sigma(\Obs) \neq \hyp^{\tilde\sigma}_{b}(\Obs)\Big) = \MD(\hyp^\sigma, \hypcls^{\tilde\sigma}).
\end{align*}
Thus for $|c' - c^\sigma| \leq 1/8$, $h^\sigma$ and $\hypset$ are 0-MD-smooth. If $|c' - c^\sigma| > 1/8$, then we have
\begin{align*}
    \MD(\hyp^\sigma_{\cost'}, \hyp^{\tilde\sigma}) \leq 1 < \frac{8}{\MD(\hyp^\sigma, \hypcls^{\tilde\sigma})} |\cost' - \cost^\sigma| \MD(\hyp^\sigma, \hypcls^{\tilde\sigma}).
\end{align*}
Thus overall $\hyp^\sigma$ and $\hypset$ are $\alpha$-MD-smooth with
\begin{equation*}
    \alpha = \max_{\tilde\sigma \neq \sigma} \frac{8}{\MD(\hyp^\sigma, \hypcls^{\tilde\sigma})}.
\end{equation*}

\end{proof}

\emph{Bibliographic note:} we establish dependence on the VC dimension $d$ in Theorem \ref{thm:suboptimal_lower_dim} using a technique similar to that used by \citet{ehrenfeucht_general_1989}.

\subsection{Necessity of MD-smoothness}
\label{sec:md_smooth_counterexample}

The lower bounds given in Section \ref{sec:subopt_unknown_lower} do not depend on the $\alpha$ parameter from the MD-smoothness assumption made in Theorem \ref{sec:suboptimal_unknown}; thus, one may wonder if this assumption is necessary. In the following lemma, we show that it is necessary in some cases by giving an example of an IDT problem where a lack of MD-smoothness precludes identifiability of the loss parameter.

\begin{lemma}[No MD-smoothness can prevent identifiablity]
\label{lemma:no_md_smooth}
Let $\epsilon \in (0, 1/10)$. Then for any IDT algorithm $\hat{\cost}(\cdot)$, there is a decision problem $(\dist, \cost)$, hypothesis class family $\hypset$, and hypothesis class $\hypcls \in \hypset$ satisfying the conditions of Theorem \ref{thm:subopt_unknown} \emph{except} for MD-smoothness such that
\begin{equation*}
    \PP(| \hat\cost(\sample) - \cost | \geq \epsilon) \geq \frac{1}{2}
\end{equation*}
for a sample $\sample$ of any size $m$.
\end{lemma}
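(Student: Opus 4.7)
The proof follows a standard information-theoretic lower bound strategy: I will construct two indistinguishable scenarios and show that no algorithm can do well on both. Concretely, I construct a distribution $\dist$, a family $\hypset = \{\hypcls^1, \hypcls^2\}$, and two loss parameters $c_1, c_2$ with $|c_1 - c_2| \geq 2\epsilon$, such that the optimal rule $\hyp^*_1 \in \hypcls^1$ for $c_1$ agrees $\dist$-almost everywhere with the optimal rule $\hyp^*_2 \in \hypcls^2$ for $c_2$, and such that each pair $(\dist, c_i, \hypset, \hypcls^i)$ satisfies the monotonicity and induced-posterior density hypotheses of Theorem~\ref{thm:subopt_unknown} (but not MD-smoothness). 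Since the two scenarios produce identically distributed samples $\sample$, any output $\hat\cost(\sample)$ must land within $\epsilon$ of at most one of $c_1, c_2$; by symmetry, there is at least one scenario for which $\PP(|\hat\cost(\sample) - \cost| \geq \epsilon) \geq 1/2$, proving the lemma.

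For the construction itself, I take $\obsspace = [0,1]$ with $\Obs$ having a continuous density and $\post(\obs) = \obs$. I let $\hypcls^1$ be a rich class of threshold rules $\II\{\obs \geq b\}$ indexed by $b$ varying over an interval, so that $\post_{\hypcls^1}(\obs) = \obs$ and $\hyp^1_{c_1} = \II\{\obs \geq b^*\}$ is the unique optimal rule at $c_1 = b^*$, with $\post_{\hypcls^1}(\Obs)$ carrying a positive density near $c_1$. I then design $\hypcls^2$ to contain the same boundary rule $\hyp^* = \II\{\obs \geq b^*\}$, but so that $\hyp^*$ is the unique optimal rule in $\hypcls^2$ at a different cost $c_2$; this is achieved by making $\hypcls^2$ aggregate points around $b^*$ with off-support observations of controllable posterior, so that the induced posterior of $\hypcls^2$ evaluates to $c_2 \neq c_1$ at $b^*$, while remaining fine enough elsewhere to ensure $\post_{\hypcls^2}(\Obs)$ has density near $c_2$. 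With the two rules agreeing on the support of $\dist$, the samples are genuinely indistinguishable. Monotonicity of $\opt_\dist(\hypcls^i)$ follows because each class consists of threshold-like rules whose outputs are nested in $\obs$.

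MD-smoothness fails automatically: because $\hyp^*_1 = \hyp^*_2 \in \opt_\dist(\hypcls^2)$, we have $\MD(\hyp^1_{c_1}, \opt_\dist(\hypcls^2)) = 0$, yet for any $c' \neq c_1$ the optimal rule in $\hypcls^1$ shifts to a threshold at a different location that is not matched by any rule in $\opt_\dist(\hypcls^2)$, so $\MD(\hyp^1_{c'}, \opt_\dist(\hypcls^2)) > 0$; hence no finite $\alpha$ verifies the MD-smoothness inequality. The main obstacle is the competing tension between the two density conditions and the required offset $|c_1 - c_2| \geq 2\epsilon$: uniqueness of $\hyp^*$'s optimality at $c_2$ in $\hypcls^2$ demands that rules in $\hypcls^2$ vary continuously through $\hyp^*$ as $c$ crosses $c_2$, yet the offset demands that $\hypcls^2$ coarsen the observation space at $b^*$. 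I resolve this by leveraging the distinction between the support of $\dist$ and the full $\obsspace$: the coarsening needed to shift the induced posterior is performed on off-support points, while on-support rules remain fine enough near $b^*$ to provide the required density. This requires a careful choice of $\dist$ together with a definition of $\hypcls^2$ whose rules differ from $\hyp^*$ only outside the support of $\dist$; verifying the density and monotonicity conditions then reduces to routine computation of the induced posteriors for the explicit construction.
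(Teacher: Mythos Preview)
Your high-level strategy---construct two scenarios $(\hypcls^1,c_1)$ and $(\hypcls^2,c_2)$ whose optimal rules coincide $\dist$-a.s.\ so the samples are indistinguishable---is exactly right and is what the paper does. The gap is in your concrete construction of $\hypcls^2$.

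You propose that the rules in $\hypcls^2$ ``differ from $\hyp^*$ only outside the support of $\dist$,'' and that the shift in induced posterior is achieved by aggregating on-support points near $b^*$ with ``off-support observations of controllable posterior.'' But the risk $\risk_c(\hyp)=\EE[\ell_c(\hyp(\Obs),\Lbl)]$ depends only on the restriction of $\hyp$ to the support of $\dist$; off-support values of $\hyp$, and any putative off-support values of $\post$, are irrelevant. Consequently, if every rule in $\hypcls^2$ agrees with $\hyp^*$ on the support, then every rule in $\hypcls^2$ has the same risk as $\hyp^*$ for every $c$, so $\hyp^*$ is optimal in $\hypcls^2$ at \emph{all} $c\in(0,1)$, not uniquely at some $c_2\neq c_1$. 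Worse, for on-support $\obs$ you then get $\hyp_c^{\hypcls^2}(\obs)=\hyp^*(\obs)$ for every $c$, so $\post_{\hypcls^2}(\obs)\in\{0,1\}$ a.s., and the density condition $\PP(\post_{\hypcls^2}(\Obs)\in(c_2-\rho,c_2))\ge p_c\rho$ of Theorem~\ref{thm:subopt_unknown} fails outright. The off-support trick therefore cannot simultaneously produce the required offset $c_2\neq c_1$ and the required density of $\post_{\hypcls^2}(\Obs)$ near $c_2$.

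What is actually needed is two classes whose optimal rules genuinely vary \emph{on} the support (so each induced posterior has density around its own $c_i$), yet whose optimal rules at $c_1$ and $c_2$ happen to coincide on the support. This is hard to arrange in one dimension with $\post(\obs)=\obs$ and a rich threshold class, because any monotone optimal family over full support forces the induced posterior to track $\post$ itself. The paper resolves this by moving to $\obsspace=\mathbb{R}^2$: $\dist_\Obs$ is supported on two unit squares lying in opposite quadrants (so $\sign(\Obs_1)=\sign(\Obs_2)$ a.s.), $\hypcls_1$ thresholds on $\obs_1$ and $\hypcls_2$ on $\obs_2$, and $\PP(\Lbl=1\mid\Obs)$ is chosen so that $\PP(\Lbl=1\mid\Obs_1=0)=2/5$ while $\PP(\Lbl=1\mid\Obs_2=0)=3/5$. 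Then $\hyp_1(\obs)=\II\{\obs_1\ge 0\}$ and $\hyp_2(\obs)=\II\{\obs_2\ge 0\}$ agree on the support, are optimal at $c_1=2/5$ and $c_2=3/5$ respectively, and both classes have the required on-support density of their induced posteriors. Your argument would go through if you replace the one-dimensional off-support idea with a construction of this type.
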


\begin{figure}
    \centering
    \input{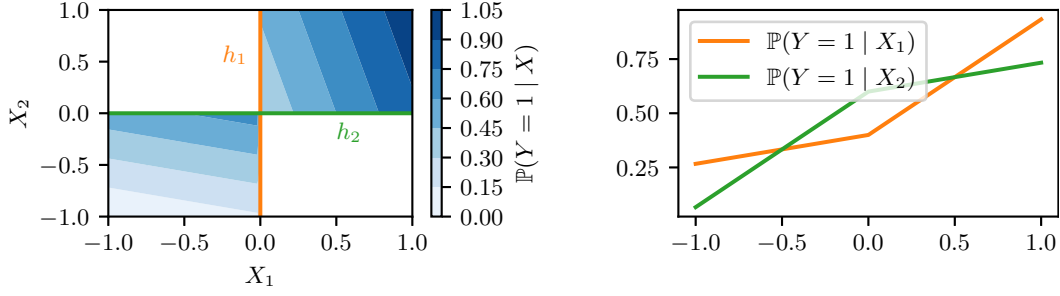}
    \caption{A visualization of the distribution and decision rules used in Lemma \ref{lemma:no_md_smooth} to show that a lack of MD-smoothness can prevent identifiability of the loss parameter $\cost$. On the left, the distribution over $\Obs = (\Obs_1, \Obs_2)$ and $\Lbl$ is shown; $\Obs$ has constant density on unit squares in the first and third quadrants, and $\PP(\Lbl = 1 \mid \Obs)$ varies as shown with the heatmap. We consider two decision rules $\hyp_1$ and $\hyp_2$ which are optimal thresholds of $\Obs_1$ and $\Obs_2$, respectively, for loss parameters $\cost_1 = 2/5$ and $\cost_2 = 3/5$, respectively. Since $\cost_1 \neq \cost_2$ but $\PP(\hyp_1(\Obs) = \hyp_2(\Obs)) = 1$, it is impossible to identify $\cost$ reliably. This is because the distribution and decision rules are not MD-smooth, since shifting either decision rule slightly causes a jump in minimum disagreement with the other hypothesis class from $0$ to a positive value.}
    \label{fig:no_md_smooth}
\end{figure}

\begin{proof}
\smallparagraph{Defining the distribution} First, we define a distribution $\dist$ over $\Obs \in \obsspace = \mathbb{R}^2$ and $\Lbl \in \{0, 1\}$. $\dist_\Obs$ has density $\nicefrac{1}{2}$ on two squares $[-1, 0] \times [-1, 0]$ and $[0, 1] \times [0, 1]$, and the distribution of $\Lbl \mid \Obs$ is defined as follows:
\begin{equation*}
    \PP(\Lbl = 1 \mid \Obs = \obs = \begin{cases}
        \frac{2}{3} + \frac{2}{15} \obs_1 + \frac{8}{15} \obs_2 & \quad \obs \in [-1, 0] \times [-1, 0] \\
        \frac{1}{3} + \frac{8}{15} \obs_1 + \frac{2}{15} \obs_2 & \quad \obs \in [0, 1] \times [0, 1].
    \end{cases}
\end{equation*}

\smallparagraph{Defining the family of hypothesis classes} We consider the two hypothesis classes which are thresholds on one component of the observation $\obs$:
\begin{align*}
    \hypcls_1 & = \{\hyp(\obs) = \II\{\obs_1 \geq b\} \mid b \in [-1, 1]\}, \\
    \hypcls_2 & = \{\hyp(\obs) = \II\{\obs_2 \geq b\} \mid b \in [-1, 1] \}.
\end{align*}
That is, $\hypset = \{\hypcls_1, \hypcls_2\}$. The conditional probabilities for $\Lbl = 1$ given just one of the observation components are
\begin{equation}
    \label{eq:no_md_smooth_post}
    \begin{aligned}
        \post_{\hypcls_1}(\obs) = \PP(\Lbl = 1 \mid \Obs_1 = \obs_1) & = \frac{2}{5} + \frac{2}{15} \obs_1 + \frac{2}{5} \obs_1 \II\{\obs_1 \geq 0\}, \\
        \post_{\hypcls_2}(\obs) = \PP(\Lbl = 1 \mid \Obs_2 = \obs_2) & = \frac{3}{5} + \frac{2}{15} \obs_2 + \frac{2}{5} \obs_2 \II\{\obs_2 \leq 0\}. \\
    \end{aligned}
\end{equation}
We consider the optimal decision rules for $\cost_1 = 2/5$ and $\cost_2 = 3/5$ in $\hypcls_1$ and $\hypcls_2$, respectively, which from the above can be calculated as
\begin{align*}
    \hyp_1(\obs) & = \II\{\obs_1 \geq 0\}, \\
    \hyp_2(\obs) & = \II\{\obs_2 \geq 0\}. \\
\end{align*}
The distribution and decision rules are visualized in Figure \ref{fig:no_md_smooth}.

\smallparagraph{Lack of identifiability} Note that since $\Obs$ only has support where $\sign(\Obs_1) = \sign(\Obs_2)$, the above decision rules are indistinguishable. Thus, we use the same techniques from Corollary \ref{corollary:certain_lower} and Lemma \ref{lemma:lower_alternative_subopt} to show that for at least one of $\cost \in \{\cost_1, \cost_2\}$
\begin{equation*}
    \PP(|\hat\cost(\sample) - \cost| \geq \nicefrac{1}{2}(\cost_2 - \cost_1) = \nicefrac{1}{10} \geq \epsilon) \geq \nicefrac{1}{2}.
\end{equation*}

\smallparagraph{Hypothesis classes are not MD-smooth} Although this is not required for the proof of the lemma, we will demonstrate that the defined hypothesis classes are not $\alpha$-MD-smooth for any $\alpha$. By way of contradiction, assume that there is some $\alpha$ such that $\hyp_1$ and $\hypset$ are MD-smooth. Then for any $c_1' \in [0, 1]$,
\begin{equation*}
    \MD(\hyp_{c_1'}, \hypcls_2) \leq (1 + \alpha |c_1' - c_1|) \MD(\hyp_1, \hypcls_2) = 0.
\end{equation*}
Here, $\MD(\hyp_1, \hypcls_2)$ since $\PP(\hyp_1(\Obs) \neq \hyp_2(\Obs)) = 0$, i.e. $\hyp_1$ and $\hyp_2$ do not disagree at all. However, there are clearly values of $\cost_1'$ such that $\MD(\hyp_{\cost_1'}, \hypcls_2) > 0$, so we have a contradiction.

\smallparagraph{Verifying the other requirements of Theorem \ref{thm:subopt_unknown}} Clearly, the family of hypothesis classes defined above have finite VC-dimension.

The densities of $\post_{\hypcls_1}(\Obs)$ and $\post_{\hypcls_2}(\Obs)$ can be calculated as the density of $\Obs_1$ or $\Obs_2$ multiplied by the derivative of the inverse of the posterior probability functions. The densities of $\Obs_1$ and $\Obs_2$ are both $\nicefrac{1}{2}$ on the interval $[-1, 1]$, and the derivative of the inverse of the equations in (\ref{eq:no_md_smooth_post}) is at least $\nicefrac{15}{8}$. So the distribution satisfies the requirements of Theorem \ref{thm:subopt_unknown} other than MD-smoothness with $p_\cost \geq \nicefrac{15}{16}$.

\end{proof}

\section{Feature Subset Hypothesis Class Family}
\label{sec:feat_hypset}

In this section, we work through the application of Theorem \ref{thm:subopt_unknown} to a practical example. Theorem \ref{thm:subopt_unknown} concerns the case of IDT when the decision maker could be restricting themselves to any suboptimal hypothesis class $\hypcls \in \hypset$ for some family of hypothesis classes $\hypset$. In this example, we consider $\hypset_\text{feat}$ as defined in (\ref{eq:feat_subset_fam}) and repeated here:
\begin{equation}
    \hypset_\text{feat} \triangleq \left\{ \hypcls_S \mid S \subseteq \{1, \hdots, n\} \right\} \quad \text{where} \quad \hypcls_S \triangleq \left\{ \hyp(\obs) = f(\obs_{S}) \mid f: \mathbb{R}^{|S|} \to \{0, 1\} \right\}.
    \tag{\ref{eq:feat_subset_fam}}
\end{equation}

This family can model decision makers that have bounded computational capacity and may only be able to reason based on a few features of the data. An application of structural risk minimization \citep{vapnik_principles_1991} from learning theory shows that the sample complexity of IDT in this case may scale only linearly in the number of features considered and logarithmically in the total feature count:

\begin{lemma}
\label{lemma:subset_sample_complexity}
Let a decision maker use a hypothesis class $\hypcls_S \in \hypset_\text{feat}$ as defined in (\ref{eq:feat_subset_fam}) which consists of decision rules depending only on the subset of the features in $S$. Let $s = |S|$ be the number of such features; neither $s$ nor $S$ is known. Suppose $\obsspace = \mathbb{R}^d$, i.e. $d$ is the total number of features. Let assumptions on $\epsilon$, $\delta$, $\alpha$, and $p_\cost$ be as in Theorem \ref{thm:subopt_unknown}.

Let $\hat\hyp_{\hat\cost} \in \argmin_{\hat\hyp \in \hypcls_{\hat{S}}} \risk_{\hat\cost}(\hat\hyp)$ be chosen to be consistent with the observed decisions, i.e. $\hat\hyp_{\hat\cost}(x_i) = \dec_i$, and such that $| \hat{S} |$ is as small as possible. Then $| \hat\cost - \cost | \leq \epsilon$ with probability at least $1 - \delta$ as long as the number of samples $m$ satisfies
\begin{equation*}
    m \geq O\left[ \left(\frac{\alpha}{\epsilon} + \frac{1}{\epsilon^2} \right) \left(\frac{s \log d + \log(1 / \delta)}{p_\cost}\right) \right].
\end{equation*}
\end{lemma}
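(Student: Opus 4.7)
The idea is to combine Theorem \ref{thm:subopt_unknown} with a structural risk minimization style argument. The crucial observation is that the true decision rule $\hyp_\cost$ lies in $\opt_\dist(\hypcls_S)$ with $|S| = s$, and therefore produces every observed label consistently with some hypothesis of subset size $s$. Because the ERM procedure described in the lemma statement selects $\hat S$ of minimum size, we always have $|\hat S| \leq s$, reducing the problem to learning over the restricted sub-family
\begin{equation*}
    \hypset_{\leq s} \triangleq \{\hypcls_{S'} \in \hypset_\text{feat} : |S'| \leq s\}.
\end{equation*}
Since MD-smoothness with respect to $\hypset_\text{feat}$ immediately passes to any sub-family, the hypotheses of Theorem \ref{thm:subopt_unknown} remain satisfied when we substitute $\hypset_{\leq s}$ for $\hypset$.

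The remaining work is to bound the effective VC-dimension. By Lemma \ref{lemma:induced_postprob} and Corollary \ref{corollary:bayesopt_subopt_known}, every element of $\opt_\dist(\hypcls_{S'})$ takes the form $\II\{\post_{\hypcls_{S'}}(\obs) \geq b\}$ for some $b \in [0,1]$, so each such optimal subset is a one-parameter threshold class with VC-dimension at most $1$. There are at most $\binom{d}{\leq s} \leq (ed/s)^s$ subsets of size at most $s$, so a standard growth-function union bound (the union of $N$ classes of VC-dimension $1$ has growth function at most $N(m+1)$, which crosses $2^m$ when $m = O(\log N)$) yields
\begin{equation*}
    \mathrm{VCdim}\!\left(\bigcup_{|S'| \leq s} \opt_\dist(\hypcls_{S'})\right) \leq O(s \log(ed/s)) \leq O(s \log d).
\end{equation*}
Plugging this quantity into the VC-dimension term of Theorem \ref{thm:subopt_unknown} applied to $\hypset_{\leq s}$ then produces the claimed sample complexity.

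The main subtlety I anticipate is verifying that the two-stage ERM I described (first minimize $|\hat S|$, then fit $\hat\hyp_{\hat\cost}$ within $\hypcls_{\hat S}$) is compatible with the realizable ERM assumed inside the proof of Theorem \ref{thm:subopt_unknown}. This reduces to the realizability observation above: because the true rule lies in $\hypcls_S \in \hypset_{\leq s}$, a consistent hypothesis of size at most $s$ always exists, and any such hypothesis is a valid output of the ERM used in the theorem when restricted to $\hypset_{\leq s}$. A minor point to check is that the argument for Theorem \ref{thm:subopt_unknown} only requires a VC-dimension bound on the union of the \emph{optimal subsets} $\opt_\dist(\hypcls)$ (not the full $\hypcls$, whose members can be arbitrary binary functions of $|S'|$ real features), which is precisely how I bound the VC-dimension here.
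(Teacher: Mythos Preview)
Your proposal is correct and follows essentially the same route as the paper: restrict attention to the sub-family $\hypset_{\leq s}$ (justified by realizability and the minimum-$|\hat S|$ selection rule), bound the VC-dimension of the union of optimal subsets by counting labelings---each $\opt_\dist(\hypcls_{S'})$ is a one-parameter threshold class, and there are at most $(d+1)^s$ relevant subsets---and then invoke Theorem~\ref{thm:subopt_unknown} with the resulting $O(s\log d)$ bound. Your flagged subtlety about needing only $\mathrm{VCdim}\bigl(\bigcup_{S'}\opt_\dist(\hypcls_{S'})\bigr)$ rather than $\mathrm{VCdim}\bigl(\bigcup_{S'}\hypcls_{S'}\bigr)$ is exactly the implicit step the paper also takes (it, too, bounds only the optimal subsets), and is justified by inspecting the proof of Theorem~\ref{thm:subopt_unknown}, where the realizable PAC bound is applied to the class containing $\hyp_\cost$ and $\hat\hyp_{\hat\cost}$, both of which lie in the union of optimal subsets.
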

\begin{proof}
We prove Lemma \ref{lemma:subset_sample_complexity} by bounding the VC-dimension of the union of all optimal decision rules in all $\hypcls_S \in \hypset_\text{feat}$ where $|S| \leq s$. An optimal decision rule for loss parameter $\cost$ in $\hypcls_S$ is given by the Bayes optimal classifier:
\begin{equation*}
    \hyp^S_\cost(\obs) = \II \{ \PP(\Lbl = 1 \mid \Obs_S = \obs_s) \geq c \}.
\end{equation*}
Now consider a set of observations $\obs_1, \hdots, \obs_d \in \obsspace$. We will show that for $d > 1 + 2 s \log_2 (n + 1)$, this set cannot be shattered by $d$. To see why, note that decision rules in any particular class $\hypcls_S$ threshold the posterior probability $\PP(\Lbl = 1 \mid \Obs_S = \obs_s)$. Thus, each hypothesis class can only produce $d + 1$ distinct labelings of the set of observations. The number of hypothesis classes $\hypcls_S$ with $|S| \leq s$ is
\begin{equation*}
    \sum_{k = 0}^s \begin{pmatrix} n \\ s \end{pmatrix} \leq \sum_{k=0}^s n^k \leq (n + 1)^s.
\end{equation*}
So the number of distinct labelings assigned by hypotheses in $\hypset$ to the observations must be at most $(d+1) (n + 1)^s < 2^d$ if $d > 1 + 2 s \log_2 (n+1)$. Thus this set cannot be shattered, so
\begin{equation*}
    \text{VCdim}\left(\cup_{|S| \leq s} \hypcls_S\right) \leq 1 + 2 s \log_2 (n + 1) = O(s \log n).
\end{equation*}
Applying Theorem \ref{thm:subopt_unknown} with $d = O(s \log n)$ completes the proof.
\end{proof}

The following lemma states conditions under which $\alpha$-MD-smoothness holds for $\hypset_\text{feat}$.

\begin{lemma}
\label{lemma:subset_md_smooth}
Let $\hypset_\text{feat}$ and $\hypcls_S$ be defined as in (\ref{eq:feat_subset_fam}). Let $h \in \hypcls_S$. Suppose that there is a $\zeta > 0$ such that for any $\hat{S} \subseteq \{1, \hdots, n\}$, one of the following holds: either (a) $\PP(\Lbl = 1 \mid \Obs = \obs_S) = \PP(\Lbl = 1 \mid \Obs = \obs_{\hat{S}})$ for all $\obs \in \mathbb{R}^d$, or (b) $\MD(h, \hypcls_{\hat{S}}) \geq \zeta$. Furthermore, suppose that the distribution of $\post_{\hypcls_S}(\Obs)$ is absolutely continuous with respect to the Lebesque measure and that its density is bounded above by $M < \infty$. Then $h$ and $\hypset_\text{feat}$ are $\alpha$-MD-smooth with $\alpha = M / \zeta$.
\end{lemma}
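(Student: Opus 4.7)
My plan is to reduce the MD-smoothness inequality to two cases based on the dichotomy given in the hypothesis. Fix an arbitrary $\tilde{\hypcls} = \hypcls_{\hat{S}} \in \hypset_\text{feat}$ and $\cost' \in (0,1)$; write $h = \hyp_\cost \in \hypcls_S$ for the given optimal rule and $\hyp_{\cost'}$ for an optimal rule in $\hypcls_S$ for $\cost'$. Before handling the two cases I will note that each $\hypcls_{\tilde{S}}$ has monotone optimal subset, since by Lemma~\ref{lemma:bayesopt} (applied to the conditional distribution given $\Obs_{\tilde{S}}$) optimal rules in $\hypcls_{\tilde{S}}$ are thresholds of the form $\II\{\PP(\Lbl = 1 \mid \Obs_{\tilde{S}}) \geq \cost\}$, which are pointwise monotone in $\cost$. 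This lets me freely invoke Lemmas~\ref{lemma:induced_postprob}--\ref{corollary:bayesopt_subopt_known}.

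In the first case, assumption (a) holds, so $\post_{\hypcls_S}$ and $\post_{\hypcls_{\hat S}}$ agree everywhere, which by Corollary~\ref{corollary:bayesopt_subopt_known} forces the optimal rules in the two classes to agree almost surely at every loss parameter. Hence both $\MD(h, \opt_\dist(\hypcls_{\hat S}))$ and $\MD(\hyp_{\cost'}, \opt_\dist(\hypcls_{\hat S}))$ are zero and the required inequality $0 \leq (1 + \alpha |\cost'-\cost|)\cdot 0$ is trivial.

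In the second case, assumption (b) holds, and since $\opt_\dist(\hypcls_{\hat S}) \subseteq \hypcls_{\hat S}$ (infima over smaller sets are at least as large), I get $\MD(h, \opt_\dist(\hypcls_{\hat S})) \geq \MD(h, \hypcls_{\hat S}) \geq \zeta$. The key step is a triangle-inequality bound in the pseudometric $d(\hyp,\hyp') = \PP(\hyp(\Obs) \neq \hyp'(\Obs))$: for any $\tilde\hyp \in \opt_\dist(\hypcls_{\hat S})$,
\begin{equation*}
\PP(\hyp_{\cost'}(\Obs) \neq \tilde\hyp(\Obs)) \;\leq\; \PP(\hyp_{\cost'}(\Obs) \neq h(\Obs)) + \PP(h(\Obs) \neq \tilde\hyp(\Obs)),
\end{equation*}
and taking the infimum over $\tilde\hyp$ on both sides gives $\MD(\hyp_{\cost'}, \opt_\dist(\hypcls_{\hat S})) \leq \PP(\hyp_{\cost'}(\Obs) \neq h(\Obs)) + \MD(h, \opt_\dist(\hypcls_{\hat S}))$. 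By Corollary~\ref{corollary:bayesopt_subopt_known}, $\hyp_{\cost'}$ and $h$ can only disagree where $\post_{\hypcls_S}(\Obs)$ lies strictly between $\cost$ and $\cost'$, so the density bound $M$ yields $\PP(\hyp_{\cost'}(\Obs) \neq h(\Obs)) \leq M |\cost' - \cost|$.

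Combining these pieces and using $\MD(h, \opt_\dist(\hypcls_{\hat S})) \geq \zeta$ to rewrite the additive bound multiplicatively gives
\begin{equation*}
\MD(\hyp_{\cost'}, \opt_\dist(\hypcls_{\hat S})) \;\leq\; \MD(h, \opt_\dist(\hypcls_{\hat S})) \left(1 + \frac{M|\cost'-\cost|}{\zeta}\right),
\end{equation*}
which is exactly $\alpha$-MD-smoothness with $\alpha = M/\zeta$. I expect the main subtlety to be handling Case (a) cleanly: one must verify that $\post_{\hypcls_S}$ is well-defined and agrees almost surely with $\post_{\hypcls_{\hat{S}}}$, so that applying Corollary~\ref{corollary:bayesopt_subopt_known} in both hypothesis classes yields a consistent threshold rule and forces both minimum disagreements to vanish; everything else is a straightforward triangle inequality plus the density bound.
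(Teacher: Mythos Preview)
Your proposal is correct and follows essentially the same approach as the paper: a case split on (a)/(b), with Case~(a) trivial and Case~(b) handled by bounding $\PP(\hyp_{\cost'}(\Obs)\neq h(\Obs))\le M|\cost'-\cost|$ via the density bound and converting the additive slack to a multiplicative one using $\MD(h,\cdot)\ge\zeta$. Your triangle-inequality phrasing is exactly the paper's event decomposition in disguise, and you are slightly more careful than the paper in distinguishing $\MD(\cdot,\opt_\dist(\hypcls_{\hat S}))$ from $\MD(\cdot,\hypcls_{\hat S})$, which is what the definition of $\alpha$-MD-smoothness actually requires.
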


Since $\alpha$-MD-smoothness is a sufficient condition for identification of the loss function parameter $\cost$, Lemma \ref{lemma:subset_md_smooth} gives conditions under which IDT can be performed. The main requirement is that considering different subsets of the features either gives identical decision rules (case (a)) or decision rules which disagree by some minimum amount (case (b)). If decision rules using a different subset of the features can be arbitrarily close to the true one, it may not be possible to apply IDT.

\begin{proof}
Consider any $\hat{S} \subseteq \{1, \hdots, n\}$. If (a) holds for $\hat{S}$, then $\hyp^S_\cost(\obs) = \hyp^{\hat{S}}_\cost(\obs)$ for any $\cost \in [0, 1]$ and $\obs \in \obsspace$. Thus
\begin{align*}
    \MD(\hyp^S_{\cost'}, \hypcls_{\hat{S}}) = 0 \leq (1 + \alpha |\cost' - \cost|) \MD(\hyp^S_\cost, \hypcls_{\hat{S}}) = 0
\end{align*}
so $\alpha$-MD-smoothness holds in this case for any $\alpha$.

If (b) holds, then let $\hat\hyp \in \argmin_{\hat\hyp \in \hypcls_{\hat{S}}} \PP(\hyp(\Obs) \neq \hat\hyp(\Obs))$. Let $\cost' \in [0, 1]$; without loss of generality, we may assume that $\cost' > \cost$. Denote $\post_S(\obs) = \PP(\Lbl = 1 \mid \Obs_S = \obs_s)$. Then
\begin{align*}
    & \MD(\hyp^S_{\cost'}, \hypcls_{\hat{S}}) \\
    & \quad \leq \PP(\hyp^S_{\cost'}(\Obs) \neq \hat\hyp(\Obs)) \\
    & \quad = \PP\Big(\post_S(\Obs) < \cost' \wedge \hat\hyp(\Obs) = 1)\Big) + \PP\Big(\post_S(\Obs) > \cost' \wedge \hat\hyp(\Obs) = 0)\Big) \\
    & \quad \leq \PP\Big(\post_S(\Obs) \in [\cost, \cost') \wedge \hat\hyp(\Obs) = 1\Big) 
    + \PP\Big(\post_S(\Obs) < \cost \wedge \hat\hyp(\Obs) = 1\Big)
    + \PP\Big(\post_S(\Obs) > \cost \wedge \hat\hyp(\Obs) = 0)\Big) \\
    & \quad = \PP\Big(\post_S(\Obs) \in [\cost, \cost') \wedge \hat\hyp(\Obs) = 1\Big) + \MD(\hyp, \hypcls_{\hat{S}}) \\
    & \quad \leq M (\cost' - \cost) + \MD(\hyp, \hypcls_{\hat{S}}) \\
    & \quad \leq \left[ 1 + \frac{M}{\zeta} (\cost' - \cost) \right] \MD(\hyp, \hypcls_{\hat{S}}).
\end{align*}
So $\hyp$ and $\hypset$ satisfy $\alpha$-MD-smoothness with $\alpha = M/\zeta$.
\end{proof}

\section{Surrogate Loss Functions}
\label{sec:surrogate}

Here, we explore using IDT when the decision maker minimizes a surrogate loss instead of the true loss. So far, as formulated in Section \ref{sec:problem}, we have assumed that the decision maker chooses a decision rule $\hyp$ which minimizes the expected loss $\EE[\loss_\cost(\hyp(\Obs), \Lbl)]$, where the loss function is defined as
\begin{align}
    \loss_\cost(\dec, \lbl) & = \begin{cases}
        0 & \quad \dec = \lbl \\
        \cost & \quad \dec = 1 \wedge \lbl = 0 \\
        1 - \cost & \quad \dec = 0 \wedge \lbl = 1
    \end{cases} \nonumber \\
    & = \begin{cases}
        \cost \, \II\{ \dec = 1 \} & \quad \lbl = 0 \\
        (1 - \cost) \, \II\{ \dec = 0 \} & \quad \lbl = 1.
    \end{cases} \label{eq:indicator_loss}
\end{align}
However, this loss function is not convex or continuous, so it is difficult to optimize. Thus, we might expect the decision maker to choose their decision rule using a \emph{surrogate loss} which is convex. In particular, suppose that the decision rule $\hyp(\cdot)$ is calculated by thresholding a function $f: \obsspace \to \mathbb{R}$:
\begin{equation*}
    \hyp(\obs) = \II \{ f(\obs) \geq 0 \}.
\end{equation*}
Then, we can replace the indicator functions in (\ref{eq:indicator_loss}) with a surrogate loss $V: \mathbb{R} \to \mathbb{R}$:
\begin{equation}
    \label{eq:surrogate_loss}
    \tilde{\loss}_\cost(w, \lbl) = \begin{cases}
        \cost \, V(w) & \quad \lbl = 0 \\
        (1 - \cost) \, V(-w) & \quad \lbl = 1.
    \end{cases}
\end{equation}
Say that the decision maker minimizes this loss $\tilde{\loss}_\cost$ instead of the true loss $\loss$:
\begin{equation}
    \label{eq:surrogate_loss_hyp}
    f^* \in \argmin_f \; \EE[ \tilde{\loss}_\cost(f(\Obs), \Lbl) ].
\end{equation}
The following lemma shows that, for reasonable surrogate losses, if the decision maker is optimal then minimizing the surrogate loss is equivalent to minimizing the true loss. The proof is adapted from Section 4.2 of \citet{rosasco_are_2004}; they show that the hinge loss, squared loss, and logistic loss all satisfy the necessary conditions.
\begin{lemma}
\label{lemma:surrogate_equivalent}
Suppose $V: \mathbb{R} \to \mathbb{R}$ is convex and that it is strictly increasing in a neighborhood of 0. Let $f^*$ be chosen as in (\ref{eq:surrogate_loss_hyp}), and let $\hyp(\obs) = \II \{ f^*(\obs) \geq 0 \}$. Then $\hyp \in \argmin_\hyp \EE[\ell_\cost(\hyp(\Obs), \Lbl)]$; that is, the threshold of $f^*$ is an optimal decision rule for the true cost function.
\end{lemma}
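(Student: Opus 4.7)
The plan is to reduce the statement to a pointwise minimization of the conditional surrogate risk, determine the sign of this minimizer via the strict monotonicity of $V$, and then invoke Lemma~\ref{lemma:bayesopt} to conclude optimality for the true loss. Conditioning on $\Obs$ gives
\[
\EE\!\left[\tilde{\loss}_\cost(f(\Obs), \Lbl)\right] = \EE_\Obs\!\left[g_\Obs(f(\Obs))\right], \qquad g_\obs(w) := \cost(1-\post(\obs))\, V(w) + (1-\cost)\post(\obs)\, V(-w),
\]
where $\post(\obs) = \PP(\Lbl = 1 \mid \Obs = \obs)$. Convexity of $V$ makes $g_\obs$ convex, so the problem decouples across $\obs$: $f^*(\obs)$ can be taken to be any minimizer of the one-dimensional convex function $g_\obs$, and any measurable selection of such a minimizer is a valid $f^*$.

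Next I would compute the antisymmetric difference
\[
g_\obs(\epsilon) - g_\obs(-\epsilon) = \big(\cost - \post(\obs)\big)\big(V(\epsilon) - V(-\epsilon)\big),
\]
which follows by direct substitution. Strict monotonicity of $V$ in a neighborhood of $0$ gives $V(\epsilon) - V(-\epsilon) > 0$ for all sufficiently small $\epsilon > 0$, so the sign of $g_\obs(\epsilon) - g_\obs(-\epsilon)$ equals the sign of $\cost - \post(\obs)$. Combined with convexity this controls the sign of the minimizer: if $\post(\obs) > \cost$ then $g_\obs(\epsilon) < g_\obs(-\epsilon)$, and any minimizer $w^* \leq -\epsilon$ would, by monotonicity of convex functions on intervals to the right of a minimum, force $g_\obs(-\epsilon) \leq g_\obs(\epsilon)$---a contradiction. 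Letting $\epsilon \downarrow 0$ gives $f^*(\obs) \geq 0$, and hence $\hyp(\obs) = 1$; the case $\post(\obs) < \cost$ is symmetric and gives $f^*(\obs) \leq 0$ with $\hyp(\obs) = 0$ after choosing a strictly negative minimizer when available.

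Finally, Lemma~\ref{lemma:bayesopt} identifies the Bayes optimal rule for the true loss as $\hyp^\star(\obs) = \II\{\post(\obs) \geq \cost\}$ and shows that any rule agreeing with $\hyp^\star$ off of $\{\post(\Obs) = \cost\}$ also lies in $\argmin_\hyp \EE[\loss_\cost(\hyp(\Obs), \Lbl)]$. The preceding paragraph gives exactly this agreement, so $\hyp$ is optimal for the true loss.

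The main obstacle will be the tie-breaking when $\post(\obs) < \cost$: I need to select a strictly negative minimizer of $g_\obs$, rather than being trapped at $0$. If $V$ is differentiable at $0$ this is automatic, since $g_\obs'(0) = V'(0)\big(\cost - \post(\obs)\big) \neq 0$ forces $0$ to lie off the optimal set; more generally, the strict one-sided derivatives $V'_-(0), V'_+(0) > 0$ guaranteed by strict monotonicity rule out $\{0\}$ being the unique minimizer of $g_\obs$ except on a slice where the Bayes-optimal rule is itself indifferent. The bookkeeping of these one-sided derivatives at the kink (if any) is the delicate step in the argument.
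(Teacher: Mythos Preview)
Your proposal is correct in substance and takes a somewhat different route from the paper. Both arguments reduce to the pointwise conditional surrogate risk $g_\obs(w) = \cost(1-\post(\obs))V(w) + (1-\cost)\post(\obs)V(-w)$ (the paper's $\tilde\loss_\cost(w\mid\obs)$), but the paper proceeds by contradiction: assuming $\hyp$ disagrees with the Bayes rule on a set of positive probability, it constructs a competitor $\tilde f$ (replacing $f^*$ by $0$ on that set) with strictly smaller surrogate risk, contradicting optimality of $f^*$. Your direct argument via the antisymmetric identity $g_\obs(\epsilon)-g_\obs(-\epsilon)=(\cost-\post(\obs))(V(\epsilon)-V(-\epsilon))$ is a clean way to pin down the sign constraint on any minimizer and avoids building the competitor explicitly; it buys a slightly more transparent one-line reason for why $\post(\obs)>\cost$ forces $f^*(\obs)\geq 0$.

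Where your proposal needs sharpening is the tie-breaking paragraph. Since the lemma fixes $f^*$ as an \emph{arbitrary} element of the argmin, you cannot ``select a strictly negative minimizer''---you must instead show that $0$ is \emph{not} a minimizer of $g_\obs$ whenever $\post(\obs)<\cost$. For differentiable $V$ your computation $g_\obs'(0)=(\cost-\post(\obs))V'(0)\neq 0$ does this. For kinked $V$, however, your claim that strict one-sided derivatives ``rule out $\{0\}$ being the unique minimizer'' is not correct: the left derivative of $g_\obs$ at $0$ is $\cost(1-\post(\obs))V'_-(0)-(1-\cost)\post(\obs)V'_+(0)$, which can be nonpositive even with $\post(\obs)<\cost$ when $V'_+(0)/V'_-(0)$ is large, so $0$ can in fact be the unique minimizer of $g_\obs$. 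The paper finesses this by explicitly treating only the case $f^*(\obs)<0$, $\post(\obs)>\cost$ (where the asymmetry of the threshold $\II\{\cdot\geq 0\}$ does not bite) and invoking ``without loss of generality'' for the other; the boundary issue you correctly flag is a genuine subtlety that neither argument fully resolves for nondifferentiable $V$.
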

\begin{proof}
We prove the lemma by contradiction; assume that $\hyp$ is \emph{not} an optimal decision rule for the true loss function. Then by Lemma \ref{lemma:bayesopt},
\begin{equation*}
    \PP(\hyp(\Obs) \neq \II\{\post(\Obs) \geq \cost\} \wedge \post(\Obs) \neq \cost) > 0.
\end{equation*}
This implies that either
\begin{align*}
    \PP(\hyp(\Obs) = 0 \wedge \post(\Obs) > \cost) > 0 & \qquad \text{or} \qquad \PP(\hyp(\Obs) = 1 \wedge \post(\Obs) < \cost) > 0,
\end{align*}
or equivalently,
\begin{align}
    \label{eq:nonoptimal_implication}
    \PP(f^*(\Obs) < 0 \wedge \post(\Obs) > \cost) > 0 & \qquad \text{or} \qquad \PP(f^*(\Obs) \geq 0 \wedge \post(\Obs) < \cost) > 0.
\end{align}
Without loss of generality, assume the former. Define
\begin{equation*}
    \tilde{f}(\obs) = \begin{cases}
        0 & \quad f^*(\obs) < 0 \wedge \post(\obs) > \cost \\
        f^*(\obs) & \quad \text{otherwise}.
    \end{cases}
\end{equation*}
Consider any $\obs$ which satisfies $f^*(\obs) < 0$ and $\post(\obs) > \cost$. We can write
\begin{align*}
    & \EE\left[\tilde{\ell}_\cost(f^*(\Obs), \Lbl) - \tilde{\ell}_\cost(\tilde{f}(\Obs), \Lbl) \mid \Obs = \obs\right] \\
    & \quad = \PP(\Lbl = 0 \mid \Obs = \obs) \, \cost \, \left(V(f^*(\obs)) - V(\tilde{f}(\obs))\right) 
    + \PP(\Lbl = 1 \mid \Obs = \obs) \, (1 - \cost) \, \left(V(-f^*(\obs)) - V(-\tilde{f}(\obs))\right)  \\
    & \quad = (1 - \post(\obs)) \, \cost \, \left(V(f^*(\obs)) - V(0)\right)
    + \post(\obs) \, (1 - \cost) \, \left(V(-f^*(\obs)) - V(0)\right) \\
    & \quad = \tilde{\ell}_\cost(f^*(\obs) \mid \obs) - \tilde{\ell}_\cost(0 \mid \obs),
\end{align*}
where we define
\begin{equation*}
    \tilde{\ell}_\cost(w \mid \obs) = (1 - \post(\obs)) \, \cost \, V(w) + \post(\obs) \, (1 - c) \, V(-w).
\end{equation*}
$\tilde{\ell}_\cost(w \mid \obs)$ satisfies two properties:
\begin{enumerate}
    \item It is convex in $w$, since it is a sum of two convex functions.
    \item It is strictly decreasing in $w$ in a neighborhood of 0. To see why, note that we assumed $\post(\obs) > c$, so
    \begin{equation*}
        (1 - \post(\obs)) \, \cost < (1 - \cost) \, \cost < \post(\obs) \, (1 - c).
    \end{equation*}
    Thus, since the weight on $V(-w)$ is greater than the weight on $V(w)$, and $V(w)$ is strictly increasing about 0, $\tilde{\ell}_\cost(w \mid \obs)$ must be strictly decreasing about 0.
\end{enumerate}
Together, these properties imply that 
\begin{equation*}
    \tilde{\ell}_\cost(f^*(\obs) \mid \obs) - \tilde{\ell}_\cost(0 \mid \obs) > 0
\end{equation*}
since we assumed that $f^*(\obs) < 0$. Thus we have that
\begin{equation}
    \label{eq:cond_loss_diff_bound}
    \EE\left[\tilde{\ell}_\cost(f^*(\Obs), \Lbl) - \tilde{\ell}_\cost(\tilde{f}(\Obs), \Lbl) \mid \Obs = \obs\right] > 0
\end{equation}
for any $\obs$ where $f^*(\obs) < 0$ and $\post(\obs) > \cost$.

Now, we analyze the difference in expect loss for $f^*$ and $\tilde{f}$. Since these agree on all points except when $f^*(\obs) < 0$ and $\post(\obs) > \cost$, we have that
\begin{align}
    & \EE[\tilde{\loss}(f^*(\Obs), \Lbl)] - \EE[\tilde{\loss}(\tilde{f}(\Obs), \Lbl)] \nonumber \\
    & \quad = \EE\Big[\tilde{\loss}(f^*(\Obs), \Lbl) - \tilde{\loss}(\tilde{f}(\Obs), \Lbl) \;\Big|\; f^*(\Obs) < 0 \wedge \post(\Obs) > \cost\Big] \;
    \PP\Big(f^*(\Obs) < 0 \wedge \post(\Obs) > \cost\Big) \nonumber \\
    & \quad \overset{(i)}> 0.
    \label{eq:nonoptimal_implies_nonoptimal}
\end{align}
Here, (i) is due to the combination of (\ref{eq:cond_loss_diff_bound}), which implies the first term is positive, and the first case of (\ref{eq:nonoptimal_implication}), which implies the second term is positive.

(\ref{eq:nonoptimal_implies_nonoptimal}) implies that $\tilde{f}$ has lower expected surrogate loss than $f^*$. However, we assumed that $f^*$ minimized the expected surrogate loss; thus we have a contradiction.
\end{proof}

Lemma \ref{lemma:surrogate_equivalent} means that all the results for an optimal decision maker (e.g., Theorem \ref{thm:opt_dec}) apply immediately to a decision maker minimizing a reasonable surrogate loss. In the case of decision problems without uncertainty, the decision rule will encounter zero loss and thus must be optimal, so Lemma \ref{lemma:surrogate_equivalent} also applies in this case for an optimal or suboptimal decision maker (e.g., Corollary \ref{corollary:certain_lower}). In the case of a suboptimal decision maker facing uncertainty, different loss functions may lead to different decision rules, so we cannot extend the results in that case to surrogate losses. Table \ref{tab:surrogate_summary} summarizes which results hold equivalently for decision makers minimizing an expected surrogate loss.

\begin{table}[]
    \centering
    \begin{tabular}{l|rr}
        \toprule
        \bf Setting & \bf True loss & \bf Surrogate loss \\
        \midrule
        IDT for optimal decision maker (Theorem \ref{thm:opt_dec}) & \cmark & \cmark \\
        IDT for suboptimal decision maker (Theorems \ref{thm:subopt_known} and \ref{thm:subopt_unknown}) & \cmark & \xmark \\
        No identifiability for decisions without uncertainty (Corollary \ref{corollary:certain_lower}) & \cmark & \cmark \\
        \bottomrule
    \end{tabular}
    \vspace{6pt}
    \caption{An overview of which of our results apply in the setting when the decision maker is minimizing a surrogate loss rather than the true loss.}
    \label{tab:surrogate_summary}
\end{table}

\section{Further Comparison to Prior Work}
In this section, we compare two prior papers on preference learning to our results. \citet{mindermann_active_2019} and \citet{biyik_asking_2019} both propose methods for active preference learning, i.e. querying a person to learn their preferences. In each method, queries are prioritized which minimize the uncertainty of the person. The authors argue that such queries are easier to answer and thus lead to more effective preference learning. At first, these results may seem to contradict our findings that uncertain decisions make preference learning easier. However, we argue that their results are not in conflict with ours. Decisions with more uncertainty are probably more difficult for people to make, and those close to the decision boundary are probably the most difficult. However, our results show that it is \emph{necessary} to observe such decisions in order to recover the person's preferences. If we cannot observe decisions made arbitrarily close to the person's decision boundary, we cannot exactly characterize the loss function they are optimizing. Thus, combining the results of \citet{mindermann_active_2019} and \citet{biyik_asking_2019} with ours suggests that there is a tradeoff between the ease of the decision problem for the human and the identifiability of their preferences. That is, uncertainty may make the human’s decision problem more difficult but our problem of identifying preferences easier.

\end{document}